\documentclass[twoside,11pt]{article}

\usepackage{jmlr2e}

\usepackage{mathtools}

\usepackage{lastpage}
\ShortHeadings{A Well-Tempered Landscape for Non-convex Robust Subspace Recovery}{Maunu, Zhang and Lerman} 

\usepackage{multirow}

\usepackage{amsmath}
\usepackage{amssymb}

\raggedbottom
\usepackage{hyperref}
\usepackage{cases}
\usepackage{setspace}
\usepackage{breakcites}
\usepackage{color}

\usepackage{algorithm,algorithmicx,algpseudocode}
\usepackage{xargs}

\newcommand{\reals}{\mathbb R}
\newcommand{\nats}{\mathbb N}
\newcommand{\E}{\mathbb E}

\def\bx{\boldsymbol{x}}
\def\by{\boldsymbol{y}}
\def\bY{\boldsymbol{Y}}
\def\bX{\boldsymbol{X}}

\def\bA{\boldsymbol{A}}
\def\bx{\boldsymbol{x}}
\def\bv{\boldsymbol{v}}

\def\bu{\boldsymbol{u}}
\def\bw{\boldsymbol{w}}

\def\bU{\boldsymbol{U}}
\def\bV{\boldsymbol{V}}
\def\bW{\boldsymbol{W}}

\def\bQ{\boldsymbol{Q}}
\def\bB{\boldsymbol{B}}
\def\bP{\boldsymbol{P}}

\def\bepsilon{\boldsymbol{\epsilon}}

\def\bQ{\boldsymbol{Q}}

\def\bDelta{\boldsymbol{\Delta}}

\def\bbeta{\boldsymbol{\beta}}
\def\bSigma{\boldsymbol{\Sigma}}
\def\bLambda{\boldsymbol{\Lambda}}
\def\bTheta{\boldsymbol{\Theta}}
\def\bR{\boldsymbol{R}}
\def\bI{\boldsymbol{I}}

\def\cX{\mathcal{X}}

\def\cF{\mathcal{F}}
\def\cC{\mathcal{C}}

\def\cS{\mathcal{S}}

\def\cP{\mathcal{P}}
\def\cA{\mathcal{A}}
\def\cG{\mathcal{G}}
\def\cN{\mathcal{N}}

\def\cI{\mathcal{I}}

\def\bzero{\boldsymbol{0}}

\def\di{\mathrm{d} }

\DeclareMathOperator{\dist}{\mathrm{dist}}
\DeclareMathOperator{\SNR}{\mathrm{SNR}}

\DeclareMathOperator{\Sp}{\mathrm{Sp}}

\DeclareMathOperator{\Var}{\mathrm{Var}}
\DeclareMathOperator{\diag}{\mathrm{diag}}

\begin{document}

\title{A Well-Tempered Landscape for Non-convex Robust Subspace Recovery}

 \author{\name Tyler Maunu \email maunut@mit.edu  \\
 \addr  Department of Mathematics \\ Massachusetts Institute of Technology \\ Cambridge, MA 02139 
 \AND
 \name Teng Zhang  \email teng.zhang@ucf.edu  \\
 \addr Department of Mathematics \\ University of Central Florida \\ Orlando, FL 32816
 \AND
 \name Gilad Lerman  \email lerman@umn.edu  \\
 \addr School of Mathematics \\ University of Minnesota \\ Minneapolis, MN 55455
 }


\maketitle

\begin{abstract}%
    We present a mathematical analysis of a non-convex energy landscape for robust subspace recovery. We prove that an underlying subspace is the only stationary point and local minimizer in a specified neighborhood under a deterministic condition on a dataset. If the deterministic condition is satisfied, we further show that a geodesic gradient descent method over the Grassmannian manifold can exactly recover the underlying subspace when the method is properly initialized. Proper initialization by principal component analysis is guaranteed with a simple deterministic condition. Under slightly stronger assumptions, the gradient descent method with a piecewise constant step-size scheme achieves linear convergence. The practicality of the deterministic condition is demonstrated on some statistical models of data, and the method achieves almost state-of-the-art recovery guarantees on the Haystack Model for different regimes of sample size and ambient dimension.  In particular, when the ambient dimension is fixed and the sample size is large enough, we show that our gradient method can exactly recover the underlying subspace for any fixed fraction of outliers (less than 1). 
\end{abstract}

\begin{keywords}
robust subspace recovery, non-convex optimization, dimension reduction, optimization on the Grassmannian
\end{keywords}

\section{Introduction}

Robust subspace recovery (RSR) involves estimating a low-dimensional linear subspace in a corrupted dataset. It assumes that a portion of the given dataset lies close to or on a subspace, which we will refer to as the ``underlying subspace". The other portion of the dataset is assumed to be corrupted and may lie far from the underlying subspace.  In this regime, noise and corruption are separate entities: corruption refers to large and potentially arbitrary changes to a data point, while noise is a small perturbation of a data point.

A basic method for modeling data by a low-dimensional subspace is Principal Component Analysis (PCA)~\citep{jolliffe2002principal}. PCA is popular for both reducing noise and capturing low-dimensional structure within data. However, PCA is notoriously sensitive to corrupted data and does not perform well in many regimes of the RSR problem.

Many strategies have been proposed for the RSR problem, which are reviewed in~\citet{lerman2018overview}. However, despite nice progress, many of the proposed methods have inherent issues. Perhaps the largest flaw in many existing methods is computational time: many methods require calculation of a full covariance matrix or matrix inversion. If we have a dataset of $N$ points in $\reals^D$, these calculations typically have complexity $O(ND^2)$ or $O(D^3)$.
On the other hand, the PCA $d$-dimensional subspace can be calculated with complexity $O(NDd)$. Some recent proposals for RSR have complexity that scales like $O(NDd)$, but they either do not have satisfying theoretical guarantees or have extra user specified parameters.
Ideally, we would like algorithms that run in $O(NDd)$ because the set of $d$-dimensional subspaces has dimension $O(Dd)$.

The key point of our work is the development of a computationally efficient and provably accurate method for RSR. We desire a method that has complexity $O(NDd)$ and that does not sacrifice theoretical guarantees. The method we propose involves minimization of the robust least absolute deviations energy function. Minimizing this function involves solving a non-convex optimization problem that is NP-hard.

Even though the problem is NP-hard in general, we derive conditions that ensure the energy landscape is well-behaved in a substantial neighborhood of an underlying subspace. These conditions also ensure that a geodesic gradient descent method can locally recover an underlying subspace, and the convergence of this method is linear under some slightly stronger assumptions. This linear convergence implies that this $O(NDd)$ algorithm is very efficient. Furthermore, we give the most complete discussion of recovery under various regimes of corruption and relatively broad statistical models.
In particular, we show that our method is robust to very high percentages of corruption under special generative models.
To our knowledge, we give the strongest guarantees on a non-convex method for RSR to date and even obtain stronger results than some convex methods.

In the rest of this section, we give some necessary background for our method and an overview of this work. First, in Section~\ref{subsec:essback}, we give some essential background to understand our approach to RSR. Then, Section~\ref{subsec:contribution} outlines the main contributions of our work. Finally, Section~\ref{subsec:organization} summarizes the structure of the paper, and Section~\ref{subsec:notation} discusses necessary notation.

\subsection{Essential Background}
\label{subsec:essback}

Here, we briefly summarize the necessary background to understand the primary contribution of this work. First, for the rest of this paper, we assume a linear subspace setting, which means that we only consider underlying subspaces that are linear. We leave the case of affine subspaces to future work. For simplicity of discussion, we advocate centering by the geometric median for real data when the center is not known.

The essential problem of RSR is an optimization problem over the Grassmannian, which is the set of linear subspaces of a fixed dimension. The Grassmannian is a non-convex set, which makes optimization over it hard. Frequently, this leads to NP-hard or SSE-hard formulations~\citep{hardt2013algorithms,clarkson2015input}. In this paper, we denote by $G(D,d)$ the Grassmannian of linear $d$-dimensional subspaces in $\reals^D$ and refer to such subspaces as $d$-subspaces.

It is illuminating to first outline the PCA subspace problem, since it has a similar form to our methodology. The basic formulation for the PCA subspace problem can be cast as an optimization over $G(D,d)$. For a dataset $\cX = \{ \bx_1,\dots,\bx_N\} \subset \reals^D$ centered at the origin, the PCA $d$-subspace is the solution of the least squares problem
\begin{equation}
    \min_{L \in G(D,d)} \sum_{i=1}^N \|\bx_i - \bP_L \bx_i\|_2^2,
    \label{eq:pca}
\end{equation}
where $\bP_L$ denotes the orthogonal projection matrix onto the subspace $L$.
As has been noted in many past works, the least squares formulation is sensitive to corrupted data.

Some have attempted to make the PCA formulation robust by considering least absolute deviations:
\begin{equation}\label{eq:robopt}
    \min_{L \in G(D,d)} \sum_{i=1}^N \| \bx_i - \bP_L \bx_i\|_2.
\end{equation}
This optimization can be thought of as estimating a geometric median subspace. Some have tried to directly optimize this problem~\citep{ding2006r1,lerman2017fast}, while others have tried to solve convex relaxations of it~\citep{mccoy2011two,xu2012robust,zhang2014novel,lerman2015robust}.

The goal of this work is to directly analyze the energy landscape of~\eqref{eq:robopt} and guarantee that a non-convex gradient descent method  for this energy minimization can recover an underlying subspace. This gradient descent method leads to huge gains in speed over previous convex methods but does not sacrifice accuracy in subspace recovery.

In the RSR problem setup, it is common to refer to the uncorrupted portion of the dataset as inliers, which lie on or near the underlying subspace. The case where the inliers lie on the underlying subspace is referred to as the noiseless RSR setting, while the case where the inliers lie near the underlying subspace is referred to as the noisy RSR setting. The corrupted points in the dataset are referred to as outliers, which are assumed to lie somewhere in the ambient space.
Exact recovery in the noiseless setting refers to when a method outputs the underlying subspace exactly. Near recovery in the noisy setting refers to when a method outputs a good approximation of the underlying subspace, where the goodness of approximation depends on the noise level.

\subsection{Contribution of This Work}
\label{subsec:contribution}

As mentioned, the goal of this work is to recover an underlying subspace by directly optimizing the non-convex function in~\eqref{eq:robopt}. To motivate why such a procedure might work, Figure~\ref{fig:landscape1} demonstrates the landscape of the energy function in \eqref{eq:robopt} for two simulated datasets. The novelty of this paper consists of the following observation in these and certain other datasets: despite non-convexity, the energy landscape appears to exhibit basins of attraction around the underlying subspaces. In other words, the energy function decreases over $G(D,d)$ in the direction of $L_*$ within some neighborhood. Indeed, it appears that direct minimization of the energy in a local neighborhood would yield exact recovery of the underlying subspace. It is important to emphasize that this phenomenon is inherently local. Looking at the energy plots in Figure~\ref{fig:landscape1}, it appears that there may be other local minimizers far from the underlying subspace, and so proper initialization is quite important.

\begin{figure}[!t]
\centering
\includegraphics[width = .45\textwidth,trim=40 150 40 170,clip]{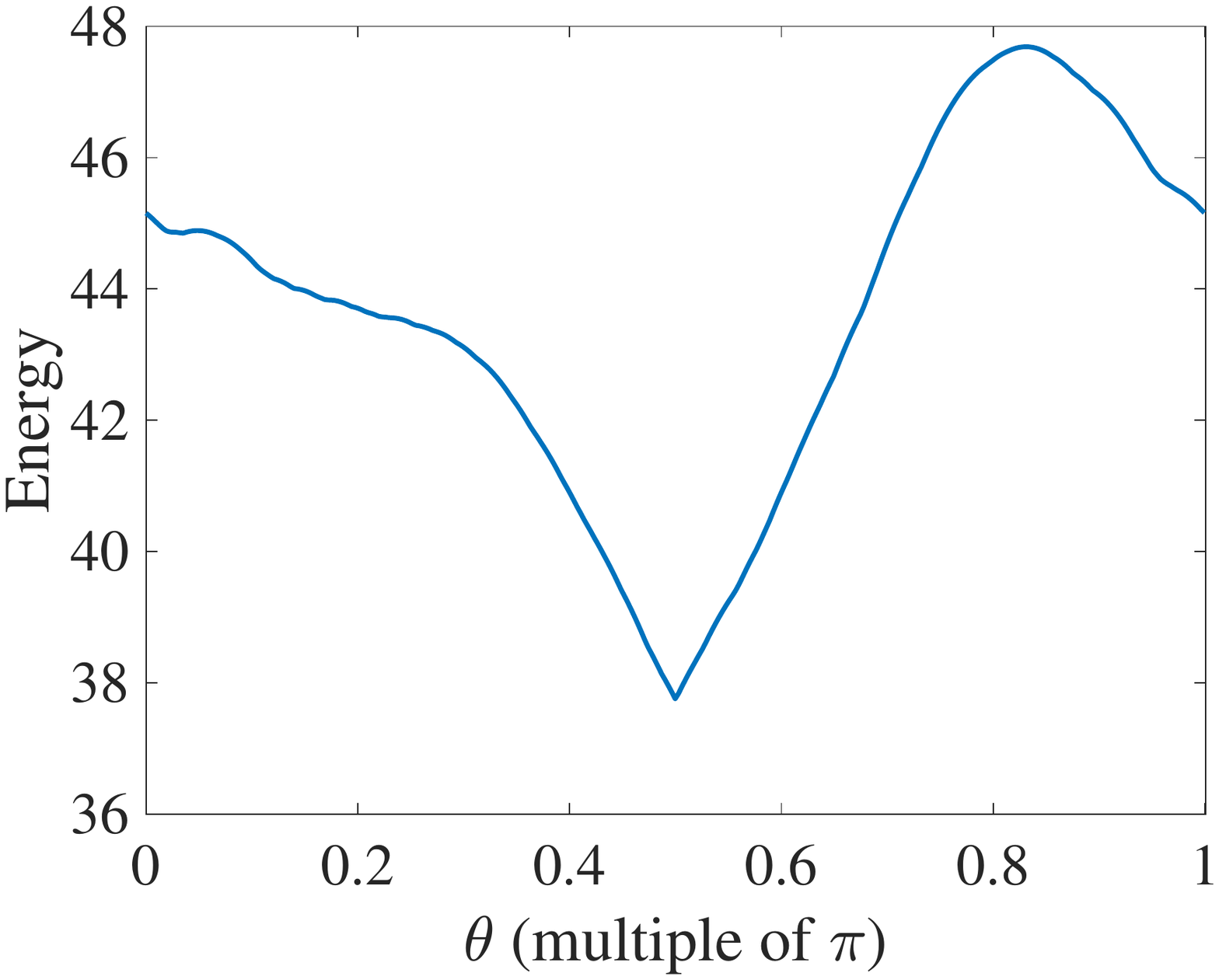}
\includegraphics[width = .45\textwidth,trim=40 150 40 170,clip]{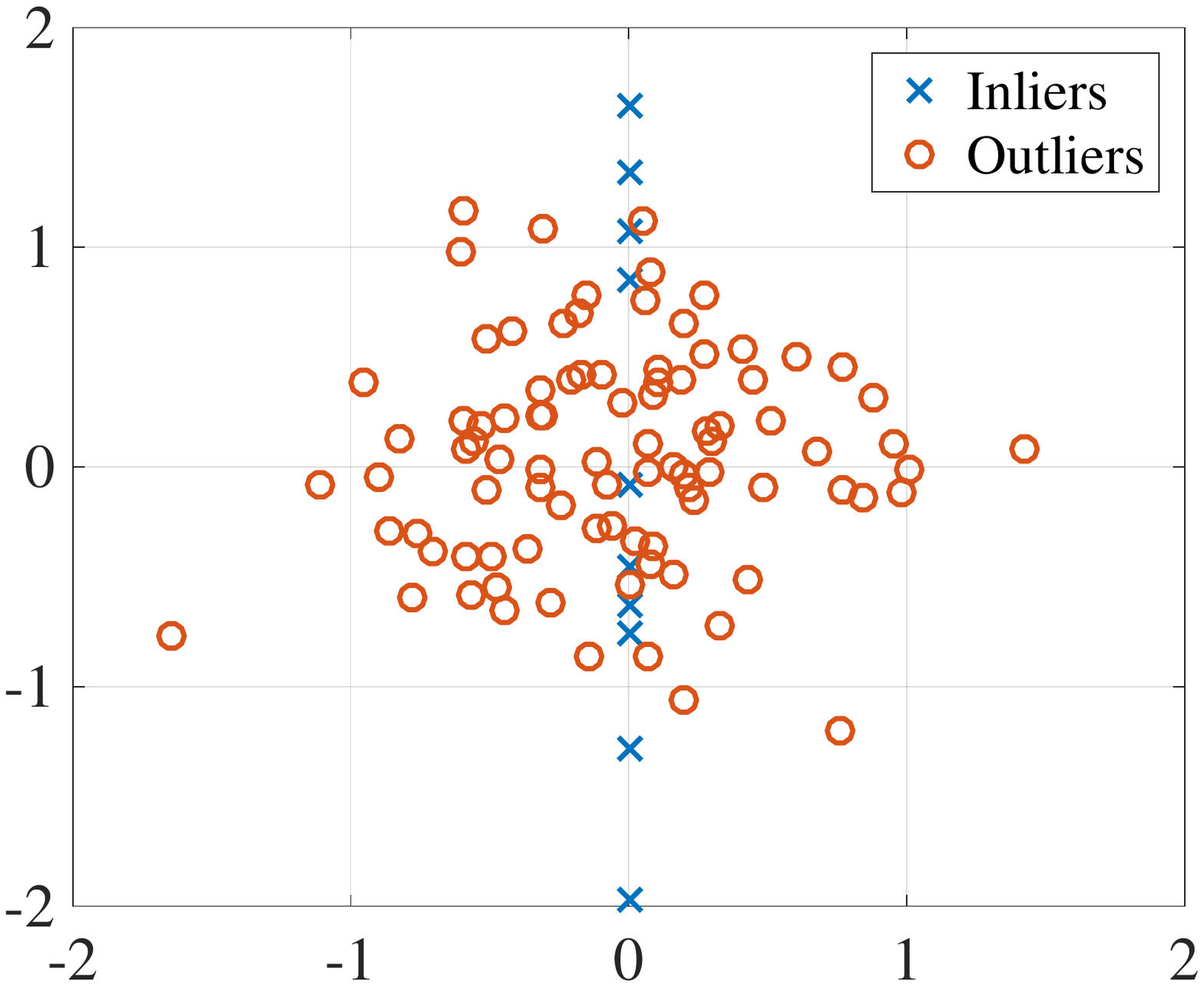}
\includegraphics[width = .45\textwidth,trim=40 150 40 170,clip]{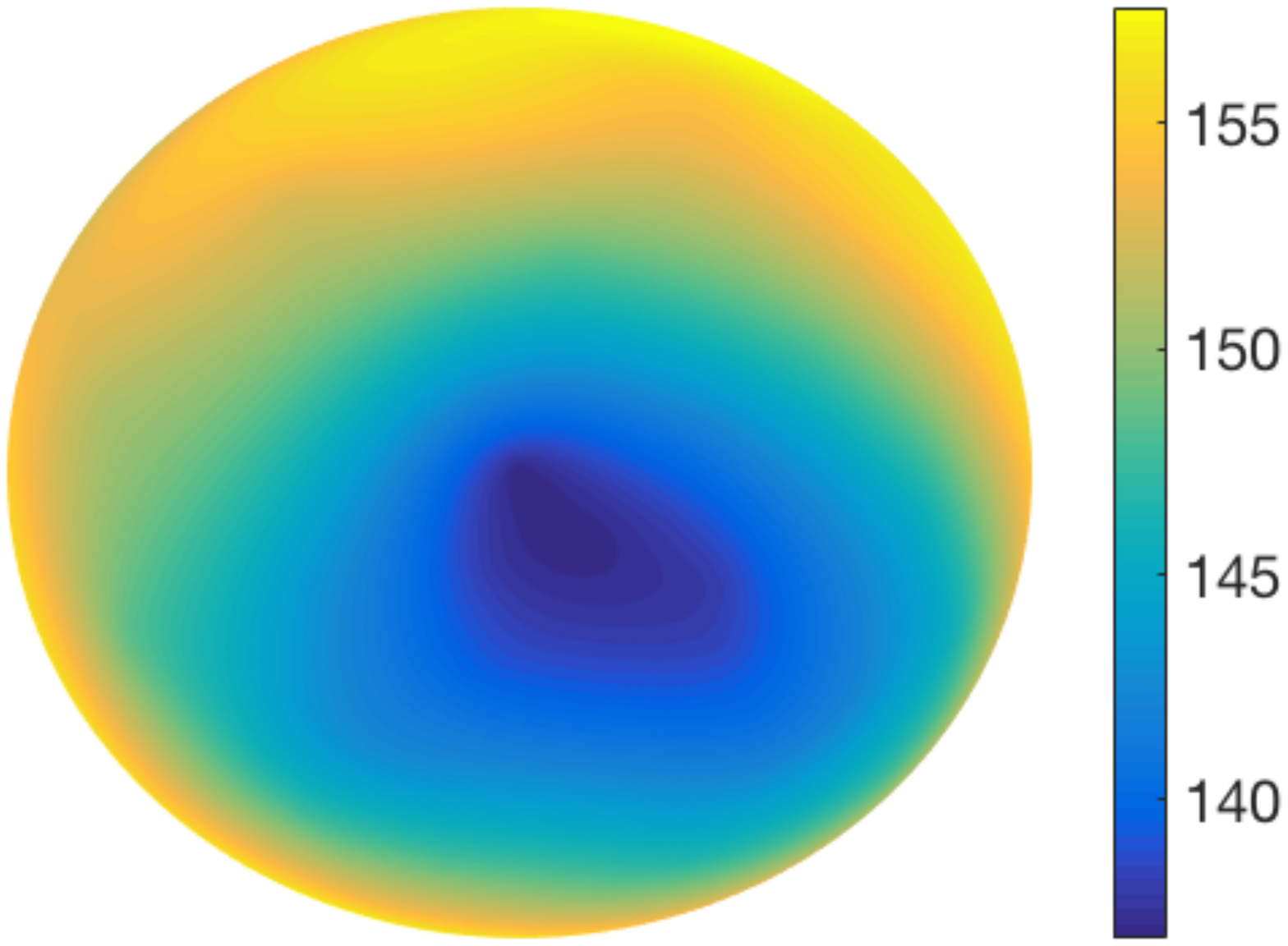}
\includegraphics[width = .45\textwidth,trim=40 150 40 170,clip]{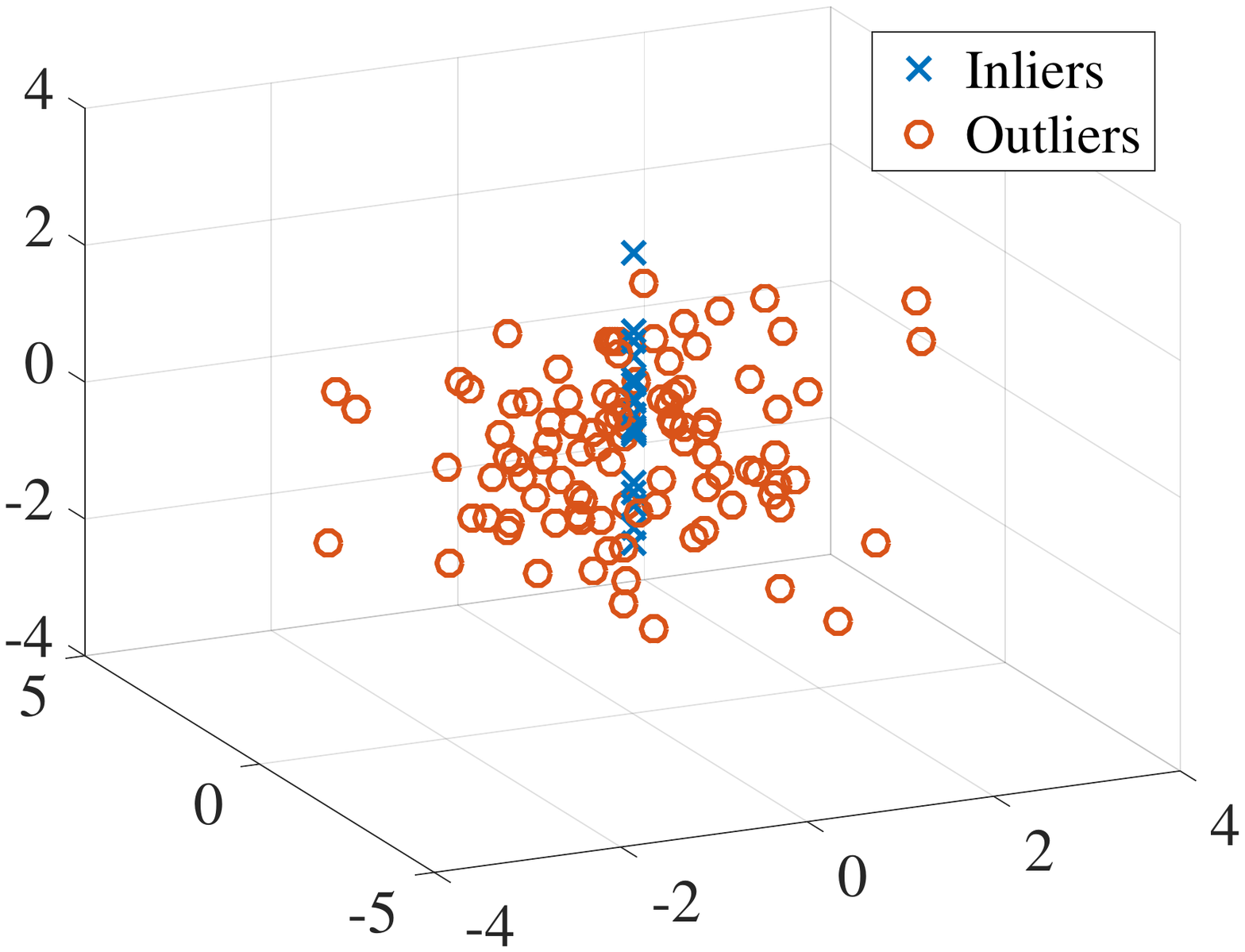}
\caption{Demonstration of the energy landscape of~\eqref{eq:robopt} over $G(2,1)$ and $G(3,1)$ with simulated Gaussian data. The simulated datasets are demonstrated on the right and the corresponding energy landscape is depicted on the left.
Top: In $\reals^2$, $90$ outliers are i.i.d.~$\cN(\bzero,\bI/2)$ and $10$ inliers are i.i.d.~$\cN(0,1)$ along the $y$-axis. $G(2,1)$ is identified with a semicircle and parameterized by angle. The energy is depicted as a function of the parametrizing angle. Its global minimum is at $\pi/2$, which corresponds to the underlying line at the $y$-axis of  $\reals^2$. Bottom: In $\reals^3$, $100$ outliers are i.i.d.~$\cN(\bzero,4\bI_3/3)$ and 20 inliers are i.i.d.~$\cN(0,1)$ along the $z$-axis. $G(3,1)$ is identified with the top hemisphere which is flattened to a circle. The energy function is depicted by a heat map on that circle. The global minimum is at the center of the circle, which corresponds to the underlying line at the $z$-axis of $\reals^3$.}\label{fig:landscape1}
\end{figure}

Our key contributions follow:
\begin{itemize}
\item  We show that, under deterministic conditions, the robust energy landscape of~\eqref{eq:robopt} exhibits basins of attraction around an underlying subspace as seen in Figure~\ref{fig:landscape1}. Theorems~\ref{thm:landscape} and~\ref{thm:landscapenoise} formulate this result for the energy landscape of the noiseless and noisy RSR settings respectively.
\item We propose a geodesic gradient descent algorithm in~Section~\ref{sec:grad}. Theorems~\ref{thm:conv} and~\ref{thm:linconv} show that this algorithm exactly recovers the underlying subspace under the conditions of Theorem~\ref{thm:landscape} with proper initialization. Theorem~\ref{thm:conv} guarantees sublinear convergence of this algorithm to the underlying subspace. With some additional, slightly stronger assumptions, Theorem~\ref{thm:linconv} guarantees linear convergence to the underlying subspace using a piecewise constant step-size scheme. These results can be generalized to near recovery in the noisy RSR setting as well (see Remark~\ref{remark:extendnoise}).
\item Lemma~\ref{lemma:pcainit} guarantees that we can initialize in the correct local neighborhood using PCA under a similar deterministic condition. This yields a complete guarantee for geodesic gradient descent with PCA initialization.
\item The deterministic guarantees are shown to hold for a variety of statistical models of data. In particular, we achieve competitive guarantees for recovery under the special Haystack Model. More specifically, we consider three different regimes of sample size. The first regime, $N = O(D)$, describes the scenario of a relatively small sample. The smallest ratio of inliers to outliers where exact recovery is still possible for this model among all algorithms is $d/(D-d)$, although this has only been established for algorithms of complexity $O(N D^2)$ at best~\citep{hardt2013algorithms,zhang2016robust}. We guarantee instead exact recovery for our $O(NDd)$ algorithm in this regime under a larger ratio of inliers to outliers, namely of order $O(d/\sqrt{D})$ (see Corollary \ref{cor:genhaystack}). In the regime of larger samples, when $N = O(D^2)$ , exact recovery with the even smaller ratio $O(d/\sqrt{D(D-d)})$ was previously obtained for a convex algorithm~\citep{zhang2014novel}. Theorem~\ref{thm:haystack} implies recovery with this same small fraction but in the regime  $N = O(d(D-d)^2\log(D))$. Except for this convex algorithm and our proposed algorithm, we are unaware of any other method that is guaranteed to obtain exact recovery under such a small fraction in the regime $N = O(d(D-d)^2\log(D))$. Beyond this, in the regime of very large samples, we show that our method can exactly recover an underlying subspace with any fixed fraction of outliers, where $N$ is at least some polynomial order of $D$ and also depends on the ratio of inliers to outliers. This is the only efficient RSR method with such a guarantee.
\end{itemize}

We will close this section by briefly commenting on how to read the theoretical results of this paper.
The first three bullets constitute the primary theoretical interest of this work, since they guarantee the usefulness of our non-convex RSR method under some fairly general conditions. In particular, the stability statistic developed in Section~\ref{sec:noiselessstat} lies at the core of much of our analysis, and many of the more complicated results are extensions of this statistic's analysis in Theorem~\ref{thm:landscape}. The discussion of statistical models as outlined in the fourth bullet above is merely included for interpretability of the general conditions we offer. With such models, we develop a heuristic understanding of these conditions, and, in particular, we gain insight into trade-offs between inliers and outliers. These models also allow for easier comparison of the theoretical guarantees of all RSR methods.

\subsection{Paper Organization}
\label{subsec:organization}

First, in~Section~\ref{sec:backandprevwork}, we review previous work on the RSR problem. In~Section~\ref{sec:theory}, we describe deterministic conditions that ensure that the energy landscape of~\eqref{eq:robopt} behaves nicely around an underlying subspace. Then, in Section~\ref{sec:grad}, we outline a geodesic gradient descent method on the set of subspaces. We show that this method can locally recover an underlying subspace for datasets satisfying the deterministic conditions and that the convergence rate is linear under some slightly stronger assumptions. We obtain exact recovery in the noiseless RSR setting and near recovery in the noisy RSR setting.
Then,~Section~\ref{sec:statmod} shows that the conditions hold for certain statistical models of data. In particular, we obtain almost state-of-the-art results on recovery under the Haystack Model and also consider a range of other models. Next, Section~\ref{sec:stabdemo} gives simulations that agree with the theoretical results of this paper. Finally, Section~\ref{sec:conclusion} concludes this work and discusses possible future directions.

\subsection{Notation}
\label{subsec:notation}

Before presenting the results of this paper, we explain our commonly used notation. The letter $L$ is used to refer to $d$-subspaces. We use bold upper case letters for matrices and bold lower case letters for column vectors. For a matrix $\bA$, $\Sp(\bA)$ is the subspace spanned by the columns of $\bA$, $\sigma_j(\bA)$ denotes the $j$th singular value of $\bA$, and, if $\bA$ is square, then $\lambda_j(\bA)$ denotes its $j$th eigenvalue. The spectral norm of $\bA$ is $\| \bA\|_2$, and the Euclidean 2-norm for vectors is denoted by $\| \cdot \|$. The notation $\widetilde{\bA}$ denotes projection of the columns of $\bA$ to the unit sphere.
For $d \leq D$, the set of semi-orthogonal $D \times d$ matrices is denoted by $O(D,d) = \{\bV \in \reals^{D \times d}: \bV^T \bV = \bI_d\}$. For $\bV \in O(D,d)$, we denote its columns by $\bv_1,\dots,\bv_d$. We recall that $G(D,d)$ denotes the Grassmannian, that is, the set of $d$-dimensional linear subspaces of $\reals^D$. The orthogonal projection matrix onto the subspace $L = \Sp(\bV)$ is denoted by $\bP_{L}$, and we interchangeably use $\bP_{\bV} = \bP_{L}$. The projection onto the orthogonal complement of $L$ is $\bQ_L = \bI - \bP_L$.
We denote the largest principal angle between two subspaces $L_1$ and $L_2$ by $\theta_1(L_1, L_2)$, that is, $\theta_1(L_1, L_2) = \arccos\left( \sigma_d (\bP_{L_1} \bP_{L_2})\right)$.
We say that an event occurs with high probability (w.h.p.) if the probability is bounded below by $1-O(N^a)$, for some absolute constant $a>0$. We say that an event occurs with overwhelming probability (w.o.p.) if the probability is bounded below by $1-O(e^{-aN^b})$, for absolute constants $a,b>0$.
The notation $f(x) \lesssim g(x)$ is used to denote that $f(x) < C g(x)$ for some absolute constant $C$ (and the notation $\gtrsim$ is used in the same way).

\section{Background and Review of Previous Work}
\label{sec:backandprevwork}

In this section, we will review past work on the RSR problem and necessary background concepts for this work. First, Section~\ref{sec:prevwork} discusses past attempts to solve the RSR problem. Then, Section~\ref{sec:prelim} gives the background concepts that are necessary to understand our later results.

\subsection{Review of Previous Work}
\label{sec:prevwork}

The most ubiquitous subspace modeling framework uses principal component analysis~\citep{jolliffe2002principal}.
Its optimization problem, which is formulated in \eqref{eq:pca}, is non-convex since $G(D,d)$ is non-convex.
However, despite its non-convexity, this problem has a direct solution, which is calculated from the singular value decomposition of the data matrix $\bX = [\bx_1,\dots,\bx_N]$.  This problem also has a nice energy landscape. Indeed, if the $d$th singular value of $\bX$ is larger than the $(d+1)$st singular value, then the global minimum is unique, and there are no other local minima; otherwise, if the $d$th singular value is equal to the $(d+1)$st, then all local minima are globally optimal. Saddle points are also guaranteed to be sufficiently far from the global minimum, and they have explicit expressions. We discuss the PCA energy landscape further in Appendix~\ref{app:pca}. These nice properties of the PCA subspace optimization are not shared by the algorithms for RSR that we discuss next.

Examples of works on RSR include~\citet{maronna2005principal,maronna2006robust,ding2006r1,zhang2009median,lerman2011robust,mccoy2011two,xu2012robust,coudron2012sample,hardt2013algorithms,zhang2014novel,goes2014robust,lerman2014lp,lerman2015robust,zhang2016robust,lerman2017fast,cherapanamjeri2017thresholding}.
A comprehensive overview of this topic is given in~\citet{lerman2018overview}.

We note that this problem is distinct from what is typically called robust PCA (RPCA) \citep{candes2011robust,zhou2010stable}, where the corruptions occur element-wise throughout the whole data matrix rather than some samples being wholly corrupted.
Algorithms for RPCA typically do not perform well in the RSR setting, and algorithms for RSR do not perform well in the RPCA setting.

RSR is inherently non-convex due to the non-convexity of $G(D,d)$. Robust versions of the PCA energy may have more complicated landscapes in general.
One way of making PCA robust is to simply project the data to the unit sphere, $S^{D-1}$, before running PCA~\citep{locantore1999robust,maronna2005principal,maronna2006robust}.
This deals with PCA's sensitivity to the scaling of the data and makes it easier to find directions that robustly capture variance. However, it is still not able to deal with correlated outlier directions and does not have good asymptotic guarantees even for simple models~\citep{lerman2017fast}.

As mentioned, another way to make PCA robust is to consider least absolute deviations~\citep{ding2006r1,mccoy2011two,xu2012robust,zhang2014novel,lerman2014lp,lerman2015robust,lerman2017fast}.
The first use of least absolute deviations in subspace modeling was the work on orthogonal regression by~\citet{osborne1985analysis}. This was not extended to general subspace modeling until much later~\citep{watson2001some,ding2006r1}. Previous works have considered convex relaxation of this energy~\citep{mccoy2011two,xu2012robust,zhang2014novel,lerman2015robust}. However, such convex relaxations are generally slow and may not approximate the underlying problem well. Indeed, most either have complexity $O(ND^2)$ or $O(D^3)$.

The works of~\cite{lerman2011robust,lerman2014lp} established under a certain model that an underlying subspace is recoverable by the minimizer of~\eqref{eq:robopt}. However, they did not provide a guaranteed algorithm for minimizing this energy. The estimates of these works do not hold for small sample sizes: they only hold for large $N$.
\citet{lerman2017fast} developed the FMS algorithm, which employs iteratively reweighted least squares to optimize~\eqref{eq:robopt}. However, the FMS algorithm does not have deterministic guarantees of fast convergence or deterministic results on recovery of the underlying subspace.
The FMS algorithm does have theoretical guarantees of approximate recovery for a very special model of data, with relatively large samples.
In contrast, we directly minimize~\eqref{eq:robopt} by gradient descent, and we provide deterministic guarantees of fast convergence and subspace recovery.

Another recent work on RSR was given by~\citet{cherapanamjeri2017thresholding}, where they propose Thresholding based Outlier Robust PCA (TORP). TORP has analysis for arbitrary outliers and noise, as long as the percentage of outliers is known in advance. While the tolerance to very low percentages of arbitrary outliers is not that impressive, the noise analysis is somewhat novel. Under Gaussian noise, the authors are able to show similar sample complexity as that of PCA. One downside of this algorithm is that one must know the percentage of outliers as an input. Further, since the guarantees are only for adversarial models of outliers, there is no discussion of improved estimates when the outliers are not adversarial but instead obey a specific statistical model.

In the existing literature, only a few methods achieve the complexity bound of $O(TNDd)$, where $T$ is iteration count. These include SPCA~\citep{maronna2005principal}, RANSAC and RandomizedFind~\citep{hardt2013algorithms,ariascastro2017ransac}, FMS~\citep{lerman2017fast}, and TORP~\citep{cherapanamjeri2017thresholding}. 
Among these, the algorithms either do not have sufficiently satisfying guarantees for recovery, or they do not have a good bound on $T$, or they require additional parameters. 
SPCA is the fastest out of these algorithms since it has $T=1$. Here, we are slightly abusing the complexities and assuming the cost of running PCA is $O(NDd)$, despite the fact that PCA is also an iterative algorithm. We choose this convention due to the fact that many methods use PCA as a sub-routine.  While SPCA is somewhat robust to arbitrary outliers, it cannot exactly recover subspaces in the presence of outliers. However, SPCA is nice since it is quite general and lacks the specialized assumptions of many methods.
RANSAC requires a user to input specialized parameters, such as the consensus number and a consensus threshold. RANSAC can also only bound $T$ in probability under certain conditions. This also goes for the analysis of RandomizedFind given by~\citet{ariascastro2017ransac}, along with their updated algorithm that has complexity $O(TDd)$. In many cases, though, this $T$ can be very large. For both the RANSAC and RandomizedFind methods, recovery guarantees exist in the noiseless RSR setting under specialized assumptions, but there are no satisfying extensions of either method to noise. 
On the other hand, TORP~\citep{cherapanamjeri2017thresholding} requires a user to input the percentage of outliers that is not known in general. TORP has a guarantee of linear convergence under certain conditions, but, as we mentioned earlier, it does not have satisfying guarantees for subspace recovery.
FMS~\citep{lerman2017fast} only has guarantees for rate of convergence and recovery for very special models of data.

One way to compare the theoretical guarantees of various methods is to assume a statistical model of data and then determine which algorithm performs best in this model.
For example, one common choice of model in past works was the Haystack Model, which can be seen in~\citet{lerman2015robust}.
Another model was to assume spherically symmetric outliers, and inliers spherically symmetric on an underlying subspace~\citep{lerman2014lp,lerman2017fast}.
Others have examined models with arbitrary outliers~\citep{xu2012robust,cherapanamjeri2017thresholding}. In this work, after giving our general theoretical guarantees, we will show how they can be applied to a variety of statistical models of data. 

This paper also fits in to the surge of recent work that has focused on non-convex optimization for many structured data problems~\citep{dauphin2014identifying,hardt2014understanding,jain2014iterative,ge2015escaping,lee2016gradient,arora2015simple,mei2018landscape,ge2016matrix,boumal2016nonconvexphase,sun2015nonconvex,SunQuWright_nonconvex_sphere_2015,ma2018implicit}. Some work has focused on non-convex optimization for robust PCA~\citep{netrapalli2014non,yi2016fast,zhang2017robust}, which is a related but different problem than RSR. Others have attempted to solve non-convex versions of the RSR problem~\citep{lerman2017fast,cherapanamjeri2017thresholding}, but, as we have discussed, these methods each have their own shortcomings.

This work is partially built on optimization on manifolds, and in particular there are important results on optimization over the Grassmannian manifold~\citep{edelman1999geometry,absil2004riemannian}.~\citet{edelman1999geometry} develop gradient descent on the Grassmannian and give formulations for Newton's method and conjugate gradient for the Grassmannian. We discuss optimization on the Grassmannian in more detail in the next section.

Many other recent works have also focused on using optimization on the Grassmannian to solve various problems~\citep{zhang2009median,goes2014robust,thomas2014learning,zhang2016global,ye2016schubert,lim2016statistical}.
The work of~\citet{zhang2016global} examines a rank one geodesic gradient scheme for solving online PCA. Their setting is distinctly different from ours since they attempt to solve the PCA problem rather than RSR. They also only prove recovery of the PCA solution for a specific model of Gaussian noise, and no deterministic condition for global recovery is given.
Further, while we assume centered data in this paper, \cite{thomas2014learning} and~\cite{lim2016statistical} consider estimation on the affine Grassmannian.

\subsection{Review of Optimization over $G(D,d)$}
\label{sec:prelim}

The minimization in~\eqref{eq:robopt} involves optimization over the Grassmannian manifold. To understand the energy landscape, one must have a basic understanding of the geometry of $G(D,d)$ and how to calculate derivatives over it.

We can write the energy function in~\eqref{eq:robopt} in two equivalent ways. First, as a function over $G(D,d)$, we write
\begin{equation}\label{eq:energygrass}
    F(L;\cX) = \sum_{\cX } \|\bQ_{L} \bx_i \| .
\end{equation}
On the other hand, we can represent points in $G(D,d)$ by equivalence classes of points in $O(D,d)$. For any $\bV \in O(D,d)$, the subspace $\Sp(\bV)$ can be represented by the equivalence class $[\bV] = \{ \bV \bR : \bR \in O(d,d) \}$. For $\bV \in O(D,d)$, the energy~\eqref{eq:energygrass} is equivalent to
\begin{equation}
    \label{eq:robenergyunreg}
    F(\bV;\cX) = \sum_{i=1}^N \|(\bI -\bV \bV^T)\bx_i \| .
\end{equation}
While both formulations are equivalent, we use~\eqref{eq:energygrass} to formulate geodesic derivatives over $G(D,d)$ and the coordinate representation in~\eqref{eq:robenergyunreg} to calculate gradients.
In the following, the geodesic derivative of~\eqref{eq:energygrass} will be used to characterize the local landscape, and the gradient of~\eqref{eq:robenergyunreg} will be used to analyze the performance of the gradient descent algorithm we discuss later in Section~\ref{sec:grad}.

One can measure the distance between subspaces in $G(D,d)$ using the principal angles. For a discussion of principal angles between subspaces, see Appendix \ref{sec:grassgeo}. Denoting the largest principal angle between $L_0$ and $L_1$ by $\theta_1(L_0,L_1)$, we can define a metric on $G(D,d)$ by $\dist(L_0,L_1) = \theta_1(L_0,L_1)$. We then define a ball on the metric space $G(D,d)$ by
\begin{equation*}
B(L,\gamma) = \{L' \in G(D,d): \ \theta_1(L',L) < \gamma\}.
\end{equation*}
We say that an element of $O(D,d)$ lies in the ball $B(L,\gamma)$ if the subspace spanned by its columns lies in $B(L,\gamma)$.

In the following, we frequently use a construction for geodesics on the Grassmannian. For a review of this construction and necessary terminology, see Appendix \ref{sec:grassgeo} or~\S3.2.1 of~\cite{lerman2014lp}. Suppose that the interaction dimension between $L_1$ and $L_2$ is $k$, that is, $k = d-\dim(L_0 \cap L_1)$. Let $\theta_1,\dots,\theta_k$ be the nonzero principal angles for $L_0$ and $L_1$ (in decreasing order), and let the respective principal vectors for $L_0$ and $L_1$ be $\bv_1,\dots,\bv_k$ and $\by_1,\dots,\by_k$. Finally, let $\bu_1,\dots,\bu_k$ be a complementary orthogonal basis for $L_1$ with respect to $L_0$. We can use these to parameterize a geodesic $L(t)$ with $L(0) = L_0$ and $L(1) = L_1$, where the formula is given in~\eqref{eq:grassgeo} of Appendix \ref{sec:grassgeo}. Then, following~\citet{lerman2014lp} and~\citet{lerman2017fast}, we can calculate the directional geodesic subderivative of~\eqref{eq:energygrass} at $L_0$ in the direction of $L_1$:
\begin{equation}\label{eq:geoderiv}
	\frac{\di}{\di t} F(L(t);\cX) \Big|_{t=0} = -\sum_{\|\bQ_{_0} \bx_i\| > 0 } \frac{\sum_{j=1}^k \theta_j (\bv_j^T \bx_i) (\bu_j^T \bx_i)}{\|\bQ_{L_0} \bx_i\|}.
\end{equation}
A subderivative of~\eqref{eq:robenergyunreg} with respect to $\bV$ is
\begin{equation}\label{eq:derF}
   \frac{\partial}{\partial \bV} F(\bV;\cX) = - \sum_{\|\bQ_{\bV} \bx_i\|>0} \frac{\bx_i \bx_i^T \bV}{\|\bQ_{\bV} \bx_i \|}.
\end{equation}

The definition of subderivative and subdifferential as we use them are given next.
For a more in depth discussion of these concepts, see, for example, \citet{clarke1990optimization} and~\citet{ledyaev2007nonsmooth}. In both of the derivatives~\eqref{eq:geoderiv} and~\eqref{eq:derF}, the sum is taken over all points that do not lie in $\Sp(\bV)$. This restriction is what makes them both subderivatives. For any general function $f(x)$, a subderivative of $f$ at $x_0$ is any number in the subdifferential $\partial f(x_0)$. In turn, the subdifferential of $f$ at $x_0$ is the set of all numbers between the one-sided derivatives of $f$ at $x_0$. For~\eqref{eq:geoderiv}, the subdifferential is defined to be the set of all numbers between
\begin{equation*}
    a = \lim_{t \to 0^-} \frac{F(L(t);\cX) - F(L(0);\cX)}{t} \text{ and }  b=\lim_{t \to 0^+} \frac{F(L(t);\cX) - F(L(0);\cX)}{t}.
\end{equation*}
In other words, the subdifferential is $[\min(a,b),\max(a,b)]$, which is the set of all instantaneous tangent slopes at $L(0)$.
For the other case of~\eqref{eq:derF}, for any entry of $\bV$, $\bV_{ij}$, let $\bDelta$ be the matrix of all zeros except $\bDelta_{ij} = 1$. Then, the subdifferential of $F(\bV;\cX)$ for $\bV_{ij}$ is all numbers between
\begin{equation*}
	a_{ij} = \lim_{t \to 0^-} \frac{F(\bV + t\bDelta;\cX) - F(\bV;\cX)}{t} \text{ and } b_{ij}=\lim_{t \to 0^+} \frac{F(\bV + t\bDelta;\cX) - F(\bV;\cX)}{t}.
\end{equation*}
This can be generalized to any direction $\bDelta$ with $\| \bDelta \|_F = 1$, where the subdifferential is the convex hull of the one sided derivatives.
We say that the subdifferential is less than a number if all of its elements are bounded above by that number, that is,
\begin{equation}
    \partial F(L(t);\cX)|_{t=0} < M \iff a < M \ \forall \ a \in  \partial F(L(t);\cX)|_{t=0}.
\end{equation}

To gain an intuition for these concepts, we display a visualization of the derivative and subdifferential for a simulated energy landscape in Figure~\ref{fig:diffsubdiff}. The derivative follows the standard definition from calculus on manifolds and is just the slope of the tangent line displayed on the left in Figure~\ref{fig:diffsubdiff}. On the other hand, at points where the function $F(L(t);\cX)$ is non-smooth at $t=0$, we use the subdifferential instead. The extreme slopes for the subdifferential are displayed on the right in Figure~\ref{fig:diffsubdiff}.
\begin{figure}[!t]
\centering
\includegraphics[width = .45\textwidth]{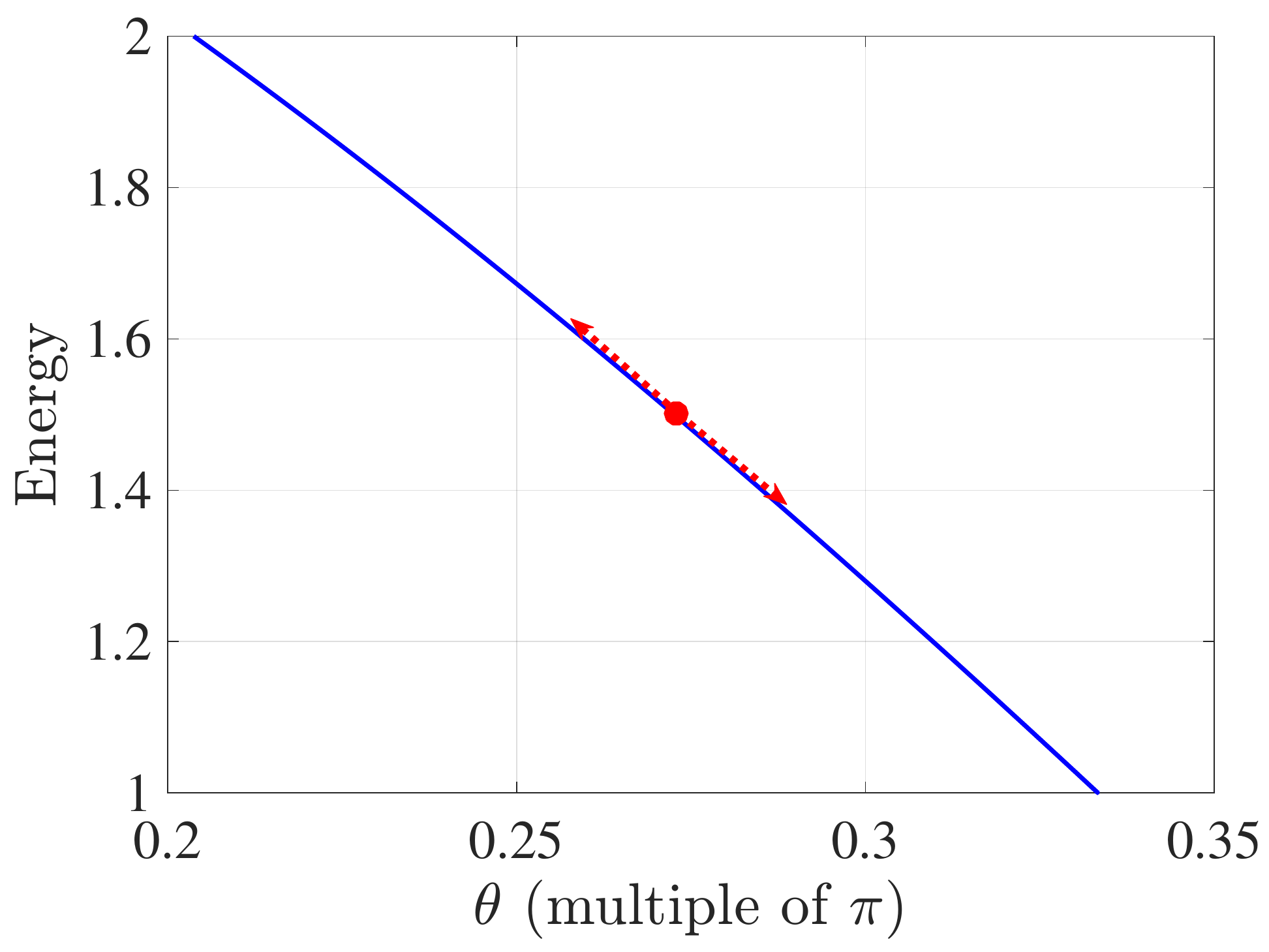}
\includegraphics[width = .45\textwidth]{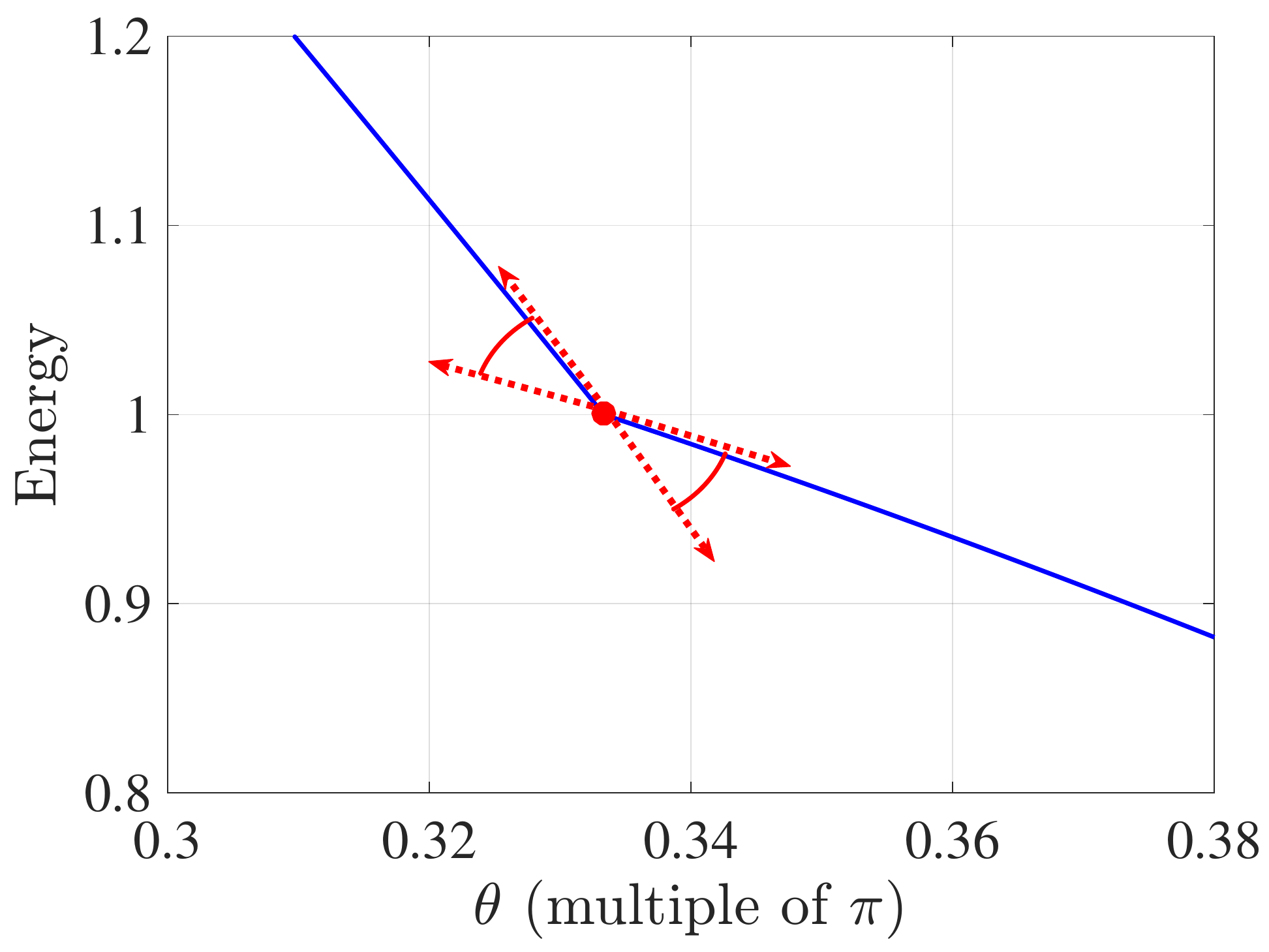}
\caption{Demonstration of the derivative and subdifferential of the energy in \eqref{eq:robopt}.
We assume that $d=1$, $D=2$ and identify $G(2,1)$ with the unit circle. The energy function in \eqref{eq:robopt} is thus parameterized  by angle and its graph is similar to the one in the top left sub-figure of  Figure~\ref{fig:landscape1}. In both images, the slope of the red two-sided arrow represents the magnitude of the directional geodesic derivative or subderivative over $G(2,1)$, which is the same as derivative or subderivative with respect to the representing angle. The first image shows a differentiable energy function on the given domain. In the second image, there is a value where the energy function is not differentiable. In this case, we use the subdifferential, which is the set of slopes of all lines between the acute angles formed by the two red two-sided arrows. Note that this subdifferential is bounded above by a negative number. We later prove that this property generally holds for the energy function of \eqref{eq:robopt} under certain conditions.}\label{fig:diffsubdiff}
\end{figure}

In future sections, to save space, we will write the sums in~\eqref{eq:geoderiv} and~\eqref{eq:derF} as $\sum_{\cX}$ and leave the condition $\|\bQ_{\bV} \bx_i\|>0$ as implied.
Following Section 2.5.3 of~\cite{edelman1999geometry}, to respect the geometry of the Grassmannian, the (sub)gradient of~\eqref{eq:robenergyunreg} is defined as
\begin{equation}\label{eq:grad}
  \nabla F(\bV;\cX) = \bQ_{\bV} \frac{\partial}{\partial \bV} F(\bV;\cX).
\end{equation}

\section{A Well-Tempered Landscape for Least Absolute Deviations}
\label{sec:theory}

We assume a dataset $\cX = \{ \bx_1,\dots,\bx_N\} \subset \reals^D$  that can be partitioned into corrupted (outlier) and uncorrupted (inlier) parts.
We refer to $\cX$ as an inlier-outlier dataset, where in the coming sections, we will more rigorously define this notion in the noiseless and noisy RSR settings. We denote the subsets of inliers and outliers in $\cX$ as $\cX_{\mathrm{in}}$ and $\cX_{\mathrm{out}}$, respectively. The corresponding data matrices for $\cX_{\mathrm{in}}$ and $\cX_{\mathrm{out}}$ are $\bX_{\mathrm{in}}$ and $\bX_{\mathrm{out}}$, where columns represent data points.

As stated previously, the basic problem of RSR is to recover the subspace $L_*$ from an inlier-outlier dataset.
In the noiseless setting one can try to exactly recover this subspace, and in the noisy setting one may try to approximately recover it. In the latter case, this means that we wish to estimate it up to a specified approximation error, which may depend on the level of noise in the inliers.
In order for this problem to be well-defined, basic assumptions must be made.
Indeed, if all inliers lie at the origin, then any subspace would be a solution to the RSR problem. This issue, among others, was extensively discussed in \S III-A of~\citet{lerman2018overview}. Our theoretical results for recovery will depend on a condition formulated later in this section that ensures the problem is well-defined.

First,~Section~\ref{sec:landstat} discusses some statistics that play a fundamental role in our analysis.
Next,~Section~\ref{subsec:landscapestab} uses these statistics to develop the deterministic conditions that ensure the energy landscape of~\eqref{eq:robopt} behaves nicely around an underlying subspace in both the noiseless and noisy RSR settings.

\subsection{Landscape Statistics}
\label{sec:landstat}

Equipped with the notions laid out in Section~\ref{sec:prelim}, we are ready to define some important statistics for the landscape of~\eqref{eq:robopt}.  These statistics are inspired by those originally discussed in \citet{lerman2015robust}, and they are later used for our stability results in Theorems~\ref{thm:landscape} and~\ref{thm:landscapenoise}. We first discuss the noiseless RSR setting in Section~\ref{sec:noiselessstat} and then the noisy RSR setting in Section~\ref{subsec:noisystat}.

\subsubsection{The Noiseless RSR Setting}
\label{sec:noiselessstat}

For the noiseless setting, we assume that the inliers, $\cX_{\mathrm{in}} \subset \cX$, lie on a low-dimensional linear subspace $L^* \in G(D,d)$, and the rest of the points, $\cX_{\mathrm{out}} = \cX \setminus \cX_{\mathrm{in}}$, are in $\reals^D \setminus \{ L^* \}$. We call $\cX$ defined in this way a noiseless inlier-outlier dataset.

The permeance of the inliers in a noiseless inlier-outlier dataset is defined as
\begin{equation}\label{eq:perm}
   \cP(\cX_{\mathrm{in}}) = \lambda_d \left( \sum_{\bx \in \cX_{\mathrm{in}}} \frac{\bx_i \bx_i^T }{\|\bx_i\|} \right).
\end{equation}
Here, $\lambda_d(\cdot)$ denotes the $d$th eigenvalue of a matrix. Notice that large values of $\cP$ ensure that the inliers are well-distributed. In other words, they permeate throughout $L_*$.

We also define an alignment statistic for the noiseless inlier-outlier dataset $\cX$. With some abuse of notation, we write $\nabla F(L;\cX)$ to refer to the gradient with respect to some basis of $L$, where the choice of basis does not matter.
The alignment statistic of a set of outliers with respect to a subspace is
\begin{equation}\label{eq:align}
\cA(\cX_{\mathrm{out}},L) =  \left\| \nabla F(L;\cX_{\mathrm{out}}) \right\|_2.
\end{equation}
It is not hard to show that \eqref{eq:align} is invariant with respect to choice of basis for $L$.
In effect, if this term is always small, then the outliers are not concentrated in any low-dimensional space. In other words, they are not aligned.
In our later analysis, we use a simple and illuminating bound for $\cA$:
\begin{align}\label{eq:alignmentbd}
   \cA(\cX_{\mathrm{out}},L) &\leq\sqrt{N_{\mathrm{out}}} \| \bX_{\mathrm{out}} \|_2.
\end{align}
The derivation for this bound is left to Appendix \ref{sec:alignbd}. We note that this bound may be tight, but in most cases it is not.

We will define a stability statistic for a neighborhood of $L_*$, ${B(L_*, \gamma)}$. This neighborhood depends on a parameter $\gamma$, which fixes the maximum principal angle of subspaces in this neighborhood with $L_*$. Using the permeance and alignment defined in~\eqref{eq:perm} and~\eqref{eq:align}, the stability statistic of a noiseless inlier-outlier dataset is
\begin{align}\label{eq:stab}
    \cS(\cX,L_*,\gamma) &= \cos(\gamma)  \cP(\cX_{\mathrm{in}}) - \sup_{L \in {B(L_*,\gamma)} }  \cA(\cX_{\mathrm{out}}, L).
\end{align}
The simple condition required in most of our theoretical analysis is $\cS(\cX,L_*,\gamma) > 0$. This essentially means that the amount that the inliers permeate the underlying subspace must be able to beat the alignment of the outliers with respect to any subspace.

Note that $\cS(\cX, L_*, 0) = \cP(\cX_{\mathrm{in}}) - \cA(\cX_{\mathrm{out}}, L_*)$ is a tighter stability condition than the one in (2.4) of~\citet{lerman2015robust}. Indeed, the stability expression of~\citet{lerman2015robust}, takes the form
\begin{equation}
    \cS_{\mathrm{REAP}}(\cX, L_*) = \frac{1}{4 \sqrt{d}} \cP_{\mathrm{REAP}}(\cX_{\mathrm{in}}) - \cA_{\mathrm{REAP}} (\cX_{\mathrm{out}}, L_*).
\end{equation}
Here, $\cP_{\mathrm{REAP}}$ and $\cA_{\mathrm{REAP}}$ are actually lower and upper bounds on the permeance and alignment defined in~\eqref{eq:perm} and~\eqref{eq:align}, respectively. This, together with the extra factor of $1/(4\sqrt{d})$, means $\cS_{\mathrm{REAP}}(\cX, L_*)$ is not as tight as $\cS(\cX, L_*, \gamma)$ for  $\gamma=0$ . The upside is that the REAPER alignment only needs to be examined at a single point $L_*$, whereas $\cS(\cX, L_*, \gamma)$ becomes hard to estimate as $\gamma$ increases. It is not clear in general which statistic is tighter when $\gamma > 0$.

\subsubsection{The Noisy RSR Setting}
\label{subsec:noisystat}

The noisy setting occurs when the inliers lie near the low-dimensional subspace rather than exactly on it. In this case, we need to be more careful with the statistics of our inlier points. For each inlier point, we write $\bx_i = \bP_{L_*} \bx_i + \bepsilon_i$, where $\bP_{L_*} \bx_i \in L_*$ and $\bepsilon_i \in L_*^\perp$ is added noise. Then, $\cX^{\mathrm{dns}} = (\bP_{L_*} \cX_{\mathrm{in}}) \cup \cX_{\mathrm{out}}$ is the corresponding noiseless inlier-outlier dataset (here, the $\cdot^{\mathrm{dns}}$ superscript stands for ``de-noised").
We assume that the noise in our data is uniformly bounded by $\epsilon$, that is, $\|\bx_i - \bP_{L_*} \bx_i \| < \epsilon$ for all $\bx_i \in \cX_{\mathrm{in}}$.

Some small technical issues come up with noisy RSR datasets that make the conditions harder to interpret. However, the following discussion is just a generalization of the previous section on the noiseless case after dealing with these technicalities.

To write the stability statistic in the noisy RSR setting, we must define the following set-valued functions of $\cX_{\mathrm{in}}$.
These are defined for a unit vector $\bw \in L_* \cap S^{D-1}$ and small-projection cutoff $\delta$. They are meant to distinguish between inliers who have a projection onto $\bw$ with length bigger than $\delta$, and inliers who have a projection  onto $\bw$ with length less than or equal to $\delta$. These functions are defined as
\begin{align}
    \cF_0(\cX_{\mathrm{in}}, \bw, \delta) & = \{ \bx \in \cX_{\mathrm{in}} : |\bw^T \bx| \leq \delta \},
    \label{eq:noiseinout}
    \\
    \cF_1(\cX_{\mathrm{in}}, \bw, \delta) & = \{ \bx \in \cX_{\mathrm{in}} : |\bw^T \bx| > \delta \}.
    \label{eq:noiseinin}
\end{align}
Inliers in the first set are coined ``small-projection inliers", and inliers in the latter set are coined ``large-projection inliers".

With these sets, our noisy inlier-outlier stability statistic is
\begin{align}\nonumber
    \cS_n(\cX,L_*, \epsilon, \delta, \gamma) &=    \frac{\cos(\gamma-2\arctan(\epsilon/\delta))}{2} \min_{\bw \in L_* \cap S^{D-1}} \left(\sum_{ \bx_i \in \cF_1(\cX_{\mathrm{in}}, \bw, \delta) } \frac{\bw^T \bP_{L_*} \bx_i \bx_i^T \bP_{L_*} \bw}{\| \bP_{L_*} \bx\| + \sqrt{\epsilon^2 + \delta^2}} \right) \\
                                             &- \sqrt{\delta^2 + \epsilon^2}\max_{\bw \in L_* \cap S^{D-1}} \#\left( \cF_0(\cX_{\mathrm{in}},\bw,\delta) \right)  -\sup_{{B(L_*,\gamma)}} \cA(\cX_{\mathrm{out}}, L).
    \label{eq:stabn}
\end{align}
This statistic is somewhat similar to what we had in the noiseless RSR setting, although now we have separated our inlier terms into two parts.
The first term behaves like the permeance from the noiseless RSR setting, with the addition that small-projection inliers are trimmed.
The last term is again the alignment of the outliers.
The middle term is quite technical, and it is meant to capture cases when inliers may have large angle with a fixed direction of $L_*$.
If we take $\delta \to 0$ and $\epsilon/\delta \to 0$, then the stability almost becomes our original stability, with the added factor of 1/2 on the permeance term. If $\cS_n(\cX,L_*, \epsilon, \delta, \gamma) > 0$, then we will demonstrate later that recovery is possible up to accuracy $\eta = 2 \arctan(\epsilon / \delta)$.

An illustration of the small-projection cutoff $\delta$, the noise bound $\epsilon$, and the accuracy $\eta$ for noisy inliers is given in Figure~\ref{fig:delta} to help ease understanding of our statistic.  For simplicity, we show the case of a one-dimensional subspace in $\reals^2$. Here, the vectors $\bw \in L_* \cap S^{D-1}$ are $(1,0)^T$ or $(-1,0)^T$. Since these two vectors are equivalent for the two functions in~\eqref{eq:noiseinout} and~\eqref{eq:noiseinin}, the large-projection inliers and small-projection inliers are only determined by the magnitude of $\delta$.
Thus, the cutoff defined by a certain choice of $\delta$ in~\eqref{eq:noiseinout} and~\eqref{eq:noiseinin} corresponds to separating small and large inliers by their $x$-value.

\begin{figure}[!t]
\centering
\includegraphics[width = .45\textwidth]{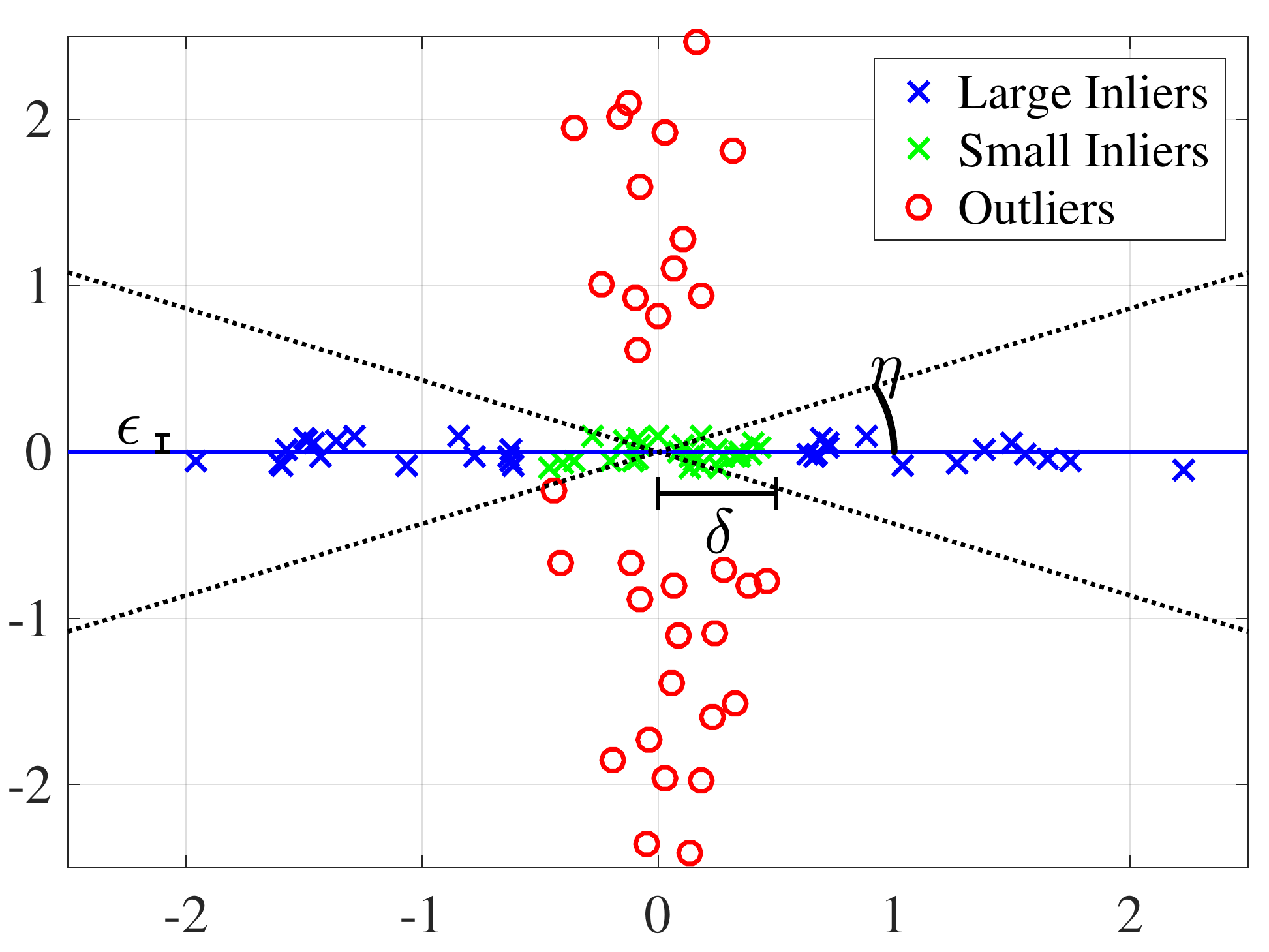}
\caption{Demonstration of a noisy inlier-outlier dataset, where $d=1$ and $D=2$. The following parameters used in our analysis are also displayed: the small-projection cutoff $\delta$, the noise bound $\epsilon$, and the accuracy $\eta$. Note that, in this example, inliers within a $\delta$-neighborhood of the origin are removed from the permeance calculation, and $\epsilon$ the maximum distance of the inliers to the $X$-axis. Our analysis guarantees recovery up to the accuracy $\eta$, which means that we would recover the $x$-axis within the acute angle formed by the dotted lines.}\label{fig:delta}
\end{figure}

We note that the statistic is by no means tight and future work should analyze how accurate these methods can be with noise.
As we will discuss in Section~\ref{sec:conclusion}, one could also study RSR in settings with high noise, such as heavy tailed noise or under the spiked model.
The main point of the noisy statistic is to show that our results yield $\epsilon/\delta$-approximate recovery when the noise is uniformly bounded by $\epsilon$. Here, $\delta$ is constrained in that the stability condition, $\cS_n(\cX,L_*, \epsilon, \delta, \gamma) > 0$, must hold.

\subsection{The Local Landscape under Stability}
\label{subsec:landscapestab}

In this section, we will give results that prove the local stability of the energy landscape of~\eqref{eq:robopt}. We begin with the theorem for the noiseless RSR setting in~Section~\ref{subsec:noiselessland}, and then prove an analogous result for the noisy RSR setting in~Section~\ref{sec:noisyland}.

\subsubsection{Stability in the Noiseless Case}
\label{subsec:noiselessland}

We show that positivity of the stability statistic given in~\eqref{eq:stab} with $0<\gamma<\pi/2$ implies stability of $L_*$ as a minimizer in the neighborhood ${B(L_*, \gamma)}$. Stability of $L_*$ means that it is the only critical point and minimizer in ${B(L_*, \gamma)}$, and, at all other points in this neighborhood, there exists a direction in $G(D,d)$ such that the energy landscape looks like one of the two cases displayed in Figure~\ref{fig:diffsubdiff}; in other words, there is a direction of decrease.

\begin{theorem}[Stability of $L_*$]\label{thm:landscape}
	Suppose that a noiseless inlier-outlier dataset 
with an underlying subspace $L_*$ satisfies $\cS(\cX, L_*, \gamma) > 0$, for some $0<\gamma < \pi/2$. Then,
 all points in ${B(L_*, \gamma)} \setminus \{L_*\}$ have a subdifferential along a geodesic strictly less than $-\cS(\cX, L_*, \gamma)$, that is, it is a direction of decreasing energy. This implies that $L_*$ is the only critical point and local minimizer in ${B(L_*,\gamma)}$.
\end{theorem}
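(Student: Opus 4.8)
The plan is to show that every $L_0 \in B(L_*,\gamma)\setminus\{L_*\}$ admits an explicit geodesic direction of strict descent, and the crucial choice is to rotate \emph{only the largest principal angle} toward $L_*$ rather than heading straight for $L_*$ along all angles at once. Concretely, let $\theta_1 \ge \cdots \ge \theta_k > 0$ be the nonzero principal angles between $L_0$ and $L_*$ (so $\theta_1 < \gamma$), with principal vectors $\bv_1,\dots,\bv_k \in L_0$ and $\by_1,\dots,\by_k \in L_*$ and complementary vectors $\bu_1,\dots,\bu_k$ as in Section~\ref{sec:prelim}. I would take the unit-speed geodesic from $L_0$ toward $\Sp(\by_1,\bv_2,\dots,\bv_d)$, i.e.\ the rank-one rotation in the $\bv_1$--$\bu_1$ plane that drives $\theta_1$ to $0$, and split $F = F(\cdot;\cX_{\mathrm{in}}) + F(\cdot;\cX_{\mathrm{out}})$, bounding the inlier and outlier one-sided derivatives separately so that their sum stays below $-\cS$.

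For the inlier piece I would use that every $\bx_i \in \cX_{\mathrm{in}}$ lies in $L_*$. Expanding $\bx_i$ in the orthonormal principal basis $\by_1,\dots,\by_d$ of $L_*$ gives $\bv_1^T\bx_i = \cos\theta_1\,(\by_1^T\bx_i)$, $\bu_1^T\bx_i = \sin\theta_1\,(\by_1^T\bx_i)$, and the key distance bound $\|\bQ_{L_0}\bx_i\|^2 = \sum_{j\le k}\sin^2\theta_j\,(\by_j^T\bx_i)^2 \le \sin^2\theta_1\,\|\bx_i\|^2$, using $\theta_j \le \theta_1$ and $\sum_{j\le d}(\by_j^T\bx_i)^2 = \|\bx_i\|^2$. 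Substituting into the single-angle form of~\eqref{eq:geoderiv} and cancelling the geodesic speed $\theta_1$, the unit-speed inlier derivative is at most $-\cos\theta_1 \sum_i (\by_1^T\bx_i)^2/\|\bx_i\| = -\cos\theta_1\, \by_1^T\big(\sum_i \bx_i\bx_i^T/\|\bx_i\|\big)\by_1 \le -\cos\theta_1\,\cP(\cX_{\mathrm{in}}) < -\cos\gamma\,\cP(\cX_{\mathrm{in}})$, where the middle step is precisely the variational characterization of $\cP$ in~\eqref{eq:perm} as the smallest eigenvalue over $L_*$ together with $\by_1 \in L_*\cap S^{D-1}$. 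I would also note that this piece is genuinely differentiable along the chosen geodesic: any inlier on $L_0$ must lie in $L_0\cap L_*$, which is orthogonal to the $\bv_1$--$\bu_1$ rotation plane, so it never leaves the moving subspace and produces no cusp.

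For the outlier piece the whole point of the rank-one direction is that its tangent $\bDelta$ has $\|\bDelta\|_F = 1$ and is rank one, so $\big|\tfrac{d}{ds}F(L(s);\cX_{\mathrm{out}})\big| \le \|\nabla F(L(s);\cX_{\mathrm{out}})\|_2 = \cA(\cX_{\mathrm{out}},L(s))$ by the rank-one/spectral-norm inequality $\langle \bA,\bDelta\rangle \le \|\bA\|_2$; this is what makes the alignment appear with its correct (spectral) norm from~\eqref{eq:align}. The only subtlety is non-smoothness of the outlier energy at $L_0$ caused by outliers lying exactly on $L_0$, which the gradient at $L_0$ does not see; here I would bound each one-sided derivative by $\lim_{s\to0^\pm}\cA(\cX_{\mathrm{out}},L(s)) \le \sup_{L\in B(L_*,\gamma)}\cA(\cX_{\mathrm{out}},L)$, valid because $L(s)$ stays inside the open ball $B(L_*,\gamma)$ for small $|s|$ (moving only $\theta_1<\gamma$ keeps the largest principal angle below $\gamma$). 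This continuity argument is exactly what lets $\sup_B\cA$ absorb the outlier cusp.

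Combining the two bounds, every element of the geodesic subdifferential at $L_0$ is at most $-\cos\gamma\,\cP(\cX_{\mathrm{in}}) + \sup_{L\in B(L_*,\gamma)}\cA(\cX_{\mathrm{out}},L) = -\cS(\cX,L_*,\gamma)$, with strict inequality coming from $\cos\theta_1 > \cos\gamma$ (note $\cS>0$ forces $\cP(\cX_{\mathrm{in}})>0$). Hence each $L_0 \neq L_*$ has a direction of strictly decreasing energy and so can be neither a critical point nor a local minimizer, yielding the final uniqueness claim. The step I expect to be the main obstacle is not any single inequality — each is routine once the direction is fixed — but the simultaneous control of non-smoothness for inliers and outliers, which is precisely what the single-largest-angle choice resolves: it renders the inlier energy smooth (intersection inliers are orthogonal to the rotation) and pairs a rank-one tangent with the spectral-norm alignment, while the continuity-in-$\sup$ estimate handles the residual outlier cusp.
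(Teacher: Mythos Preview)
Your proof is correct and follows essentially the same approach as the paper: construct a geodesic from $L_0$ toward $L_*$ by rotating the direction(s) of largest principal angle, split the directional subderivative into inlier and outlier parts, and bound them respectively by $-\cos(\theta_1)\cP(\cX_{\mathrm{in}})$ and $\sup_{B(L_*,\gamma)}\cA(\cX_{\mathrm{out}},L)$. The only minor difference is that the paper rotates all $l$ directions tied for the maximal angle simultaneously (obtaining the slightly stronger bound $-l\,\cS$), whereas you rotate a single direction---this suffices for the theorem as stated and, as you observe, makes the inlier non-smoothness analysis particularly clean since intersection inliers are orthogonal to the single rotation plane.
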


\begin{proof}[Proof of Theorem~\ref{thm:landscape}]
    The main point behind the proof is the following statement. We show that, for any $L \in {B(L_*,\gamma)} \setminus \{L_*\}$, there is a geodesic $L(t)$ with $L(0) = L$, and an open interval around 0, $\cI = (\theta_1(L,L_*)-\gamma,\delta(L))$, for some $\delta(L) > 0$, such that
    \begin{align}
    	 &\text{For } t \in \cI, \ \theta_1(L(t),L_*) \text{ is a strictly decreasing function;} \label{eq:geocloser}\\
        &\text{For } t \in \cI, \ F(L(t); \cX) \text{ is a strictly decreasing function.}\label{eq:Fdecr}
    \end{align} 
	In simple words, the function $F(L(t);\cX)$ is decreasing as $L(t)$ moves closer to $L_*$. This implies that $L_*$ is the only critical point and minimizer in $B(L_*, \gamma)$ by a perturbation argument, which we will explicitly state at the end of the proof.

Fix a subspace $L \in {B(L_*,\gamma)} \setminus \{L_*\}$, and let the principal angles between $L$ and $L_*$ be $\theta_1,\dots,\theta_d$.
Also, choose a set of corresponding principal vectors $\bv_1,\dots,\bv_d$ and $\by_1,\dots,\by_d$ for $L$ and $L_*$, respectively, and let $l\geq 1$ be the maximum index such that $\theta_1 = \dots = \theta_l$. We let $\bu_1, \dots,\bu_l$ be complementary orthogonal vectors for $\bv_1,\dots,\bv_l$ and $\by_1,\dots,\by_l$. For $t \in [0,1]$, we form the geodesic
\begin{equation}\label{eq:specgeo}
	L(t) = \Sp(\bv_1\cos(t) + \bu_1 \sin(t),\dots,\bv_l\cos(t) + \bu_l \sin(t),\bv_{l+1},\dots,\bv_d).
\end{equation}
Notice that this geodesic is parameterized by arclength in terms of the metric defined in Section~\ref{sec:prelim}.
This geodesic moves only the $l$ furthest directions of $L(0)$ towards $L_*$ and fixes those directions that are closer than $\theta_1$. This geodesic certainly satisfies~\eqref{eq:geocloser}. We have also removed dependence on $\theta_1$, since this unnecessarily impacts the magnitude of the geodesic subderivative~\eqref{eq:geoderiv}.
Following~\eqref{eq:geoderiv} with no dependence on $\theta_1, \dots, \theta_l$ and then using \eqref{eq:derF} and~\eqref{eq:grad}, we have
\begin{align}\label{eq:grassgeoderbd}
	\frac{\di}{\di t} F(L(t);\cX) &\Big|_{t=0} = -\sum_{j=1}^l  \sum_{\cX} \frac{\bv_j^T \bx_i \bx_i^T \bu_j}{\|\bQ_{L} \bx_i\|}  \\ \nonumber
	&= - \sum_{j=1}^l \left(\sum_{\cX_{\mathrm{in}} } \frac{\bv_j^T \bx_i \bx_i^T \bu_j}{\|\bQ_{L} \bx_i\|} +\sum_{\cX_{\mathrm{out}}} \frac{\bv_j^T \bx_i \bx_i^T \bu_j}{\|\bQ_{L} \bx_i\|} \right)\\ \nonumber
	&\leq - \sum_{j=1}^l \left( \frac{\cos(\theta_1) \sin(\theta_1)}{\sin(\theta_1)} \sum_{\cX_{\mathrm{in}} } \frac{\by_j^T \bx_i \bx_i^T \by_j}{\|\bx_i\|} - \sup_{\bV \in {B(L_*,\gamma)}} \left\| \nabla F(\bV;\cX_{\mathrm{out}}) \right\|_2 \right) \\ \nonumber
	&\leq - l \cS(\cX, L_*, \gamma) \leq - \cS(\cX, L_*, \gamma) < 0.
\end{align}
The third line is obtained by noting that the inliers and the vectors $\by_j$, $1 \leq j \leq l$, are contained $L_*$, the vectors $\bv_j$, $1 \leq j \leq l$, are contained in $L$, and $\theta_1=...=\theta_l$. Therefore, for all $1 \leq j \leq l$ and all inliers $\bx_i$,  $\bv_j^T \bx_i \bx_i^T \bu_j \geq \cos(\theta_1) \sin(\theta_1) (\by_j^T \bx_i)^2$. In the third line we also used~\eqref{eq:derF} and~\eqref{eq:grad} for the outlier term and maximized it over $B(L_*, \gamma)$.
Thus,~\eqref{eq:grassgeoderbd} implies that every subspace in $\overline{B(L_*,\gamma)} \setminus \{L_*\}$ has a direction with negative local subderivative.

The argument above was for a special subderivative at $L(0)$. We now extend this argument to show that this implies that the subdifferential at every point is bounded above by $-\cS(\cX, L_*, \gamma)$. On the one hand, if $L(0)$ contains no points of $\cX$, then the subderivative corresponds to the actual derivative, and we therefore have that $F(L(t);\cX)$ is decreasing, as desired.

On the other hand, assume that $L(0) \cap \cX$ is non-empty. Since $\cX$ is finite, there exists a neighborhood of 0, $(-\omega, \omega)$, for $0 < \omega < \pi/2$, such that $L(t)$ has empty intersection with $\cX \setminus L(0)$, that is,  
\begin{equation}\label{eq:ltnointersect}
    \{\bx \in \cX : \bx \in L(t),\text{ for some } t \in (-\omega,\omega)\} = L(0) \cap \cX.
\end{equation} 
Notice that the changing directions in~\eqref{eq:specgeo} form a geodesic along $G(D,l)$ while the other directions are fixed (i.e., $L(t) \supset \Sp(\bv_{l+1}, \dots, \bv_d)$ for all $t$). Thus, derivatives of $F(L(t); \cX)$ over $G(D,d)$ are equivalent to derivatives of $F(\tilde L(t); \bQ_{\bv_{l+1},\dots,\bv_d}\cX)$ over $G(D,l)$, where $\tilde L(t) = \Sp(\bv_1\cos(t) + \bu_1 \sin(t),\dots,\bv_l\cos(t) + \bu_l \sin(t)))$. Here, $\bQ_{\bv_{l+1}, \dots, \bv_d}$ is the projection onto the orthogonal complement of $\Sp(\bv_{l+1}, \dots, \bv_d)$, which is applied to all points of $\cX$. By~\eqref{eq:ltnointersect}, this geodesic over $G(D,l)$ has the property 
$$\tilde L(t) \cap \cX = \emptyset, \ \forall \ t \in (-\omega,0) \cup (0,\omega).$$
Consequently, $F(\tilde L(t); \bQ_{\bv_{l+1},\dots,\bv_d}\cX)$ is continuously differentiable on $(-\omega,0)$ and $(0,\omega)$ as a function over $G(D,l)$,   and it is apparent that the bounds in~\eqref{eq:grassgeoderbd} also hold for all of these derivatives. Putting these facts together, by the continuity of the derivatives of $F(\tilde L(t); \bQ_{\bv_{l+1},\dots,\bv_d}\cX)$ for $t \in (-\omega,0) \cup (0,\omega)$, the subdifferential of $F(L(t);\cX)$ at $t=0$ is bounded above by $- \cS(\cX, L_*, \gamma)$, which in turn implies that~\eqref{eq:Fdecr} holds. 

Finally, there are clearly no other critical points than $L_*$ in $B(L_*, \gamma)$, because every point has a direction of decrease. To show that~\eqref{eq:geocloser} and~\eqref{eq:Fdecr} imply that $L_*$ is a local minimizer, consider a one-dimensional perturbation of $L_*$, $L'$. In other words, $\theta_1(L_*,L') >0$ and $\theta_2(L_*,L')=0$. Then,~\eqref{eq:geocloser} and~\eqref{eq:Fdecr} imply that if $L(t)$ is the geodesic between $L'$ and $L_*$, then $F(L(t);\cX)$ is decreasing for $t \in (\theta_1(L',L_*)-\gamma,\theta_1(L',L_*))$. The more general perturbation case is just an extension of this argument. Indeed, a $d$-dimensional perturbation may be written as a sequence of one-dimensional perturbations.
\end{proof}

\subsubsection{Stability with Small Noise}
\label{sec:noisyland}

The following theorem generalizes Theorem~\ref{thm:landscape} for the case of a noisy inlier-outlier dataset. The proof of this theorem is left to Appendix~\ref{sec:landscapenoiseproof}, and it essentially follows that of the noiseless RSR setting with the altered stability statistic. Here, we only guarantee that there is a large region with no critical points up to a precision of $\eta = 2 \arctan(\epsilon / \delta)$, which is determined by the noise level and inlier permeance.

\begin{theorem}[Stability of $B(L_*,\eta)$ with Noise]\label{thm:landscapenoise}
    Assume a noisy inlier-outlier dataset, with an underlying subspace $L_*$ and noise parameter $\epsilon > 0$, that satisfies for some $\delta > \epsilon > 0$ the stability condition $\cS_n(\cX,L_*, \epsilon, \delta,\gamma) > 0$. Let $\eta = 2 \arctan(\epsilon / \delta)$ and assume  further that $\eta < \gamma$.
    Then, all points in ${B(L_*,\gamma) \setminus B(L_*,\eta)}$ have a subdifferential along a geodesic strictly less than $-\cS_n(\cX,L_*, \epsilon, \delta,\gamma)$, that is, it is a direction of decreasing energy. This implies that the only local minimizers and saddle points in ${B(L_*,\gamma)}$ are in $B(L_*,\eta)$.
\end{theorem}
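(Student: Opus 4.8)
The plan is to mirror the proof of Theorem~\ref{thm:landscape} as closely as possible, retaining the same geodesic construction and modifying only the estimate of the inlier contribution to the subderivative. Fix $L \in B(L_*,\gamma) \setminus B(L_*,\eta)$, let $\theta_1 \geq \dots \geq \theta_d$ be the principal angles between $L$ and $L_*$ (so that $\eta \leq \theta_1 < \gamma$), and build the geodesic~\eqref{eq:specgeo} that rotates the $l$ furthest principal directions of $L$ toward $L_*$. As in~\eqref{eq:grassgeoderbd}, I would write the geodesic subderivative at $t=0$ as $-\sum_{j=1}^l \sum_{\cX} \bv_j^T \bx_i \bx_i^T \bu_j / \|\bQ_L \bx_i\|$ and split the inner sum into an inlier part and an outlier part. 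The outlier part is handled exactly as in the noiseless case: it is bounded in magnitude by $\sup_{B(L_*,\gamma)} \cA(\cX_{\mathrm{out}},L)$, which supplies the last term of $\cS_n$.

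The heart of the argument is the inlier part, which now must account for $\bx_i = \bP_{L_*}\bx_i + \bepsilon_i$ with $\|\bepsilon_i\| < \epsilon$. For each relevant principal direction $\bw = \by_j \in L_*$, I would split the inliers into the large-projection set $\cF_1(\cX_{\mathrm{in}},\bw,\delta)$ and the small-projection set $\cF_0(\cX_{\mathrm{in}},\bw,\delta)$ of~\eqref{eq:noiseinout}--\eqref{eq:noiseinin}, noting that $\bw^T \bx_i = \bw^T \bP_{L_*} \bx_i$ since $\bepsilon_i \perp L_*$. For a large-projection inlier, $\|\bP_{L_*}\bx_i\| \geq |\bw^T \bP_{L_*}\bx_i| > \delta$, so $\bx_i$ makes an angle of at most $\arctan(\epsilon/\delta) = \eta/2$ with $L_*$. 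Because the geodesic rotates $L$ toward $L_*$ through an angle $\theta_1 \geq \eta$, this noise-induced tilt can be absorbed: I expect the per-inlier quantity $\bv_j^T \bx_i \bx_i^T \bu_j / \|\bQ_L \bx_i\|$ to remain positive and bounded below by $\tfrac{1}{2}\cos(\gamma-\eta)(\bw^T \bP_{L_*}\bx_i)^2 / (\|\bP_{L_*}\bx_i\| + \sqrt{\epsilon^2+\delta^2})$, where $\cos(\gamma-\eta)$ tracks the worst-case principal angle reduced by the noise tilt and the factor $\tfrac{1}{2}$ together with the additive $\sqrt{\epsilon^2+\delta^2}$ in the denominator arise from bounding $\|\bQ_L \bx_i\|$ from above in the presence of noise. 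Summing over $\cF_1$ and minimizing over $\bw$ produces the first, permeance-like, term of $\cS_n$.

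For the small-projection inliers the signal and noise are comparable, so the sign of their contribution cannot be controlled; here I would only bound each term in magnitude. Since $|\bw^T \bP_{L_*}\bx_i| \leq \delta$ and $\|\bepsilon_i\| < \epsilon$, the relevant component of such an inlier has length at most $\sqrt{\delta^2+\epsilon^2}$, so the total worst-case contribution is at least $-\sqrt{\delta^2+\epsilon^2}\,\#(\cF_0(\cX_{\mathrm{in}},\bw,\delta))$, maximized over $\bw$; this is exactly the middle term of $\cS_n$. Combining the three estimates (and using $l \geq 1$ as in the noiseless case) gives $\tfrac{\di}{\di t} F(L(t);\cX)|_{t=0} \leq -\cS_n(\cX,L_*,\epsilon,\delta,\gamma) < 0$ for the chosen geodesic.

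Finally, I would upgrade this one-sided subderivative bound to a bound on the full subdifferential at every point of $B(L_*,\gamma) \setminus B(L_*,\eta)$ by repeating the continuity argument from Theorem~\ref{thm:landscape} verbatim: reduce the moving directions to a geodesic in $G(D,l)$, observe that this geodesic avoids $\cX$ on a punctured neighborhood of $0$ so that $F$ is continuously differentiable there, and pass to the limit. Since every point of $B(L_*,\gamma) \setminus B(L_*,\eta)$ then admits a direction of strict decrease, it can be neither a local minimizer nor a saddle point, so all such critical points must lie in $B(L_*,\eta)$. I expect the only genuinely new difficulty to be the per-inlier estimate for $\cF_1$ in the second paragraph -- in particular, correctly propagating the noise tilt $\arctan(\epsilon/\delta)$ through both the sign of the numerator and the denominator $\|\bQ_L \bx_i\|$ to recover the precise constants $\cos(\gamma-\eta)/2$ and $\sqrt{\epsilon^2+\delta^2}$ appearing in $\cS_n$; the outlier bound and the continuity and perturbation steps should transfer from the noiseless proof with essentially no change.
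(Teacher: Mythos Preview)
Your proposal is correct and follows essentially the same approach as the paper's proof: the same geodesic~\eqref{eq:specgeo}, the same three-way split of the subderivative into outliers, small-projection inliers $\cF_0$, and large-projection inliers $\cF_1$ with respect to each principal direction $\bw_j \in L_*$, and the same continuity upgrade to the full subdifferential. The paper's per-inlier estimate for $\cF_1$ proceeds via the pair of inequalities $\bv_j^T\bx_i\,\bx_i^T\bu_j \geq \cos(\theta_1-\eta)\sin(\theta_1-\eta)(\bw_j^T\bx_i)^2$ and $\|\bQ_L\bx_i\| \leq \sin(\theta_1)\|\bP_{L_*}\bx_i\|+\epsilon$, then uses $\theta_1 \geq \eta$ to simplify $\sin(\theta_1-\eta)/\sin(\theta_1)$ and $\epsilon/\sin(\theta_1)$ to the constants you anticipated; this is exactly the ``noise tilt'' calculation you flagged as the only new step.
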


\section{A Geodesic Gradient Method for RSR and its Guarantees}
\label{sec:grad}

In this section, we discuss a geodesic gradient descent method for minimizing~\eqref{eq:energygrass}. First, Section~\ref{sec:graddesc} gives the details of our algorithm. After laying out the algorithm, we discuss convergence to a local minimizer under deterministic conditions in Section~\ref{subsec:convergence}. Then, Section~\ref{subsec:pcainit} shows that the PCA $d$-subspace is a good initializer under a separate deterministic condition.

\subsection{Minimization by Geodesic Gradient Descent}
\label{sec:graddesc}

We use gradient descent to minimize~\eqref{eq:robenergyunreg}. Denoting the singular value decomposition of the negative gradient by $ -\nabla F(\bV;\cX) = \bU \bSigma \bW^{T} $, Theorem 2.3 of~\cite{edelman1999geometry} states that the geodesic starting at $\bV(0) = \bV$ with $\frac{\di}{\di t} {\bV}(t)|_{t=0} = -\nabla F(\bV;\cX)$ is
\begin{equation}\label{eq:gradgeo}
    \bV(t) = \bV \bW \cos(\bSigma t) \bW^{T} + \bU \sin(\bSigma t) \bW^{T}.
\end{equation}
Here, $\sin$ and $\cos$ are the typical matrix $\sin$ and $\cos$, defined by their corresponding power series.

We develop a geodesic gradient descent method using the construction in~\eqref{eq:gradgeo}. At a point $\bV^k$, we may choose a value of $t$ and move along the geodesic to the next iterate. For a sequence of step-sizes $(t^k)_{k \in \nats}$, the sequence of subspaces is defined recursively by $\bV^{k+1} = \bV^k(t^k)$. The full algorithm with a specific choice of step-size is given in Algorithm~\ref{alg:ggd} and is referred to as GGD. The complexity of this algorithm is $O(TNDd)$, where $T$ is the number of iterations. Note that we use a piecewise constant scheme for the step-sizes, which starts with step-size $t^1 = s$, and every $K$ iterations the step-size is shrunk by a factor of $1/2$.

\begin{algorithm} [ht!]
    \caption{RSR by Geodesic Gradient Descent (GGD)}\label{alg:ggd}
	\begin{algorithmic}[1]
        \State \textbf{Input:} dataset $\cX$, subspace dimension $d$, initial step-size $s$, tolerance $\tau$, constant step interval length $K$
        \State \textbf{Output:} $\bV^* \in O(D,d)$, whose columns span the robust subspace
      \State $\bV^1 = PCA(\bX,d)$
      \State $k=1$, $s = 1$
      \While{$\theta_1(\bV^{k},\bV^{k-1}) > \tau$ or $k=1$}
      	\State Compute $\nabla F(\bV^k; \cX)$ by \eqref{eq:derF} and \eqref{eq:grad}
	    \State Compute the SVD $\bU^k \bSigma^k \bW^k = -\nabla F(\bV^k;\cX)$
      	\State $s^k = s / 2^{\lfloor k / K \rfloor}$
        	\State $\bV^{k+1} = \bV^k \bW^k \cos(\bSigma^k s^k) \bW^{kT} + \bU^k \sin(\bSigma^k s^k) \bW^{kT}$ \Comment{\eqref{eq:gradgeo}}
        	\State $k = k+1$
      \EndWhile
	\end{algorithmic}
\end{algorithm}

The next section provides convergence guarantees for GGD with both the piecewise constant step-size of Algorithm \ref{alg:ggd} and with step-size $s/\sqrt{k}$. We later compare these two examples of step-size on a simulated dataset in Figure~\ref{fig:conv_sim}. The factor of $1/2$ in line 8 is called the shrink-factor and could be chosen to be any fraction. Later, in our experiments, we display the convergence properties of GGD with two choices of this factor.

\subsection{Local Convergence of GGD Under Stability}
\label{subsec:convergence}

In this section, we give convergence guarantees for GGD. Theorem~\ref{thm:conv} shows that the convergence is sublinear under the deterministic conditions of Theorem~\ref{thm:landscape} for step-size $t^k = s/\sqrt{k}$. Then, Theorem~\ref{thm:linconv} shows that the convergence of Algorithm~\ref{alg:ggd} is linear under a slightly stronger assumption. These results are all for the noiseless RSR setting. However, all of these results can be extended to the noisy RSR setting in a simple fashion using the notions described in Section~\ref{subsec:noisystat}.

As a reminder, Theorem~\ref{thm:landscape} implies that if $\cS(\cX, L_*, \gamma)>0$ in the noiseless RSR setting then $L_*$ is the only limit point in ${B(L_*,\gamma)}$ for GGD. In other words, there is no need to worry about saddle points or non-optimal critical points in this neighborhood of $L_*$. In the following theorem, we give a general sublinear convergence bound for GGD in the local neighborhood considered in Theorem~\ref{thm:landscape}. This implies that the algorithm can exactly recover the underlying subspace in the noiseless RSR problem. The step-size used in this theorem is $t^k = s/\sqrt{k}$ at iteration $k$ instead of the piecewise constant scheme seen in Algorithm~\ref{alg:ggd}. The proof of this theorem is left to Appendix~\ref{app:suppconvres}.

\begin{theorem}[Noiseless Sublinear Convergence]\label{thm:conv}
    Suppose that $\cX$ is an inlier-outlier dataset with an underlying subspace $L_*$. Suppose also that there exists $0< \gamma<\pi/2$ such that $\cS(\cX,L_*,\gamma)>0$ and that the initial GGD iterate is $\bV^1 \in {B(L_*,\gamma)}$. Then, for sufficiently small $s$ as input (which may depend on $\cS(\cX,L_*,\gamma)$, $d$, $D$, $N_{\mathrm{in}}$, and $N_{\mathrm{out}}$), modified GGD with $t^k = s/\sqrt{k}$ converges to $L_*$ with rate $\theta_1(L_k,L_*) < C/\sqrt{k}$, for some constant $C$.
\end{theorem}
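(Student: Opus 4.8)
The plan is to treat GGD as a Riemannian subgradient method for a function possessing a \emph{sharp} minimizer at $L_*$, and to monitor the energy gap $F(\bV^k;\cX)-F(L_*;\cX)$ rather than the angle directly, converting to $\theta_1$ only at the very end. Two quantitative consequences of Theorem~\ref{thm:landscape} drive everything. First, a uniform lower bound on the gradient norm: the computation in~\eqref{eq:grassgeoderbd} shows that the unit-speed version of the special geodesic toward $L_*$ has (sub)derivative of $F$ at most $-\cS(\cX,L_*,\gamma)$, and since the Riemannian gradient realizes the steepest descent direction this forces
\begin{equation*}
\|\nabla F(\bV;\cX)\|_F \ge \cS(\cX,L_*,\gamma) \quad\text{for all } \bV \in B(L_*,\gamma)\setminus\{L_*\}.
\end{equation*}
Second, integrating the same subderivative bound along a chain of geodesics joining $L$ to $L_*$ (each leg collapsing a block of equal principal angles) yields a sharpness/error bound $F(L;\cX)-F(L_*;\cX)\ge \mu\,\theta_1(L,L_*)$ with $\mu\gtrsim\cS(\cX,L_*,\gamma)$, while the elementary Lipschitz estimate gives $F(L;\cX)-F(L_*;\cX)\le \Lambda\,\theta_1(L,L_*)$ in the other direction. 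I would also record the termwise bound $\|\nabla F(\bV;\cX)\|_F\le\sum_i\|\bx_i\|=:G$, immediate from~\eqref{eq:derF} and~\eqref{eq:grad}, so that $\Lambda\lesssim G$.

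Next I would establish feasibility, i.e.\ that the iterates remain in $B(L_*,\gamma)$ so the two bounds apply at every step. Since the geodesic~\eqref{eq:gradgeo} has initial velocity $-\nabla F(\bV^k;\cX)$, a second-order Taylor expansion along it gives the sufficient-decrease inequality
\begin{equation*}
F(\bV^{k+1};\cX)\le F(\bV^k;\cX)-t^k\|\nabla F(\bV^k;\cX)\|_F^2+C_0\,(t^k)^2,
\end{equation*}
where $C_0$ absorbs the bounded sectional curvature of $G(D,d)$ and a Lipschitz constant for the gradient; for $s$ small the first-order term dominates and $F$ strictly decreases. Monotonicity of $F$, together with the fact from Theorem~\ref{thm:landscape} that $F$ strictly increases as one moves radially outward toward $\partial B(L_*,\gamma)$, confines the iterates to the connected sublevel component of $\{F\le F(\bV^1;\cX)\}$ containing $L_*$, which is star-shaped toward $L_*$ and hence contained in $B(L_*,\gamma)$; shrinking $s$ so that each step has arc length $t^kG\ll\gamma$ prevents a single step from jumping across the energy barrier.

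Inserting $\|\nabla F\|_F\ge\cS$ into the sufficient-decrease inequality, $F(\bV^k;\cX)-F(L_*;\cX)$ falls by at least $\tfrac12 t^k\cS^2$ per step whenever the step does not overshoot $L_*$; because $\sum_k t^k=\sum_k s/\sqrt k=\infty$ while $F$ is bounded below, this descent phase must terminate after finitely many steps with the iterate entering an $O(t^k)$-neighborhood of $L_*$. From there the analysis mirrors subgradient descent on the sharp model $|x|$: once $\theta_1(\Sp(\bV^k),L_*)=O(t^k)$, a step of arc length at most $t^kG$ overshoots by at most $O(t^kG)$, so the $\Lambda$-Lipschitz bound keeps $F(\bV^{k+1};\cX)-F(L_*;\cX)=O(t^k)$, and since $t^k=s/\sqrt k$ decays slowly ($t^{k+1}/t^k\to1$) this $O(t^k)$ floor is maintained for all larger $k$. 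Hence $F(\bV^k;\cX)-F(L_*;\cX)=O(1/\sqrt k)$, and the sharpness bound converts this to $\theta_1(L_k,L_*)\le(F(\bV^k;\cX)-F(L_*;\cX))/\mu=C/\sqrt k$, which is the claim.

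The hard part will be the per-step sufficient-decrease inequality and the overshoot bookkeeping near $L_*$, precisely because $F$ is nonsmooth. Away from $L_*$ the energy is smooth along the relevant geodesic segment and the Taylor argument is routine, but the directions along which $F$ is nondifferentiable (where some $\bx_i$ enters the moving subspace) must be handled through the subdifferential, exactly as in the extension argument at the close of the proof of Theorem~\ref{thm:landscape}; one must verify that such crossings can only aid the decrease or add a controlled $O((t^k)^2)$ term. Pinning down the curvature constant $C_0$ and the overshoot floor, and checking that ``sufficiently small $s$'' simultaneously guarantees feasibility, strict descent, and the matching of the floor to the $1/\sqrt k$ schedule, is where the real work lies; the rate itself then follows from the sharp-minimum structure that Theorem~\ref{thm:landscape} hands us.
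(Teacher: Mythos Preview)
Your approach is genuinely different from the paper's and, as written, has a real gap at exactly the point you flag as ``the hard part.'' The paper does \emph{not} monitor the energy at all in the proof of Theorem~\ref{thm:conv}; it works directly with the principal angle via $\sigma_d(\bV^{*T}\bV^k)$. Writing the geodesic update as $\bV^{k+1}=\bV^k - t^k\nabla F(\bV^k;\cX)+O((t^k)^2)$ and applying a first-order singular-value perturbation result (S\"oderstr\"om's Result~4.1) to $\bV^{*T}\bV^{k+1}$, the paper obtains
\[
\sigma_d(\bV^{*T}\bV^{k+1})-\sigma_d(\bV^{*T}\bV^k)\ \ge\ t^k\sin(\theta_1^k)\,C_1 - C_2(t^k)^2,
\]
where $C_1\ge\cS(\cX,L_*,\gamma)$ comes from the same inlier/outlier splitting as in~\eqref{eq:grassgeoderbd}, and, crucially, $C_2$ is a \emph{uniform} constant coming only from the boundedness of $\|\nabla F\|$ and compactness of $O(D,d)$---no second-order information about $F$ is needed. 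Feasibility ($\bV^k\in B(L_*,\gamma)$) then follows directly from monotone decrease of the angle, and the two-case argument ($\theta_1^k\gtrless \gamma/(2\sqrt{k})$) finishes the $O(1/\sqrt{k})$ rate.

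Your sufficient-decrease inequality $F(\bV^{k+1})\le F(\bV^k)-t^k\|\nabla F\|_F^2+C_0(t^k)^2$ with a uniform $C_0$ is precisely what fails without extra hypotheses: the Hessian of $\|\bQ_L\bx_i\|$ blows up like $1/\|\bQ_L\bx_i\|$ whenever the moving subspace passes near any data point, not just near $L_*$, so the ``controlled $O((t^k)^2)$'' remainder is not available in general. The paper's own Lemma~\ref{lemma:applip} establishes a decrease inequality of this flavor, but only under the additional strong gradient condition~\eqref{eq:gradconds}, which is assumed for Theorem~\ref{thm:linconv} and explicitly \emph{not} for Theorem~\ref{thm:conv}. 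So your energy-based route would, at best, reprove Theorem~\ref{thm:linconv}'s hypotheses rather than Theorem~\ref{thm:conv}'s. The fix is to abandon the descent-lemma template and track $\theta_1$ (equivalently $\sigma_d(\bV^{*T}\bV^k)$) directly, where the second-order remainder is benign for structural reasons unrelated to the smoothness of $F$.
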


While the rate $O(1/\sqrt{k})$ matches typical results in non-smooth optimization, faster convergence is desirable.
With the piecewise constant step-size given in Algorithm~\ref{alg:ggd} and a further deterministic condition, GGD linearly converges to the underlying subspace $L_*$.
\begin{theorem}[Noiseless Linear Convergence]\label{thm:linconv}
    Suppose that $\cX$ is an inlier-outlier dataset with an underlying subspace $L_*$. Suppose also that there exists $0< \gamma<\pi/2$ such that $\cS(\cX,L_*,\gamma)>0$ and that the initial GGD iterate is $\bV^1 \in {B(L_*,\gamma)}$. Assume further that
     \begin{align} \label{eq:gradconds}
        \inf_{L \in B(L_*,\gamma) \setminus \{L_*\} } \frac{1}{4} \left| \frac{\di}{\di t}F(L(t);\cX)|_{t=0} \right| &>  \sup_{L \in B(L_*, \gamma) \setminus \{ L_*\}} \sum_{ \cX \cap L } 2 \| \bx_i \|,
    \end{align}
    where $L(t)$ is a geodesic parameterized by arclength from $L$ through $L_*$.
    Then, for sufficiently large $K$ and sufficiently small $s$ as input (which may depend on $\cS(\cX,L_*,\gamma)$, $d$, $D$, $N_{\mathrm{in}}$, and $N_{\mathrm{out}}$), the sequence generated by Algorithm~\ref{alg:ggd} converges linearly to $L_*$.
\end{theorem}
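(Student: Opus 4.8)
The plan is to exploit the piecewise constant step-size scheme together with the landscape result of Theorem~\ref{thm:landscape} to establish a geometric contraction of the principal angle $\theta_1(L_k, L_*)$ over each constant-step epoch. The starting observation is that Theorem~\ref{thm:landscape} already guarantees, under $\cS(\cX,L_*,\gamma)>0$, that every $L \in B(L_*,\gamma)\setminus\{L_*\}$ admits a geodesic direction of strictly decreasing energy with subderivative bounded above by $-\cS(\cX,L_*,\gamma)<0$. The additional hypothesis~\eqref{eq:gradconds} is the key upgrade: it controls the nonsmoothness of $F$ caused by data points lying exactly on the current iterate's subspace. Specifically, the term $\sum_{\cX\cap L} 2\|\bx_i\|$ bounds the discrepancy between the true directional behavior of the energy along the gradient geodesic and the linearized (first-order) decrease predicted by the subderivative; condition~\eqref{eq:gradconds} says this discrepancy is dominated by a quarter of the directional derivative magnitude, so a genuine decrease persists over a finite step.

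First I would show that, with step-size $s^k$ fixed at some value $s'$ throughout an epoch (a block of $K$ iterations between shrinkings), there is a Lipschitz-type bound on the gradient geodesic~\eqref{eq:gradgeo} that lets me lower-bound the per-step decrease in $\theta_1(L_k,L_*)$ by a constant proportional to $s' \cdot \cS(\cX,L_*,\gamma)$, provided $s'$ is small enough that the second-order (curvature) terms in the matrix $\sin/\cos$ expansion of~\eqref{eq:gradgeo} and the error from points on $L_k$ (controlled by~\eqref{eq:gradconds}) are both absorbed. The role of $K$ being large is to ensure that within a single epoch the iterates make definite progress toward $L_*$ before the step-size halves; the role of $s$ being small is to guarantee the iterates never overshoot out of $B(L_*,\gamma)$ and that the linearization remains valid. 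I would then argue that once the iterate enters a ball $B(L_*,\gamma/2^{j})$ commensurate with the current step-size $s/2^{j}$, halving the step-size allows the same argument to run at the next finer scale, yielding $\theta_1(L_{k},L_*) \lesssim 2^{-\lfloor k/K\rfloor}$, which is linear (geometric) convergence in $k$.

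The main obstacle I anticipate is handling the nonsmoothness rigorously: the energy $F$ is only subdifferentiable where data points lie on the iterate subspace, and along the discrete gradient geodesic the set $\cX\cap L_k$ can change. The subderivative bound from Theorem~\ref{thm:landscape} is a purely infinitesimal ($t=0$) statement, whereas a finite gradient step requires integrating the directional derivative over $t \in (0,s^k)$, during which points may cross into or out of the subspace. This is precisely where~\eqref{eq:gradconds} must be invoked: I would bound the total variation of the subderivative along the step by the sum $\sum_{\cX\cap L}2\|\bx_i\|$ over the relevant crossing points, and condition~\eqref{eq:gradconds} ensures this variation cannot wipe out the guaranteed first-order decrease. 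Making this bookkeeping uniform over $B(L_*,\gamma)\setminus\{L_*\}$ — so that a single $s$ and $K$ work at all scales — is the delicate part, and I expect it to require a careful interplay between the infimum on the left-hand side of~\eqref{eq:gradconds} and the supremum of the on-subspace mass on the right.

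A secondary technical point is relating the gradient geodesic~\eqref{eq:gradgeo}, which follows the negative Riemannian gradient direction, to the specific angle-decreasing geodesic~\eqref{eq:specgeo} used in the proof of Theorem~\ref{thm:landscape}. These are generally different geodesics, so I would need to verify that the negative gradient has nontrivial component along the direction of decreasing $\theta_1$, with the component magnitude controlled below by $\cS(\cX,L_*,\gamma)$; this should follow by projecting the gradient subderivative bound onto the tangent directions that reduce the principal angles and checking that the orthogonal (angle-preserving) components do not inflate $\theta_1$ faster than the decrease, again for $s$ sufficiently small.
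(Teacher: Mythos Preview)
Your proposal captures the correct high-level mechanism---piecewise constant steps drive the iterate into successively smaller balls around $L_*$, and the strong gradient condition~\eqref{eq:gradconds} is what prevents the nonsmoothness from destroying the per-step gain---but the route you sketch differs from the paper's in a way worth noting. You work directly with $\theta_1(L_k,L_*)$ as the quantity to contract. The paper instead uses the energy gap $F(L_k;\cX)-F(L_*;\cX)$ as the Lyapunov function: it first proves (Lemma~\ref{lemma:applip}) that with the \emph{ideal} step-size $t^k=c\,\theta_1(L_k,L_*)$ the energy drops by at least $\tfrac{c\theta_1}{2}\bigl|\tfrac{\di}{\di t}F(L(t);\cX)|_{t=0}\bigr|$, then sandwiches the energy gap between multiples of $\theta_1$ (Lemmas~\ref{lemma:decrbd} and~\ref{lemma:gradnormbd}) to obtain a multiplicative contraction $F(L_{k+1})-F(L_*)\le(1-C_2)(F(L_k)-F(L_*))$, and only at the end reconciles this ideal step-size with the piecewise constant scheme. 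The payoff of the paper's route is that energy decrease along the negative-gradient geodesic is automatic, so the ``secondary technical point'' you flag---relating the gradient geodesic to the angle-decreasing geodesic of~\eqref{eq:specgeo}---never has to be confronted head-on; your direct-$\theta_1$ approach would indeed have to carry out that projection argument carefully, essentially reproducing the singular-value perturbation analysis of Theorem~\ref{thm:conv} at every scale.

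One point where your interpretation is slightly imprecise: you describe~\eqref{eq:gradconds} as controlling nonsmoothness from points \emph{lying exactly on} the current subspace. In the paper's Lemma~\ref{lemma:applip} the condition is used to control points that come \emph{close} to the subspace anywhere along the finite step (the set $\cG$ of points with $\dist(\bx,L(t))/\|\bx\|\le\sqrt{c}\,\theta_1$ for some $t$), because for those points the second derivative $\tfrac{\di^2}{\di t^2}\dist(\bx,L(t))$ blows up like $\|\bx\|^2/\dist(\bx,L(t))$ and the first-derivative change can only be bounded crudely by $2\|\bx\|$. For small enough $c$, such points must lie on some $L\in B(L_*,\gamma)\setminus\{L_*\}$, which is why the right-hand side of~\eqref{eq:gradconds} takes that form. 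Framing this as an ``approximate Lipschitz'' property of the directional derivative, rather than a subdifferentiability issue at $t=0$, is closer to how it is actually deployed.
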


\begin{remark}\label{remark:extendnoise}
    As mentioned earlier, Theorems~\ref{thm:conv} and~\ref{thm:linconv} can also be extended to the noisy RSR setting with more complicated statements. Indeed, these extensions follow the same ideas as Theorem~\ref{thm:landscapenoise} (which extends Theorem~\ref{thm:landscape} to the noisy RSR setting).
\end{remark}

We remark that the restriction in~\eqref{eq:gradconds} can be weakened, although it results in a more complicated theorem statement: for clarity, we show the simpler version in the theorem statement.  We refer to~\eqref{eq:gradconds} as the strong gradient condition.
In general, the sum in the right hand side of~\eqref{eq:gradconds} only contains a few points when the inliers and outliers are not too linearly dependent. For example, consider the case of inliers lying in general position within the subspace and outliers lying in general position. This is the case if all $D$-subsets of $\cX_{\mathrm{out}}$ are linearly independent and all $d$-subsets of $\cX_{\mathrm{in}}$ are linearly independent. Under this assumption, the right hand side contains less than $2(d-1)$ points. On the other hand, the left hand side may be sufficiently large for a wide range of statistical models of data.  For more interpretation of this condition for a specific model of data, we point the reader to Lemma~\ref{lemma:linconvmod}, where we demonstrate a simple way in which the condition might hold.

We are not able to give an explicit bound on the rate of convergence factor, which depends on certain statistical characteristics of the data. While we do not currently have estimates of this factor and it may depend on $N$, $D$, and $d$, we expect this dependence to not be too bad, especially based on our numerical experiments. 

\begin{proof}[Proof of Theorem~\ref{thm:linconv}]
    The proof of this theorem is a consequence of the following lemmas. The proof of these lemmas is deferred to Appendix~\ref{sec:lemmaproof}.

The first lemma locally bounds above the increase in cost around $L_*$.
    \begin{lemma}\label{lemma:decrbd}
        If $\cS(\cX, L_*, \gamma) > 0$, then
\begin{align}
    F(L;\cX)-F(L_*;\cX)< 2 \theta_1(L,L_*)  \sum_{\cX_{\mathrm{in}}} \|\bx_i\| , \ \forall L \in B(L_*,\gamma).
\end{align}
\end{lemma}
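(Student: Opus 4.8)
The plan is to split the cost difference into an inlier part and an outlier part and bound each by $\theta_1(L,L_*)\sum_{\cX_{\mathrm{in}}}\|\bx_i\|$. Since every inlier lies on $L_*$, we have $\bQ_{L_*}\bx_i = \bzero$ for $\bx_i \in \cX_{\mathrm{in}}$, so $F(L_*;\cX) = F(L_*;\cX_{\mathrm{out}})$ and hence
\begin{equation*}
F(L;\cX) - F(L_*;\cX) = \sum_{\cX_{\mathrm{in}}} \|\bQ_L \bx_i\| + \Big( F(L;\cX_{\mathrm{out}}) - F(L_*;\cX_{\mathrm{out}}) \Big).
\end{equation*}
The first, inlier, term is immediate: since $\bx_i \in L_*$, the angle it makes with $L$ is at most the largest principal angle $\theta_1(L,L_*)$, so $\|\bQ_L\bx_i\| \le \sin(\theta_1(L,L_*))\|\bx_i\| \le \theta_1(L,L_*)\|\bx_i\|$, and summing gives the bound $\theta_1(L,L_*)\sum_{\cX_{\mathrm{in}}}\|\bx_i\|$ for this term.

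For the outlier term I would integrate along the geodesic $\Gamma$ joining $L_*$ to $L$, parameterized by arclength with respect to the largest-principal-angle metric of Section~\ref{sec:prelim}, so that $\theta_1(\Gamma(u),L_*)=u$ for $u\in[0,\theta_1(L,L_*)]$ and each $\Gamma(u)$ lies in $B(L_*,\gamma)$. Because $F(\,\cdot\,;\cX_{\mathrm{out}})$ is Lipschitz on $G(D,d)$ and $\Gamma$ is smooth, $u\mapsto F(\Gamma(u);\cX_{\mathrm{out}})$ is absolutely continuous and equals the integral of its almost-everywhere derivative; at almost every $u$ the subspace $\Gamma(u)$ avoids the finitely many outliers, so that derivative is the Riemannian inner product $\langle \nabla F(\Gamma(u);\cX_{\mathrm{out}}), \dot\Gamma(u)\rangle$. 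By the duality of the spectral and nuclear norms this is at most $\|\nabla F(\Gamma(u);\cX_{\mathrm{out}})\|_2\,\|\dot\Gamma(u)\|_* = \cA(\cX_{\mathrm{out}},\Gamma(u))\,\|\dot\Gamma(u)\|_*$. Writing $\theta_1\ge\cdots\ge\theta_k$ for the principal angles between $L$ and $L_*$, the velocity has constant singular values $\theta_j/\theta_1(L,L_*)$, so $\|\dot\Gamma(u)\|_* = \big(\sum_j \theta_j\big)/\theta_1(L,L_*)$; bounding $\cA$ by $\sup_{B(L_*,\gamma)}\cA < \cos(\gamma)\,\cP(\cX_{\mathrm{in}})$, which is exactly where the hypothesis $\cS(\cX,L_*,\gamma)>0$ enters, yields
\begin{equation*}
F(L;\cX_{\mathrm{out}}) - F(L_*;\cX_{\mathrm{out}}) < \cos(\gamma)\,\cP(\cX_{\mathrm{in}})\sum_j \theta_j.
\end{equation*}

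The step I expect to be the crux is converting this last expression into $\theta_1(L,L_*)\sum_{\cX_{\mathrm{in}}}\|\bx_i\|$, i.e.\ absorbing the apparent factor of $d$ coming from $\sum_j\theta_j \le d\,\theta_1(L,L_*)$ in the nuclear norm. The observation that saves the constant is that $\sum_{\cX_{\mathrm{in}}}\bx_i\bx_i^T/\|\bx_i\|$ is supported on the $d$-dimensional subspace $L_*$, so its trace $\sum_{\cX_{\mathrm{in}}}\|\bx_i\|$ is at least $d$ times its smallest nonzero eigenvalue; this gives $\cP(\cX_{\mathrm{in}}) \le \tfrac1d\sum_{\cX_{\mathrm{in}}}\|\bx_i\|$, and the factor $d$ cancels precisely. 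Combining, the outlier term is strictly less than $\cos(\gamma)\,\tfrac1d\big(\sum_{\cX_{\mathrm{in}}}\|\bx_i\|\big)\cdot d\,\theta_1(L,L_*) \le \theta_1(L,L_*)\sum_{\cX_{\mathrm{in}}}\|\bx_i\|$, and adding the inlier bound produces the claimed factor of $2$. Strictness (for $L\ne L_*$) is inherited from the strict inequality $\sup_{B(L_*,\gamma)}\cA < \cos(\gamma)\cP(\cX_{\mathrm{in}})$ supplied by stability. The only technical points needing care are the absolute-continuity justification of the fundamental theorem of calculus across the finitely many nonsmooth parameter values, and confirming the exact singular values of the geodesic velocity in the chosen metric.
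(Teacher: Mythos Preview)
Your proof is correct and follows the same overall decomposition as the paper: bound the inlier contribution directly by $\theta_1\sum_{\cX_{\mathrm{in}}}\|\bx_i\|$, and control the outlier contribution by integrating along the geodesic from $L_*$ to $L$ and invoking the stability hypothesis.

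The difference lies in how the outlier integral is converted into $\theta_1\sum_{\cX_{\mathrm{in}}}\|\bx_i\|$. The paper bounds the integrand by $\sigma_1(\nabla F(L(t);\cX_{\mathrm{out}}))$ directly, then uses $\cS>0$ together with the chain $\lambda_d \le \lambda_1 \le \max_{\bv}\sum_{\cX_{\mathrm{in}}}|\bv^T\bx_i| \le \sum_{\cX_{\mathrm{in}}}\|\bx_i\|$. Your route is more careful about the operator--nuclear duality: you keep track of $\|\dot\Gamma\|_* = \sum_j\theta_j/\theta_1 \le d$, and then absorb the factor $d$ via the trace inequality $\cP(\cX_{\mathrm{in}}) = \lambda_d\!\left(\sum_{\cX_{\mathrm{in}}}\bx_i\bx_i^T/\|\bx_i\|\right) \le \tfrac{1}{d}\sum_{\cX_{\mathrm{in}}}\|\bx_i\|$. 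This is arguably cleaner, since the paper's step $|\tfrac{\di}{\di t}F(L(t);\cX_{\mathrm{out}})|\le\sigma_1(\nabla F)$ tacitly treats the geodesic velocity as having unit nuclear norm rather than unit spectral norm; your trace argument makes the constant $2$ fully rigorous without that shortcut. The technical caveats you flag (absolute continuity across finitely many nonsmooth parameter values, singular values of the arclength-parametrized velocity) are minor and resolve exactly as you indicate.
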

Notice that $\theta_1(L,L_*)$ is a measure of distance between $L$ and $L_*$. The next lemma bounds the magnitude of the gradient in $B(L_*, \gamma) \setminus{L_*}$.
\begin{lemma}\label{lemma:gradnormbd}
    If $\cS(\cX, L_*, \gamma) > 0$, then, for some $C_1 > 0$ (that depends on the data),
    \begin{align}
    \left| \frac{\di}{\di t}F(L(t);\cX)|_{t=0} \right| &> C_1 \sum_{\cX_{\mathrm{in}}} \| \bx_i \| , \ \forall L \in B(L_*, \gamma) \setminus{L_*},
\end{align}
where $L(t)$ is a geodesic parameterized by arclength from $L$ through $L_*$.
\end{lemma}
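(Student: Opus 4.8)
The plan is to reduce the statement to the geodesic subderivative computation already carried out in the proof of Theorem~\ref{thm:landscape}, and then re-express the resulting bound in the required form $C_1\sum_{\cX_{\mathrm{in}}}\|\bx_i\|$. Fix $L\in B(L_*,\gamma)\setminus\{L_*\}$ with principal angles $\theta_1\geq\dots\geq\theta_d$ relative to $L_*$, principal vectors $\bv_j\in L$ and $\by_j\in L_*$, and complementary vectors $\bu_j$. First I would take $L(t)$ to be the arclength geodesic that rotates the largest-angle direction(s) toward $L_*$, exactly as in~\eqref{eq:specgeo}, so that $\theta_1(L(t),L_*)$ decreases at unit rate, and then evaluate the geodesic subderivative~\eqref{eq:geoderiv} at $t=0$, splitting the sum into an inlier part and an outlier part.

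Next I would bound the two parts separately, reusing the two elementary facts from the proof of Theorem~\ref{thm:landscape}. For an inlier $\bx_i\in L_*$ one has $(\bv_1^T\bx_i)(\bu_1^T\bx_i)=\cos\theta_1\sin\theta_1\,(\by_1^T\bx_i)^2\geq 0$, and, since every principal angle is at most $\theta_1$, the uniform bound $\|\bQ_L\bx_i\|\leq\sin\theta_1\,\|\bx_i\|$. Dividing and summing over $\cX_{\mathrm{in}}$, the inlier contribution to $\bigl|\tfrac{\di}{\di t}F(L(t);\cX)|_{t=0}\bigr|$ is at least $\cos\theta_1\sum_{\cX_{\mathrm{in}}}(\by_1^T\bx_i)^2/\|\bx_i\|\geq\cos\gamma\,\cP(\cX_{\mathrm{in}})$, where the last inequality uses $\by_1\in L_*\cap S^{D-1}$ together with the definition~\eqref{eq:perm} of the permeance. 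The decisive point is that the factor $\sin\theta_1$ cancels, so this lower bound is uniform over the neighborhood and does not degenerate as $L\to L_*$. For the outliers, the relevant term is (up to sign) a single entry $\bu_1^T\nabla F(L;\cX_{\mathrm{out}})\bv_1$ of the gradient expressed in the orthonormal frame, whose magnitude is at most $\|\nabla F(L;\cX_{\mathrm{out}})\|_2=\cA(\cX_{\mathrm{out}},L)\leq\sup_{L'\in B(L_*,\gamma)}\cA(\cX_{\mathrm{out}},L')$.

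Combining these gives $\bigl|\tfrac{\di}{\di t}F(L(t);\cX)|_{t=0}\bigr|\geq\cos\gamma\,\cP(\cX_{\mathrm{in}})-\sup_{L'\in B(L_*,\gamma)}\cA(\cX_{\mathrm{out}},L')=\cS(\cX,L_*,\gamma)$, which is positive by hypothesis and, being a fixed number, is independent of $L$. Since the stability statistic $\cS(\cX,L_*,\gamma)>0$ and $\sum_{\cX_{\mathrm{in}}}\|\bx_i\|>0$ (positivity of $\cP$ forces nonzero inliers), any constant with $0<C_1<\cS(\cX,L_*,\gamma)/\sum_{\cX_{\mathrm{in}}}\|\bx_i\|$ yields the claimed strict inequality uniformly on $B(L_*,\gamma)\setminus\{L_*\}$, so one may simply set $C_1=\cS(\cX,L_*,\gamma)/\bigl(2\sum_{\cX_{\mathrm{in}}}\|\bx_i\|\bigr)$.

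The hard part will be controlling the outlier contribution by a single copy of the alignment $\cA$, and this is what forces the choice of geodesic. If one instead differentiates along the full geodesic running straight from $L$ to $L_*$, then every principal direction rotates, the $j$th with arclength speed $\theta_j/\theta_1$, and the outlier term is only bounded by $\bigl(\sum_j\theta_j/\theta_1\bigr)\sup\cA$, i.e.\ up to a factor of order $d$ larger, while the small-angle directions contribute negligibly to the permeance gain; recovering a positive lower bound in that case would need an assumption stronger than $\cS(\cX,L_*,\gamma)>0$. Restricting the geodesic to the largest-angle direction(s), as in~\eqref{eq:specgeo}, is precisely what keeps the outlier term pinned to one frame entry and hence bounded by $\cA$, so that $\cS(\cX,L_*,\gamma)>0$ alone suffices. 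The only remaining subtlety, uniformity of the bound as $L\to L_*$, is already resolved by the cancellation of $\sin\theta_1$ observed above.
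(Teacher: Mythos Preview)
Your proposal is correct and follows essentially the same route as the paper: split the geodesic subderivative into inlier and outlier parts, lower-bound the inlier contribution by $\cos\gamma\,\cP(\cX_{\mathrm{in}})$ via the cancellation of $\sin\theta_1$, upper-bound the outlier contribution by $\sup_{L'}\cA(\cX_{\mathrm{out}},L')$, and conclude $\bigl|\tfrac{\di}{\di t}F(L(t);\cX)|_{t=0}\bigr|\geq\cS(\cX,L_*,\gamma)$; the paper then simply takes $C_1=\cS(\cX,L_*,\gamma)/\sum_{\cX_{\mathrm{in}}}\|\bx_i\|$, while you halve this to secure the strict inequality. Your explicit insistence on the geodesic~\eqref{eq:specgeo} (rotating only the largest-angle directions) and your final paragraph explaining why the full geodesic would lose a factor of order $\sum_j\theta_j/\theta_1$ on the outlier side are more careful than the paper's terse write-up, which leaves the choice of geodesic implicit, but the underlying argument is the same.
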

Finally, the third lemma bounds the decrease in cost between consecutive iterates.
\begin{lemma}\label{lemma:applip}
    If $L_k \in B(L_*,\gamma)$ and~\eqref{eq:gradconds} holds, then there exists $c_0>0$ such that for each step-size choice $t^k=c\theta_1(L_k,L_*)$ with $c<c_0$,
    \begin{align}
        F(L_{k})-F(L_{k+1})) \geq \frac{c\theta_1(L_k,L_*)}{2} \left| \frac{\di}{\di t}F(L(t);\cX)|_{t=0} \right|,
    \end{align}
    where $L(t)$ is a geodesic parameterized by arclength from $L_k$ through $L_*$.
\end{lemma}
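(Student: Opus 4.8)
The plan is to lower-bound the one-step decrease $F(L_k)-F(L_{k+1})$ by separating the data into the points lying in $L_k$ and those that do not, since the former are precisely the source of non-smoothness that the strong gradient condition~\eqref{eq:gradconds} is built to control. Write $L_{k+1}=\Sp(\bV^k(t^k))$ for the GGD iterate produced by the gradient geodesic~\eqref{eq:gradgeo} with step-size (arclength) $t^k=c\,\theta_1(L_k,L_*)$, and, using $\|\bQ_{L_k}\bx_i\|=0$ for $\bx_i\in L_k$, decompose
\[
F(L_k)-F(L_{k+1})=\sum_{\bx_i\notin L_k}\bigl(\|\bQ_{L_k}\bx_i\|-\|\bQ_{L_{k+1}}\bx_i\|\bigr)-\sum_{\bx_i\in L_k}\|\bQ_{L_{k+1}}\bx_i\|.
\]
The first sum is the smooth part (each summand is differentiable in $t$ near $0$) and the second is the non-smooth part, which is nonpositive. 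The crucial structural observation is that the step is taken along the \emph{gradient} geodesic, whereas the claimed rate $D_*:=\bigl|\frac{\di}{\di t}F(L(t);\cX)|_{t=0}\bigr|$ is measured along the geodesic $L(t)$ toward $L_*$; the two will be reconciled by a steepest-descent inequality.

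For the smooth part I would expand to first order along~\eqref{eq:gradgeo}. From~\eqref{eq:grad} and~\eqref{eq:derF} the decrease rate of the smooth part per unit arclength is $\|\nabla F(\bV^k;\cX)\|_F$, since the geodesic follows the (horizontal) steepest-descent direction $-\nabla F/\|\nabla F\|_F$. Because this direction minimizes the directional derivative over all unit tangent directions, Cauchy--Schwarz in the Frobenius inner product applied to the unit tangent $\boldsymbol{H}_*$ of $L(t)$ gives $\|\nabla F(\bV^k;\cX)\|_F\ge|\langle\nabla F,\boldsymbol{H}_*\rangle|=D_*$, where both quantities consistently exclude the points of $\cX$ in $L_k$ (matching the subderivative convention of~\eqref{eq:geoderiv}). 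With a uniform second-order (Lipschitz) bound $C$ on the smooth part over $B(L_*,\gamma)$, this yields
\[
\sum_{\bx_i\notin L_k}\bigl(\|\bQ_{L_k}\bx_i\|-\|\bQ_{L_{k+1}}\bx_i\|\bigr)\ \ge\ t^k\,\|\nabla F(\bV^k;\cX)\|_F-\tfrac{C}{2}(t^k)^2\ \ge\ t^k D_*-\tfrac{C}{2}(t^k)^2.
\]

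For the non-smooth part I would use the explicit geodesic~\eqref{eq:gradgeo}: for $\bx_i\in L_k$, writing $\bx_i=\bV^k\ba_i$ and $b_{ij}$ for the coordinates of $\bW^{kT}\ba_i$, a direct computation gives $\|\bQ_{L_{k+1}}\bx_i\|=\bigl(\sum_j\sin^2(\sigma_j t^k)\,b_{ij}^2\bigr)^{1/2}\le\sin(t^k)\|\bx_i\|\le t^k\|\bx_i\|$, where the $\sigma_j$ are the singular values of $-\nabla F(\bV^k;\cX)$ and $t^k$ is the largest principal angle of the step. Thus the non-smooth loss is at least $-t^k\sum_{\bx_i\in L_k}\|\bx_i\|$, and here~\eqref{eq:gradconds} enters: it forces $\sum_{\bx_i\in L_k}\|\bx_i\|<\tfrac18 D_*$, so this loss is at most $\tfrac18 t^k D_*$. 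Combining,
\[
F(L_k)-F(L_{k+1})\ \ge\ \tfrac{7}{8}\,t^k D_*-\tfrac{C}{2}(t^k)^2.
\]
Since Lemma~\ref{lemma:gradnormbd} bounds $D_*$ below and $t^k=c\,\theta_1(L_k,L_*)\le c\pi/2$, choosing $c_0$ small enough that $\tfrac{C}{2}t^k\le\tfrac38 D_*$ for all $c<c_0$ gives exactly $F(L_k)-F(L_{k+1})\ge\tfrac12 t^k D_*$, which is the claim.

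I expect the main obstacle to be the non-smooth term, not the curvature term. The curvature contribution is second order in $t^k$ and is killed by shrinking $c$; but the loss from the points in $L_k$ is \emph{linear} in $t^k$, of the same order as the leading decrease, so a small step-size gives no help against it. The only leverage is the constant in~\eqref{eq:gradconds}: the factors $\tfrac14$ and $2$ are calibrated precisely so that this loss consumes at most $\tfrac18$ of the first-order decrease, leaving the margin $\tfrac38 t^k D_*$ needed to absorb the curvature and reach the target $\tfrac12 t^k D_*$. Two secondary technical points deserve care: establishing a genuinely uniform second-order constant $C$, since the denominators $\|\bQ_{L(t)}\bx_i\|$ can be small for smooth points lying near (but not in) $L_k$, which may require grouping such near-degenerate points with the points of $\cX$ in $L_k$; and verifying the steepest-descent inequality $\|\nabla F\|_F\ge D_*$ at the level of subderivatives, so that both sides exclude the points in $L_k$ in the same way.
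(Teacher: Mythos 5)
Your skeleton---first-order decrease along the step, minus a non-smooth loss charged to \eqref{eq:gradconds}, minus a curvature term killed by shrinking $c$---has the right shape, and your constant accounting (the non-smooth loss eating at most a quarter of the first-order decrease, the curvature eating another quarter-plus, leaving the factor $\tfrac12$) matches the paper's budget. But the issue you relegate to a ``secondary technical point'' is a genuine gap, and it is precisely the difficulty the paper's proof is organized around: a uniform second-order constant $C$ on $B(L_*,\gamma)$ does not exist. Each term obeys only $\bigl|\tfrac{\di^2}{\di t^2}\dist(\bx,L(t))\bigr|\le 3\|\bx\|^2/\dist(\bx,L(t))$, so a point lying arbitrarily close to, but not in, $L_k$ makes your ``smooth part'' arbitrarily stiff; worse, such a point can cross the moving subspace during the step, flipping its derivative from $-\|\bx\|$ to $+\|\bx\|$ and producing a loss of order $2t^k\|\bx\|$---\emph{linear} in $t^k$, exactly like a point of $\cX\cap L_k$---which is not covered by your budget $\sum_{\cX\cap L_k}\|\bx_i\|$ and cannot be absorbed into $\tfrac{C}{2}(t^k)^2$ by shrinking $c$. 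This failure is unavoidable along the iteration: as $L_k\to L_*$ the inliers lie in $L_*$, not in $L_k$, and their relative distances to $L_k$ are of order $\theta_1(L_k,L_*)\to 0$, so any fixed $C$ fails uniformly in $k$, and the lemma must be applied at every iterate with one $c_0$.

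The paper supplies the missing mechanism. Working along the geodesic from $L_k$ to $L_{k+1}$ (thereby also sidestepping your gradient-direction-versus-$L_*$-direction reconciliation), it bounds the \emph{drift} of the derivative $\sum_i\tfrac{\di}{\di t}\dist(\bx_i,L(t))$ rather than invoking a global Taylor bound, after splitting $\cX$ by the relative threshold in \eqref{eq:Gset}: points with $\min_t\dist(\bx,L(t))/\|\bx\|>\sqrt{c}\,\theta_1(L_k,L_*)$ contribute drift at most $3\sqrt{c}\,\|\bx\|$ each---the threshold scales with $\theta_1(L_k,L_*)$ exactly as the step does, which is what restores uniformity in $k$---and this is absorbed by choosing $c_0$ as in \eqref{eq:c0small}; the remaining near-degenerate set $\cG$ loses at most $2\|\bx\|$ per point, and, for $c$ small, its points lie in a common subspace $L\in B(L_*,\gamma)\setminus\{L_*\}$, so their total is controlled by the \emph{supremum} in \eqref{eq:gradconds}. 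This is why \eqref{eq:gradconds} is stated with a supremum over all such $L$ rather than with $\cX\cap L_k$ alone, as your version would have it; your instinct to ``group near-degenerate points with the points in $L_k$'' is correct, but making it work requires the scale-relative threshold and charging the grouped points to that supremum, and as written your proof does not go through. (A smaller calibration point: with the paper's $\theta_1$-metric notion of arclength, the unit tangent toward $L_*$ can have Frobenius norm up to $\sqrt{d}$, so your Cauchy--Schwarz step $\|\nabla F\|_F\ge D_*$ needs the canonical normalization $\sum_j\theta_j^2=1$, the one used in \eqref{eq:secondder}, to avoid a $\sqrt{d}$ loss.)
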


Choosing the step-size $t^k = c \theta_1(L_k, L_*)$, with $c$ coming from Lemma~\ref{lemma:applip}, and combining the results of Lemmas~\ref{lemma:decrbd},~\ref{lemma:gradnormbd} and~\ref{lemma:applip}, we find that
\begin{align} \label{eq:combinelemmas}
    F(L_{k+1};\cX) - &F(L_{*};\cX) < F(L_{k};\cX) - F(L_{*};\cX)  - \frac{c\theta_1(L_k,L_*)}{2} \left| \frac{\di}{\di t}F(L(t);\cX)|_{t=0} \right| \\ \nonumber
    & \leq F(L_{k};\cX) - F(L_{*};\cX)  - \frac{C_1}{2}c \theta_1(L_k, L_*)  \left( \sum_{\cX_{\mathrm{in}}} \| \bx_i \| \right) \\ \nonumber
    & \leq F(L_{k};\cX) - F(L_{*};\cX)  - \frac{C_1 c}{4} \left( F(L_k;\cX) - F(L_*; \cX) \right) \\ \nonumber
    & \leq (1-C_2) (F(L_k; \cX) - F(L_*; \cX)) ,
\end{align}
where $0 < C_2 < 1$. Here, $(1-C_2)$ is the rate of convergence factor, which depends on $d$. Thus, if one could choose the step-size $t^k = c \theta_1(L_k, L_*)$, then the sequence of costs $F(L_k; \cX)$ would converge linearly to $F(L_*;\cX)$.

For all $L \in B(L_*, \gamma) \setminus \{ L_*\}$, letting $a= \theta_1(L, L_*)$, we find that
\begin{align}\label{eq:mingrad}
	F(L;\cX) - F(L_*; \cX) &= \int_{t=0}^{a} \frac{\di}{\di t} F(L(t); \cX) \di t \geq a \inf_{L' \in B(L_*, \gamma)} \left|\frac{\di}{\di t} F(L'(t); \cX)\right| \\ \nonumber
	&\geq a \cS(\cX, L_*, \gamma),
\end{align}
where $L(t)$ is the geodesic parameterized by arclength from $L_*$ through $L$ and $L'(t)$ is the geodesic parameterized by arclength from $L_*$ through $L'$. The last inequality follows from the argument in Theorem~\ref{thm:landscape}, and in particular from~\eqref{eq:grassgeoderbd}.
Thus, for $C_3 = 1/\cS(\cX, L_*, \gamma)$, where $0< C_3 < \infty$,~\eqref{eq:mingrad} implies that
\begin{equation}
    \theta_1(L_{k+1}, L_*) \leq C_3 (F(L_{k+1}; \cX) - F(L_*; \cX)).
\end{equation}
This means that linear convergence of the energy sequence, $(F(L_k; \cX))_{k \in \nats}$, gives linear convergence of the iterates, $(L_{k})_{k \in \nats}$.

So far, we have shown that there exists a $c$ such that choosing step-size $t^k = c \theta_1(L_k, L_*)$ leads to linear convergence of $L_k$ to $L_*$. However, this choice of step-size is purely theoretical, since in practice we would not know $\theta_1(L_k, L_*)$ at each iteration. We must now rectify this choice of step-size with that used in Algorithm~\ref{alg:ggd}.

Suppose a constant step-size $s$ and a constant $c$ satisfying the above argument. Then, the sequence $(L_k)_{k \in \nats}$ at least converges linearly to an element of the set $B(L_*, s/c)$ because~\eqref{eq:combinelemmas} holds as long as $t^k = s < c \theta_1(L_k, L_*)$. If instead the constant step-size is $s/2$, we will get linear convergence to an element of the set $B(L_*, s/2c)$, albeit at a slower rate. Notice that, at each shrinking step (i.e., switching to step-size $s/2$ from $s$), if the sequence has already reached $B(L_*, s/c)$, we can bound the rate of convergence factor $(1-C_2)$ by
\begin{equation}
\left(1 - \frac{s/2}{c\theta_1(L_k, L_*)}\frac{C_1c}{4} \right) \leq \left(1 - \frac{C_1c}{8} \right).
\end{equation}
Further, for all subsequent steps where $s/2 < c \theta_1(L_k, L_*)$, we have that the rate of convergence factor is strictly less than $1 - C_1c /(8 \sqrt{d} (d+1))$. Thus, as long as the time between shrinking is large enough, we are guaranteed to have linear convergence. If the first step-size is allowed to run long enough, we find that the number of steps $m$ between shrinking needs to be at most
\begin{equation*}
    \left(1 - \frac{C_1c}{8} \right)^m < \frac{1}{2}.
\end{equation*}
\end{proof}

\subsection{Complete Guarantee with PCA Initialization}
\label{subsec:pcainit}

Notice that the previous section does not give a complete guarantee because the results are local. In other words, they assume that we first initialize in $B(L_*, \gamma)$ and then run GGD. To make these results practical, we will show that it is possible to initialize in this neighborhood under a simple deterministic condition. As before with Theorems~\ref{thm:conv} and~\ref{thm:linconv}, we consider only the noiseless RSR setting here. Extensions to the noisy case do not require much more effort.

The result of this section shows that PCA initializes in $B(L_*, \gamma)$ under a similar deterministic condition to $\cS(\cX, L_*, \gamma)>0$.
This is quantified in the following lemma.
\begin{lemma} \label{lemma:pcainit}
Suppose that, for a noiseless inlier-outlier dataset,
\begin{equation}\label{eq:pcainit}
    \sqrt{2} \sin(\gamma) \lambda_d(\bX_{\mathrm{in}} \bX_{\mathrm{in}}^T) - \|\bX_{\mathrm{out}}\|_2^2 > 0.
\end{equation}
Then, $L_{PCA} \in {B(L_*,\gamma)}$.
\end{lemma}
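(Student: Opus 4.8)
The plan is to read off $L_{PCA}$ as the top-$d$ eigenspace of the Gram matrix $\bX\bX^T = \bX_{\mathrm{in}}\bX_{\mathrm{in}}^T + \bX_{\mathrm{out}}\bX_{\mathrm{out}}^T$ and to treat the statement as an eigenspace perturbation bound. Since the inliers lie in $L_*$, the matrix $\bX_{\mathrm{in}}\bX_{\mathrm{in}}^T$ has range inside $L_*$ and annihilates $L_*^\perp$; we may assume the inliers span $L_*$ (otherwise $\lambda_d(\bX_{\mathrm{in}}\bX_{\mathrm{in}}^T)=0$ and the hypothesis is vacuous), so its $d$ nonzero eigenvalues are all at least $\ell := \lambda_d(\bX_{\mathrm{in}}\bX_{\mathrm{in}}^T)$ and there is a spectral gap down to the zero eigenvalues. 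The perturbation $\bX_{\mathrm{out}}\bX_{\mathrm{out}}^T$ is positive semidefinite with operator norm $\|\bX_{\mathrm{out}}\|_2^2$. It then suffices to bound the largest principal angle $\theta_1 := \theta_1(L_{PCA},L_*)$ and show $\sin\theta_1 < \sin\gamma$, which is exactly a Davis--Kahan $\sin\theta$ statement for a semidefinite perturbation.

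To extract a \emph{sharp} bound I would use the first-order optimality of $L_{PCA}$ for the PCA objective $\Tr(\bP_L\bX\bX^T)$, namely that $\bX\bX^T$ leaves $L_{PCA}$ invariant. Let $\bv^*\in L_{PCA}$ be a unit vector realizing the largest principal angle, and write $\bv^* = \cos\theta_1\,\by + \sin\theta_1\,\bu$ with $\by\in L_*$ and $\bu\in L_*^\perp$ unit vectors. Differentiating the Rayleigh quotient along the rotation of $\bv^*$ toward $\by$ inside the $2$-plane $\Sp(\by,\bu)$ and using stationarity together with (i) $\bX_{\mathrm{in}}\bX_{\mathrm{in}}^T\bu = 0$ and (ii) $\by^T\bX_{\mathrm{in}}\bX_{\mathrm{in}}^T\by \ge \ell$, the condition collapses to a relation involving only the $2\times2$ compression of $\bX_{\mathrm{out}}\bX_{\mathrm{out}}^T$ to this plane: writing $a,b,c$ for the entries of that compression, one obtains a relation of the form $\tan(2\theta_1) = 2c/(\ell+a-b)$. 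Maximizing the right-hand side over positive semidefinite $2\times2$ blocks of operator norm at most $\|\bX_{\mathrm{out}}\|_2^2$ gives the sharp inequality
\[
  \ell\,\sin(2\theta_1) \le \|\bX_{\mathrm{out}}\|_2^2 ,
\]
equivalently $\sqrt{2}\,\ell\,\sin\theta_1 \le \|\bX_{\mathrm{out}}\|_2^2$ on the relevant branch $\theta_1 \le \pi/4$, since $\sqrt2\sin\theta_1 \le \sin(2\theta_1)$ there.

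Combining with the hypothesis $\sqrt2\sin(\gamma)\,\ell > \|\bX_{\mathrm{out}}\|_2^2$ then yields $\sqrt2\,\ell\,\sin\theta_1 \le \|\bX_{\mathrm{out}}\|_2^2 < \sqrt2\sin(\gamma)\,\ell$, so $\sin\theta_1 < \sin\gamma$, i.e.\ $\theta_1 < \gamma$ and $L_{PCA}\in B(L_*,\gamma)$. I expect the main obstacle to be precisely this sharp constant $\sqrt2$. A soft variational argument---replacing the worst direction $\bv^*$ of $L_{PCA}$ by $\by\in L_*$ and using only the sub-optimality inequality $\bv^{*T}\bX\bX^T\bv^* \ge \by^T\bX\bX^T\by$---produces only $\ell\,\sin\theta_1 \le \|\bX_{\mathrm{out}}\|_2^2$, which together with the hypothesis gives merely $\sin\theta_1 < \sqrt2\sin\gamma$, a factor $\sqrt2$ too weak to conclude $\theta_1<\gamma$. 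Upgrading the $\sin\theta_1$ to $\sin(2\theta_1)$ requires using the exact stationarity (eigen) equation rather than a one-sided inequality, a clean reduction to the single worst interaction plane when $d>1$, and some care that the perturbation is sub-critical so that $\theta_1$ stays on the principal branch; the remaining $2\times2$ optimization is then routine.
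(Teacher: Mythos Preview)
Your setup is exactly right --- this is an eigenspace perturbation statement with unperturbed matrix $\bX_{\mathrm{in}}\bX_{\mathrm{in}}^T$ (whose top-$d$ eigenspace is $L_*$, with gap $\ell=\lambda_d(\bX_{\mathrm{in}}\bX_{\mathrm{in}}^T)$ above zero) and perturbation $\bX_{\mathrm{out}}\bX_{\mathrm{out}}^T$ of operator norm $\|\bX_{\mathrm{out}}\|_2^2$. But you are rederiving a Davis--Kahan bound from scratch, whereas the paper simply \emph{cites} it: the entire proof is an application of Corollary~3.1 of Vu--Lei (2013), which already packages the $\sin\theta$ inequality with the constant needed to match the hypothesis~\eqref{eq:pcainit}. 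That is the whole argument --- two lines.

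Your route via the stationarity equation and the $2\times2$ compression is a genuinely different (and in fact sharper) derivation: the relation $\tan(2\theta_1)=2c/(\ell+a-b)$ you obtain, together with the PSD constraint on the compression (whose eigenvalue spread $\sqrt{(a-b)^2+4c^2}$ is at most $\|\bX_{\mathrm{out}}\|_2^2$), yields the improved inequality $\ell\sin(2\theta_1)\le\|\bX_{\mathrm{out}}\|_2^2$ rather than merely $\ell\sin\theta_1\le\|\bX_{\mathrm{out}}\|_2^2$. For $d=1$ this is clean, since the worst principal direction $\bv^*$ is literally the top eigenvector. For $d>1$, however, $\bv^*$ is only a vector in the invariant subspace $L_{PCA}$, not an eigenvector of $\bX\bX^T$, so the ``rotate $\bv^*$ toward $\by$'' variation does not immediately give you the $2\times2$ eigenvalue relation; making that reduction rigorous is the real work you flag but do not carry out. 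The branch restriction $\theta_1\le\pi/4$ is the other loose end --- it can be handled by a continuity argument in the perturbation size, but only once you have the $\sin(2\theta_1)$ bound uniformly, which again hinges on the $d>1$ reduction.

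In short: your approach is correct for $d=1$ and plausibly extendable, and it buys a genuinely sharper constant. But the paper sidesteps all of this by invoking Davis--Kahan as a black box, which is the intended (and far shorter) proof here.
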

\begin{proof}
    The proof of this lemma is a direct consequence of the Davis-Kahan $\sin \theta$ Theorem~\citep{davis1970rotationIII}, which has a nice formulation in~\citet{vu2013fantope}. Let $L_* \in G(D,d)$ span the principal $d$-subspace of $\bX_{\mathrm{in}} \bX_{\mathrm{in}}^T$ and $L_{PCA} \in G(D,d)$ span the principal $d$-subspace of $\bX_{\mathrm{out}} \bX_{\mathrm{out}}^T + \bX_{\mathrm{in}} \bX_{\mathrm{in}}^T$. Then, applying Corollary 3.1 of~\citet{vu2013fantope} to these matrices yields
    \begin{equation}
        \left| \sin\left( \theta_1 \left(L_*, L_{PCA}  \right) \right) \right| \leq \sqrt{2} \frac{\| \bX_{\mathrm{out}}\|^2}{\lambda_d(\bX_{\mathrm{in}} \bX_{\mathrm{in}}^T)}.
    \end{equation}
    Thus, if~\eqref{eq:pcainit} holds, then we are guaranteed that 
    \begin{equation}
        \left| \sin\left( \theta_1 \left(L_*, L_{PCA}  \right) \right) \right| < | \sin (\gamma)|.
    \end{equation}
\end{proof}

The condition required in~\eqref{eq:pcainit} bears some nice similarity to the earlier condition of Theorems~\ref{thm:landscape} and~\ref{thm:conv}, $\cS(\cX, L_*, \gamma)>0$. Here the first term in~\eqref{eq:pcainit} is like the inlier permeance and the second term is like the outlier alignment.

In summary, if both of the conditions $\cS(\cX, L_*, \gamma)>0$ and~\eqref{eq:pcainit} hold, then GGD with step-size $s/\sqrt{k}$ exactly recovers $L_*$ with convergence rate $O(1/\sqrt{k})$. If we additionally have that~\eqref{eq:gradconds} holds, then GGD with the step-size in Algorithm~\ref{alg:ggd} linearly converges to $L_*$.

\section{Guarantees for Specific Statistical Models}
\label{sec:statmod}

In this section, we discuss some statistical models of data and determine when they satisfy the assumptions of our theorems. These models are meant to illustrate that our conditions are satisfied in a wide range of RSR examples, and they begin to explore the recovery limits of our algorithm. These results should provide some useful context and lead to easier interpretation of the general conditions given in Section~\ref{sec:theory} and Section~\ref{sec:grad}.

The main idea behind the study of statistical models of data is to compare the theoretical guarantees of various algorithms with a given choice of metric.
A natural metric for this purpose is the signal to noise ratio (SNR). The SNR is the ratio of inliers to outliers, $N_{\mathrm{in}} / N_{\mathrm{out}}$, and we are interested in the minimal SNR that allows exact (or sufficiently near) recovery of an RSR algorithm under a given set of assumptions on the data. In other words, under further assumptions on the data generating model, we want to derive a more interpretable condition than $\cS(\cX, L_*, \gamma) > 0$.

Due to the properties of these models, different SNR bounds may arise for different regimes of sample size. A first common small sample regime assumes that $N=O(D)$, where the SNR bounds are usually higher to account for the increased variation in the data. Another regime of slightly larger samples is obtained when $N=O(D^p)$ and $p>1$ is sufficiently small. Under some special statistical models, the SNR may decrease as $p$ increases. That is, a larger fraction of outliers can be tolerated with larger orders of sample size.
A third regime uses very large, and possibly arbitrarily large, $N$. We refer to the SNR bound of this third case as the very large $N$ regime. Under some very special models of data, the SNR bound can go to zero in this regime as the sample size increases. However, in this case, the sample size must depend on the SNR itself.

In the following, not only will we show almost state-of-the-art results for GGD on the Haystack Model of~\citet{lerman2015robust}, but we will also demonstrate how our convergence theorem holds for other more general models of data. This is an important step towards understanding how RSR algorithms perform outside of the simple Haystack Model. The general statistical models considered below assume that the inliers and outliers are sampled from probability distributions that obey certain assumptions.
These models are the following:
\begin{itemize}
    \item Assumptions on the outliers:
        \begin{itemize}
            \item Bounded support distributions
            \item Sub-Gaussian distributions
        \end{itemize}
    \item Assumptions on the inliers:
        \begin{itemize}
            \item Continuous, bounded support distributions
            \item Sub-Gaussian distributions
        \end{itemize}
\end{itemize}
In each of these cases, the inliers and outliers will be assumed to be i.i.d.~samples from the given distributions. The precise definition of each of these distributions will be given later.

In Section~\ref{sec:bdoutlier}, we bound the alignment of outliers under the above outlier models. Then, in Section~\ref{sec:subgaussin}, we bound the permeance of inliers under the above inlier models.
The goal of these first two subsections is to understand how each part of $\cS(\cX, L_*, \gamma)$ behaves on its own.
After this, in Section~\ref{sec:modstability} we prove that $\cS(\cX, L_*, \gamma) > 0$ under certain conditions on these models of inliers and outliers.
Next, we show in Section~\ref{subsec:pcainittheory} that PCA can initialize in $B(L_*, \gamma)$ in a wide range of cases.
Then, Section~\ref{sec:theorydisc} gives an in depth discussion of the Haystack Model, where we show that GGD with step-size $s/\sqrt{k}$ has almost state-of-the-art guarantees. The discussion considers the previously mentioned three regimes of sample size.
Finally, Section~\ref{subsec:linconvdisc} gives an idea of how statistical models can also ensure that the strong gradient condition in Theorem~\ref{thm:linconv} holds, which gives more evidence that the method may converge linearly in practice.

\subsection{Outlier Distributions with Restricted Alignment}
\label{sec:bdoutlier}

We explain the two assumptions on outliers listed above, which lead to bounds on the alignment. We first discuss bounded distributions in Section~\ref{subsec:bddoutlier} and then discuss sub-Gaussian distributions in Section~\ref{subsec:subgaussoutlier}.

\subsubsection{Bounded Support Distributions}
\label{subsec:bddoutlier}

We consider the case of outliers drawn from a distribution with bounded support. This assumption is needed because our bound on the alignment scales like the spectral norm of $\bX_{\mathrm{out}}$, which can be very large for even a single large outlier. An outlier distribution of this type has the form
\begin{equation}\label{eq:bddistdef}
    \cX_{\mathrm{out}} \sim \mu , \ \mu(\reals^D \setminus B(\bzero,M)) = 0,
\end{equation}
where $\mu$ represents the probability measure and $M$ is a uniform bound on the magnitude of the outliers.
In this case, we have the worst-case bound 
\begin{equation}\label{eq:bdoutbd}
	\| \bX_{\mathrm{out}} \|_2 < M \sqrt{N_{\mathrm{out}}}.
\end{equation}
In the special case where $\cX_{\mathrm{out}} \sim \mathrm{Unif}(B(\bzero,M))$, the following bound was provided in Lemma 8.4 of~\citet{lerman2015robust}:
\begin{equation}\label{eq:unifballbd}
\| \bX_{\mathrm{out}} \|_2 \leq M \left( \sqrt{ \frac{N_{\mathrm{out}}}{D-0.5}} + \sqrt{2} + \frac{t}{\sqrt{D-0.5}}\right), \ \text{w.p. at least } 1-1.5 e^{-t^2}.
\end{equation}
We remark that~\eqref{eq:bdoutbd} holds under any sampling from a bounded distribution and~\eqref{eq:unifballbd} holds under i.i.d.~sampling of a special distribution.

From these bounds and \eqref{eq:alignmentbd}, we get a sense of how the alignment scales for different types of outliers. When outliers are more adversarial but still bounded, the alignment scales like $O(N_{\mathrm{out}})$. On the other hand, when outliers have the special distribution $\mathrm{Unif}(B(\bzero,M))$, we can bound the alignment by $N_{\mathrm{out}}/\sqrt{D}$. Later, in Theorem~\ref{thm:haystack}, we show how to improve this to $O(N_{\mathrm{out}}/\sqrt{D(D-d)})$, due to the fact that the bound in~\eqref{eq:alignmentbd} is not tight.

\subsubsection{Sub-Gaussian Distributions}
\label{subsec:subgaussoutlier}

Rather than assume that the outliers are bounded, we can instead assume they come from a sub-Gaussian distribution. In this case, we have the following lemma.

\begin{lemma}\label{lemma:subgaussoutlier}
	Suppose that the outliers follow a sub-Gaussian distribution with covariance $\bSigma_{\mathrm{out}} / D_{\mathrm{out}}$, which has rank $D_{\mathrm{out}}$. Then,
     \begin{equation}\label{eq:outconcbd}
     \| \bX_{\mathrm{out}}\|_2 \leq \left\|\bSigma_{\mathrm{out}}^{1/2} \right\|_2  \left( 2 \frac{\sqrt{N_{\mathrm{out}}}}{\sqrt{D_{\mathrm{out}}}} + C \right),
	\end{equation}
	with probability at least $1-e^{-c N_{\mathrm{out}}}$. Here, $c$ and $C$ depend on the sub-Gaussian norm of $\bSigma_{\mathrm{out}}^{-1/2} \bx$, where $\bx$ follows the outlier distribution.
\end{lemma}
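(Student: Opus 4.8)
The plan is to reduce this to a textbook concentration bound for the extreme singular values of a random matrix with independent sub-Gaussian rows, after whitening the outliers. Write $\bX_{\mathrm{out}} = [\bx_1,\dots,\bx_{N_{\mathrm{out}}}]$ with the $\bx_i$ i.i.d.\ of covariance $\bSigma_{\mathrm{out}}/D_{\mathrm{out}}$, which has rank $D_{\mathrm{out}}$, so the outliers lie almost surely in the $D_{\mathrm{out}}$-dimensional range of $\bSigma_{\mathrm{out}}$. Setting $\bz_i = \bSigma_{\mathrm{out}}^{-1/2}\bx_i$ (the square root of the pseudoinverse, restricted to that range) gives $\bz_i$ with covariance $\bP/D_{\mathrm{out}}$, where $\bP$ projects onto the range, and sub-Gaussian norm equal to that of $\bSigma_{\mathrm{out}}^{-1/2}\bx$. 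Collecting the $\bz_i$ into a matrix $\bZ$ yields the factorization $\bX_{\mathrm{out}} = \bSigma_{\mathrm{out}}^{1/2}\bZ$, hence $\|\bX_{\mathrm{out}}\|_2 \le \|\bSigma_{\mathrm{out}}^{1/2}\|_2\,\|\bZ\|_2$, which isolates the random matrix $\bZ$.

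The key step is to bound $\|\bZ\|_2 = s_{\max}(\bZ)$. After rescaling the columns by $\sqrt{D_{\mathrm{out}}}$ so that the rows of the transposed matrix become isotropic and sub-Gaussian, I would invoke the standard non-asymptotic estimate for matrices with independent sub-Gaussian rows (for instance Theorem 5.39 of Vershynin's tutorial on non-asymptotic random matrix theory), applied with ambient dimension $D_{\mathrm{out}}$ rather than $D$: for every $t \ge 0$, with probability at least $1 - 2e^{-ct^2}$,
\begin{equation*}
\sqrt{D_{\mathrm{out}}}\, s_{\max}(\bZ) \le \sqrt{N_{\mathrm{out}}} + C\sqrt{D_{\mathrm{out}}} + t,
\end{equation*}
where $c$ and $C$ depend only on the sub-Gaussian norm of the rescaled rows, hence only on that of $\bSigma_{\mathrm{out}}^{-1/2}\bx$. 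Choosing $t = \sqrt{N_{\mathrm{out}}}$ folds the tail term into the leading one, producing the factor $2\sqrt{N_{\mathrm{out}}}$ and the probability $1 - e^{-cN_{\mathrm{out}}}$ after absorbing the constant $2$ into $c$. Dividing by $\sqrt{D_{\mathrm{out}}}$ gives $s_{\max}(\bZ) \le 2\sqrt{N_{\mathrm{out}}}/\sqrt{D_{\mathrm{out}}} + C$, and substituting back into $\|\bX_{\mathrm{out}}\|_2 \le \|\bSigma_{\mathrm{out}}^{1/2}\|_2\,\|\bZ\|_2$ yields exactly~\eqref{eq:outconcbd}.

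The only point requiring genuine care, and the main obstacle, is the rank deficiency of $\bSigma_{\mathrm{out}}/D_{\mathrm{out}}$: since its rank $D_{\mathrm{out}}$ may be strictly smaller than $D$, one cannot invert it directly and must carry out both the whitening and the singular-value estimate inside the $D_{\mathrm{out}}$-dimensional range of $\bSigma_{\mathrm{out}}$. Getting this right is exactly what causes $\sqrt{D_{\mathrm{out}}}$ rather than $\sqrt{D}$ to appear, and is responsible for the favorable scaling $\sqrt{N_{\mathrm{out}}}/\sqrt{D_{\mathrm{out}}}$ in the statement. The remaining bookkeeping, namely verifying that the rescaled rows are genuinely isotropic and that their sub-Gaussian norm is controlled by that of $\bSigma_{\mathrm{out}}^{-1/2}\bx$ up to the fixed factor $\sqrt{D_{\mathrm{out}}}$, is routine and feeds only into the constants $c$ and $C$.
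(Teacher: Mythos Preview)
Your proposal is correct and follows essentially the same route as the paper: whiten via $\bSigma_{\mathrm{out}}^{-1/2}$, use submultiplicativity to pull out $\|\bSigma_{\mathrm{out}}^{1/2}\|_2$, and then invoke Theorem~5.39 of Vershynin on the isotropic part. The paper's proof is terser and does not spell out the choice $t=\sqrt{N_{\mathrm{out}}}$ or the rank-deficiency bookkeeping you highlight, but the argument is the same.
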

\begin{proof}
	Note that the transformed data, $\bSigma_{\mathrm{out}}^{-1/2} \bX_{\mathrm{out}}$, is isometric. Therefore, we bound the spectral norm of the outliers by
	\begin{equation*}
	\|\bX_{\mathrm{out}}\|_2 = \|\bSigma_{\mathrm{out}}^{1/2} \bSigma_{\mathrm{out}}^{-1/2} \bX_{\mathrm{out}}\|_2 \leq \|\bSigma_{\mathrm{out}}^{1/2}\|_2 \|\bSigma_{\mathrm{out}}^{-1/2} \bX_{\mathrm{out}}\|_2.
	\end{equation*}
	We can apply Theorem 5.39 of~\citet{vershynin2012introduction} to the last term in this inequality. This yields the bound in~\eqref{eq:outconcbd}.
\end{proof}

\subsection{Permeating Inlier Distributions}
\label{sec:subgaussin}

We will look at two assumptions that yield permeance of inliers. First, Section~\ref{subsec:genposinlier} examines continuous distribution inliers, and then Section~\ref{subsec:subgaussinlier} looks at sub-Gaussian inliers.

\subsubsection{Bounded Continuous Inlier Distributions}
\label{subsec:genposinlier}

An i.i.d.~sample from a distribution with a continuous density lies in general position with probability 1. In the case of a continuous distribution on a subspace $L_*$, this means that no $d$-subset of them lies on a $d-1$-dimensional subspace. Some slightly stronger assumptions are also needed to easily prove that the inlier permeance has a nontrivial lower bound, which includes the distribution having bounded support, although other assumptions could be used. We will refer to these distributions as bounded continuous distributions. The proof of this proposition is given in Appendix \ref{subsubsec:genpositionin}.
\begin{proposition}\label{prop:genposition}
    Suppose that the inliers are sampled from a distribution that has a continuous density with respect to the uniform measure on $L_*$. Suppose further that this distribution has mean zero and has support contained in $B(\bzero,M) \cap L_*$, for some constant $M$. Then,
    \begin{equation}\label{eq:genposin}
        \cP(\cX_{\mathrm{in}}) \gtrsim \frac{N_{\mathrm{in}}}{M} \min_{\bv \in L_* \cap S^{D-1}} \Var(\bv^T \bx) , \ \text{w.h.p.}
    \end{equation}
\end{proposition}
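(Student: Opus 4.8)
The plan is to reduce the permeance to the smallest eigenvalue of the empirical second-moment matrix of the inliers, which I then control by a matrix concentration bound. First I would rewrite the permeance variationally. Since each summand $\bx_i \bx_i^T / \|\bx_i\|$ in~\eqref{eq:perm} is positive semidefinite with range contained in $L_*$, the matrix $\sum_{\bx_i \in \cX_{\mathrm{in}}} \bx_i \bx_i^T / \|\bx_i\|$ has all eigenvalues outside $L_*$ equal to zero, so by the Courant--Fischer theorem its $d$th eigenvalue equals its smallest eigenvalue on $L_*$:
\[
\cP(\cX_{\mathrm{in}}) = \min_{\bv \in L_* \cap S^{D-1}} \sum_{\bx_i \in \cX_{\mathrm{in}}} \frac{(\bv^T \bx_i)^2}{\|\bx_i\|}.
\]
Continuity of the density guarantees $\bx_i \neq \bzero$ almost surely, so each summand is well-defined.

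Next, I would use the support assumption $\|\bx_i\| \leq M$ to bound $1/\|\bx_i\| \geq 1/M$ term by term, which immediately gives
\[
\cP(\cX_{\mathrm{in}}) \geq \frac{1}{M} \min_{\bv \in L_* \cap S^{D-1}} \sum_{\bx_i \in \cX_{\mathrm{in}}} (\bv^T \bx_i)^2 = \frac{1}{M} \lambda_d(\bX_{\mathrm{in}} \bX_{\mathrm{in}}^T).
\]
It then remains to lower bound the smallest eigenvalue (on $L_*$) of the empirical second-moment matrix $\bX_{\mathrm{in}} \bX_{\mathrm{in}}^T = \sum_i \bx_i \bx_i^T$ by a constant multiple of $N_{\mathrm{in}} \min_{\bv} \Var(\bv^T \bx)$. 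Since the $\bx_i$ are i.i.d.\ with mean zero, $\E[(\bv^T \bx)^2] = \Var(\bv^T \bx)$, so with $\bSigma = \E[\bx \bx^T]$ one has $\lambda_d(\bSigma) = \min_{\bv \in L_* \cap S^{D-1}} \Var(\bv^T \bx)$. The matrices $\bx_i \bx_i^T$ are i.i.d., positive semidefinite, and uniformly bounded in spectral norm by $M^2$; applying a matrix Chernoff (lower-tail) inequality to their sum, restricted to the $d$-dimensional space $L_*$, yields for any $\theta \in (0,1)$ that
\[
\lambda_d(\bX_{\mathrm{in}} \bX_{\mathrm{in}}^T) \geq (1-\theta)\, N_{\mathrm{in}} \, \lambda_d(\bSigma)
\]
with probability at least $1 - d\exp\!\big(-\theta^2 N_{\mathrm{in}} \lambda_d(\bSigma)/(2M^2)\big)$. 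Fixing $\theta$ to a constant (say $1/2$) makes this a high-probability event as $N_{\mathrm{in}} \to \infty$, and combining with the previous display gives $\cP(\cX_{\mathrm{in}}) \gtrsim (N_{\mathrm{in}}/M)\min_{\bv} \Var(\bv^T\bx)$ with an absolute constant, as claimed.

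The only step requiring real care is the matrix concentration together with the role of the continuity hypothesis. The general-position / continuous-density assumption is precisely what ensures the population covariance restricted to $L_*$ is nondegenerate, i.e.\ $\lambda_d(\bSigma) = \min_{\bv}\Var(\bv^T\bx) > 0$; otherwise the right-hand side is zero (the statement becomes vacuous) and the concentration bound degenerates. I would also verify that the matrix Chernoff failure probability carries the effective dimension $d$ rather than the ambient $D$: this is legitimate because all $\bx_i$, and hence all summands, live in $L_*$, so the relevant matrix dimension is $d$. Finally, for finite samples the same general-position assumption guarantees that $d$ of the inliers span $L_*$, so $\lambda_d(\bX_{\mathrm{in}}\bX_{\mathrm{in}}^T)$ is genuinely positive once $N_{\mathrm{in}} \geq d$, consistent with the high-probability lower bound.
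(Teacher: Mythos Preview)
Your argument is correct and is in fact cleaner than the paper's. The paper works directly with the random variables $(\bv^T\bx_i)^2/\|\bx_i\|$: it fixes a direction $\bv$, invokes a scalar concentration bound (Hoeffding on the bounded random variable $\bv^T\bx/\|\bx\|^{1/2}$), and then passes to the infimum over $\bv\in L_*\cap S^{D-1}$ via a covering argument on the sphere. You instead first strip out the $\|\bx_i\|$ via the crude bound $1/\|\bx_i\|\geq 1/M$ and then apply the matrix Chernoff lower tail to $\sum_i \bx_i\bx_i^T$ on the $d$-dimensional space $L_*$. This buys you two things: the covering step is absorbed into the matrix inequality, and the failure probability you obtain is exponential in $N_{\mathrm{in}}$ (with the correct dimension prefactor $d$), which is actually stronger than the ``w.h.p.'' required by the statement. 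The price is that by bounding $1/\|\bx_i\|$ pointwise you may lose a constant relative to controlling $\E[(\bv^T\bx)^2/\|\bx\|]$ directly, but since the claim is only up to an absolute constant this is harmless.
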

Here, the probability goes to $1$ and the permeance goes to $\infty$ as $ N_{\mathrm{in}} \to \infty$.

\subsubsection{Sub-Gaussian Inlier Distributions}
\label{subsec:subgaussinlier}

We show how the assumption of a sub-Gaussian distribution provides a lower bound for the permeance of inliers. The proof of this theorem is given in Appendix~\ref{subsubsec:subgaussin}.
\begin{proposition}\label{prop:subgaussinlier}
	Suppose that inliers are sampled i.i.d.~from a sub-Gaussian distribution with covariance $\bSigma_{\mathrm{in}} / d$, which has rank $d$. Then, for $0<a<1$ satisfying $(1-a)^2 N_{\mathrm{in}} >  C_1^2 d$,
	\begin{align}\label{eq:subgaussinlier}
	\cP(\cX_{\mathrm{in}}) &\geq   \frac{\lambda_d(\bSigma_{\mathrm{in}})}{ \lambda_1(\bSigma_{\mathrm{in}})^{1/2}} (1-a)^2 \frac{N_{\mathrm{in}}}{d} + O\left( \sqrt{\frac{N_{\mathrm{in}}}{d}} \right),
	\text{ w.p.~at least } 1 - 4e^{- c_1 a^2 N_{\mathrm{in}}}.
	\end{align}
	 Here, $c_1$ and $C_1$ are constants that depend on the sub-Gaussian norms of $\bSigma_{\mathrm{in}}^{-1/2} \bx$ and $\widetilde{\bSigma_{\mathrm{in}}^{-1/2} \bx}$, where $\bx$ is a random vector that follows the inlier distribution.
\end{proposition}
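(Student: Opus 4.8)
The plan is to reduce the lower bound on the permeance to a lower bound on the smallest singular value of a whitened, sphere-normalized design matrix, to which the sub-Gaussian concentration in Theorem 5.39 of \citet{vershynin2012introduction} applies (the same tool used for Lemma~\ref{lemma:subgaussoutlier}). Throughout I work on the $d$-dimensional subspace $L_*$ carrying the inliers, identifying it with $\reals^d$, and I whiten by writing $\bx_i = \bSigma_{\mathrm{in}}^{1/2} \by_i$, where $\by_i$ has covariance $\bI_d/d$ (equivalently $\sqrt d\,\by_i$ is isotropic sub-Gaussian). Since $\bx_i \bx_i^T = \bSigma_{\mathrm{in}}^{1/2} \by_i \by_i^T \bSigma_{\mathrm{in}}^{1/2}$ and $\|\bx_i\| = \|\bSigma_{\mathrm{in}}^{1/2}\by_i\|$, the permeance matrix in~\eqref{eq:perm} factors as $\bSigma_{\mathrm{in}}^{1/2}\big(\sum_i \by_i\by_i^T/\|\bSigma_{\mathrm{in}}^{1/2}\by_i\|\big)\bSigma_{\mathrm{in}}^{1/2}$.

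First I peel off the two spectral factors by PSD monotonicity. Using $\|\bSigma_{\mathrm{in}}^{1/2}\by_i\| \le \lambda_1(\bSigma_{\mathrm{in}})^{1/2}\|\by_i\|$ termwise, and then the sandwich inequality $\lambda_d(\bSigma^{1/2}\bA\bSigma^{1/2}) \ge \lambda_d(\bSigma)\lambda_d(\bA)$ valid for PSD $\bA$, I obtain $\cP(\cX_{\mathrm{in}}) \ge \frac{\lambda_d(\bSigma_{\mathrm{in}})}{\lambda_1(\bSigma_{\mathrm{in}})^{1/2}}\,\lambda_d\big(\sum_i \by_i\by_i^T/\|\by_i\|\big)$. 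This already isolates the prefactor appearing in~\eqref{eq:subgaussinlier}, so it remains to show $\lambda_d\big(\sum_i \by_i\by_i^T/\|\by_i\|\big) \ge (1-a)^2 N_{\mathrm{in}}/d + O(\sqrt{N_{\mathrm{in}}/d})$.

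Next I handle the normalization. Writing $\by_i\by_i^T/\|\by_i\| = \|\by_i\|\,\widetilde\by_i\widetilde\by_i^T$ shows that, in contrast to the usual worry about large-norm samples, here a large $\|\by_i\|$ only increases the contribution; the only loss comes from atypically small norms. Since $\E\|\by_i\|^2 = 1$ and $\|\by_i\|$ concentrates, I discard the few indices with $\|\by_i\| < 1-a_0$ at the cost of a $2e^{-c a_0^2 N_{\mathrm{in}}}$ failure probability, and on the remaining indices bound $\sum_i \by_i\by_i^T/\|\by_i\| \succeq (1-a_0)\sum_{\|\by_i\|\ge 1-a_0}\widetilde\by_i\widetilde\by_i^T$. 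Equivalently, one can avoid explicit truncation by a Cauchy--Schwarz step $\sum_i (\bv^T\by_i)^2/\|\by_i\| \ge (\sum_i (\bv^T\by_i)^2)^2 / \sum_i \|\by_i\|(\bv^T\by_i)^2$ and control numerator and denominator separately; either route reduces the problem to the extreme singular values of a sub-Gaussian design. The rows involved are $\widetilde{\bSigma_{\mathrm{in}}^{-1/2}\bx}$ and the isotropic $\sqrt d\,\by_i$, which is exactly why the constants $c_1, C_1$ depend on their sub-Gaussian norms.

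Finally I apply Theorem 5.39 of \citet{vershynin2012introduction} to the $N_{\mathrm{in}}\times d$ matrix with these independent sub-Gaussian rows: with probability at least $1-2e^{-c_1 t^2}$ its smallest singular value is at least $\sqrt{N_{\mathrm{in}}} - C_1\sqrt d - t$. Choosing $t \asymp a\sqrt{N_{\mathrm{in}}}$ so that $C_1\sqrt d + t \le a\sqrt{N_{\mathrm{in}}}$, which forces the hypothesis $(1-a)^2 N_{\mathrm{in}} > C_1^2 d$, and squaring gives $(\sqrt{N_{\mathrm{in}}} - C_1\sqrt d - t)^2 \ge (1-a)^2 N_{\mathrm{in}}$, while the rescaling by $1/d$ and the cross term in the square produce the claimed $(1-a)^2 N_{\mathrm{in}}/d + O(\sqrt{N_{\mathrm{in}}/d})$; combining with the norm-concentration event yields the stated $1-4e^{-c_1 a^2 N_{\mathrm{in}}}$ probability. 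The main obstacle is the middle step: controlling the per-sample normalization $1/\|\by_i\|$ uniformly in the quadratic form without degrading the $(1-a)^2$ constant or introducing an upper bound on $N_{\mathrm{in}}$. The resolution hinges on the observation that normalization penalizes only small norms, so a one-sided truncation (or the Cauchy--Schwarz reduction) suffices and keeps the failure probability exponentially small in $N_{\mathrm{in}}$.
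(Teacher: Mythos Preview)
Your overall reduction---whiten by $\by_i=\bSigma_{\mathrm{in}}^{-1/2}\bx_i$, peel off the prefactor $\lambda_d(\bSigma_{\mathrm{in}})/\lambda_1(\bSigma_{\mathrm{in}})^{1/2}$ by PSD monotonicity, then invoke Theorem~5.39 of \citet{vershynin2012introduction}---is the same skeleton the paper uses. The difference is in how the weight $1/\|\by_i\|$ is handled, and your truncation step has a gap. The event ``all (or all but a negligible fraction of) the $\|\by_i\|$ exceed $1-a_0$'' does not occur with probability $1-2e^{-c a_0^2 N_{\mathrm{in}}}$: norm concentration for a single $d$-dimensional sub-Gaussian vector gives a per-sample failure probability of order $e^{-c a_0^2 d}$, so a union bound produces $N_{\mathrm{in}}\,e^{-c a_0^2 d}$, which is not the stated tail and can be large when $d$ is small. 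If you instead allow a random number of discards and control it by Chernoff, you then have to bound $\sigma_d$ of a data-dependent submatrix of $\widetilde{\bY}$, which requires additional work you have not supplied. Your Cauchy--Schwarz alternative is valid but degrades the leading constant (roughly $\sigma_d(\bY)^4/\sigma_1(\bY)^2$ in place of $\sigma_d(\bY)^2$), so it does not recover the $(1-a)^2$ in~\eqref{eq:subgaussinlier}.

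The paper sidesteps this entirely with a product-of-singular-values inequality. Writing the permeance matrix as $\widetilde{\bX_{\mathrm{in}}}\,\bX_{\mathrm{in}}^T$ (columns of $\widetilde{\bX_{\mathrm{in}}}$ are the unit vectors $\bx_i/\|\bx_i\|$), Theorem~3.1 of \citet{lu2000some} gives $\sigma_d(\widetilde{\bX_{\mathrm{in}}}\bX_{\mathrm{in}}^T)\ge \sigma_d(\widetilde{\bX_{\mathrm{in}}})\,\sigma_d(\bX_{\mathrm{in}})$. Each factor is then whitened and bounded by a separate application of Vershynin's Theorem~5.39: once to $\bSigma_{\mathrm{in}}^{-1/2}\bX_{\mathrm{in}}$ and once to $\widetilde{\bSigma_{\mathrm{in}}^{-1/2}\bX_{\mathrm{in}}}$, each yielding a $1-2e^{-c a^2 N_{\mathrm{in}}}$ event. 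This is precisely why the constants depend on the sub-Gaussian norms of both $\bSigma_{\mathrm{in}}^{-1/2}\bx$ and $\widetilde{\bSigma_{\mathrm{in}}^{-1/2}\bx}$, and why the final probability is $1-4e^{-c_1 a^2 N_{\mathrm{in}}}$. After your whitening, the same trick applied to $\lambda_d\big(\sum_i \by_i\by_i^T/\|\by_i\|\big)=\sigma_d(\widetilde{\bY}\bY^T)\ge \sigma_d(\widetilde{\bY})\sigma_d(\bY)$ closes the argument without any truncation.
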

Notice that the choice of $a$ here affects both the bound and the probability. One could, in principal, choose $a = N^{-1/2 + \epsilon}$ for some constant $\epsilon > 0$ and still achieve an overwhelming probability bound.

\subsection{Combining Statistical Models to Enforce $\cS(\cX, L_*, \gamma) > 0$}
\label{sec:modstability}

In this section, we explicitly compare the permeance and alignment bounds for these statistical models of data to see when we can expect to have $\cS(\cX, L_*, \gamma) > 0$, which is the essential assumption in Theorems~\ref{thm:landscape} and~\ref{thm:conv}. Together with the result of the next section on PCA initialization, this implies that GGD exactly recovers $L_*$ provided that the SNR is appropriately bounded from below in these models. First,~Section~\ref{subsubsec:boundoutgenin} will look at the case of bounded outliers and bounded continuous inliers. Then, Section~\ref{subsubsec:genhaystack} will discuss the case of sub-Gaussian inliers and outliers in what we call the Generalized Haystack Model.

\subsubsection{Bounded Outliers and Bounded Continuous Inliers}
\label{subsubsec:boundoutgenin}

Under the assumption of bounded outliers and bounded continuous inliers, we can guarantee $\cS(\cX, L_*, \gamma) > 0$ for large enough SNR and sample sizes.
This results in the following proposition.
\begin{proposition}[Stability with Bounded Outliers and Continuous Bounded Inliers]
	Suppose that the outliers follow a bounded distribution and the inliers follow a mean zero, bounded distribution with continuous density on $L_*$. Then, for a fixed parameter $0 < \gamma < \pi / 2$,  $\cS(\cX, L_*, \gamma) > 0$ w.h.p. for sufficiently large SNR and $N$.
\end{proposition}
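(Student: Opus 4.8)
The plan is to combine the alignment bound from Section~\ref{sec:bdoutlier} with the permeance bound from Proposition~\ref{prop:genposition}, showing that for sufficiently large SNR and $N$ the permeance term in $\cS(\cX, L_*, \gamma)$ dominates the supremum of the alignment term. Recall that
\begin{equation*}
    \cS(\cX, L_*, \gamma) = \cos(\gamma)\, \cP(\cX_{\mathrm{in}}) - \sup_{L \in B(L_*,\gamma)} \cA(\cX_{\mathrm{out}}, L),
\end{equation*}
so it suffices to lower bound $\cP(\cX_{\mathrm{in}})$ and upper bound $\sup_L \cA(\cX_{\mathrm{out}}, L)$, each holding with high probability, and then check that the former beats the latter.

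First I would handle the outlier term. Using the basis-independence of $\cA$ and the simple bound~\eqref{eq:alignmentbd}, we have $\cA(\cX_{\mathrm{out}}, L) \leq \sqrt{N_{\mathrm{out}}}\,\|\bX_{\mathrm{out}}\|_2$ uniformly in $L$, so the supremum over $B(L_*,\gamma)$ is controlled by the same quantity. Applying the worst-case bounded-support bound~\eqref{eq:bdoutbd}, $\|\bX_{\mathrm{out}}\|_2 < M_{\mathrm{out}}\sqrt{N_{\mathrm{out}}}$, yields $\sup_{L} \cA(\cX_{\mathrm{out}}, L) \lesssim N_{\mathrm{out}} M_{\mathrm{out}}$, where $M_{\mathrm{out}}$ is the support radius of the outlier distribution. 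This step is deterministic given the bounded support, so no probabilistic loss is incurred here (though one could sharpen it using~\eqref{eq:unifballbd} in special cases).

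Next I would invoke Proposition~\ref{prop:genposition} to obtain, with high probability, the permeance lower bound $\cP(\cX_{\mathrm{in}}) \gtrsim \frac{N_{\mathrm{in}}}{M_{\mathrm{in}}} \min_{\bv \in L_* \cap S^{D-1}} \Var(\bv^T \bx)$, where $M_{\mathrm{in}}$ is the support radius of the inlier distribution. The key structural point is that the permeance scales linearly in $N_{\mathrm{in}}$ while the alignment scales linearly in $N_{\mathrm{out}}$. Writing $\kappa = \cos(\gamma)\, c_{\mathrm{in}} / (M_{\mathrm{out}} c_{\mathrm{out}})$ for the appropriate absolute constants $c_{\mathrm{in}}, c_{\mathrm{out}}$ and the inlier variance factor folded in, the condition $\cS(\cX, L_*, \gamma) > 0$ reduces to a lower bound on the ratio $N_{\mathrm{in}} / N_{\mathrm{out}} = \SNR$ of the form $\SNR \gtrsim M_{\mathrm{out}} M_{\mathrm{in}} / (\cos(\gamma)\min_{\bv}\Var(\bv^T\bx))$, up to constants. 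Taking $N$ large enough forces the high-probability event of Proposition~\ref{prop:genposition} to hold and simultaneously absorbs the lower-order $O(\sqrt{N_{\mathrm{in}}})$ fluctuation terms into the leading linear term.

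The main obstacle I anticipate is that the permeance bound in Proposition~\ref{prop:genposition} holds only with high probability and carries a lower-order error term, so I must be careful that the high-probability event and the required SNR lower bound are compatible: one needs $N_{\mathrm{in}}$ large enough both for the concentration in Proposition~\ref{prop:genposition} and for the leading $\frac{N_{\mathrm{in}}}{M_{\mathrm{in}}}\min_{\bv}\Var(\bv^T\bx)$ term to dominate its own remainder, while $N_{\mathrm{out}}$ is tied to $N_{\mathrm{in}}$ through the fixed SNR. Since the alignment bound is purely deterministic once the outliers are bounded, the only probabilistic content is in the inlier permeance, so the final ``w.h.p.'' in the statement comes entirely from Proposition~\ref{prop:genposition}; I would simply record that for $N$ exceeding a threshold depending on $\gamma$, $M_{\mathrm{out}}$, $M_{\mathrm{in}}$, and the minimal inlier variance, both requirements hold and $\cS(\cX, L_*, \gamma) > 0$. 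The proof is essentially a matter of bookkeeping the two scalings against each other rather than any deep new estimate.
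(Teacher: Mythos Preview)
Your proposal is correct and follows essentially the same approach as the paper: both bound $\sup_L \cA(\cX_{\mathrm{out}},L)$ by $M N_{\mathrm{out}}$ using the bounded-support assumption (the paper states this directly in~\eqref{eq:genbdout}, while you route through~\eqref{eq:alignmentbd} and~\eqref{eq:bdoutbd} to reach the same estimate), invoke Proposition~\ref{prop:genposition} for the linear-in-$N_{\mathrm{in}}$ permeance lower bound, and then compare the two linear scalings to conclude that $\cS(\cX,L_*,\gamma)>0$ w.h.p.\ once $\SNR$ and $N$ are large enough.
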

\begin{proof}
	First, the result of Proposition~\ref{prop:genposition} bounds the permeance of inliers.
	On the other hand, the outliers follow a bounded distribution. This implies that
	\begin{equation}\label{eq:genbdout}
	\max_{L \in G(D,d)} \cA(\cX, L) \leq M N_{\mathrm{out}}.
	\end{equation}
	Thus, comparing~\eqref{eq:genposin} and~\eqref{eq:genbdout}, for both $N_{\mathrm{in}} / N_{\mathrm{out}}$ and $N$ sufficiently large, $\cS(\cX, L_*, \gamma) > 0$ w.h.p.
\end{proof}

\subsubsection{The Generalized Haystack Model: sub-Gaussian Inliers and Outliers}
\label{subsubsec:genhaystack}

Next, we propose the Generalized Haystack Model as a special case of sub-Gaussian inliers and outliers. Fix a positive diagonal matrix $\bLambda_{\mathrm{in}} \in \reals^{d \times d}$ and $\bV^* \in O(D,d)$, which spans $L_* \in G(D,d)$. Letting $\bSigma_{\mathrm{in}} = \bV^* \bLambda_{\mathrm{in}} \bV^{*T}$, we assume that $N_{\mathrm{in}}$ inliers are i.i.d.~sampled from a sub-Gaussian distribution with covariance $\bSigma_{\mathrm{in}}/d$.
Fix a symmetric positive semi-definite matrix $\bSigma_{\mathrm{out}} \in \reals^{D \times D}$ and assume $N_{\mathrm{out}}$ outliers are i.i.d.~sampled from a sub-Gaussian distribution with covariance $\bSigma_{\mathrm{out}} /D_{\mathrm{out}}$, where $D_{\mathrm{out}}$ is the rank of $\bSigma_{\mathrm{out}}$. This specifies a Generalized Haystack Model with parameters $N_{\mathrm{in}}$, $\bSigma_{\mathrm{in}}$, $N_{\mathrm{out}}$, $\bSigma_{\mathrm{out}}$, $D_{\mathrm{out}}$, and $d$.  An example dataset drawn from a Generalized Haystack Model is given in Figure~\ref{fig:genhaystack}.
\begin{figure}[!t]
	\centering
	\includegraphics[width = .7\textwidth]{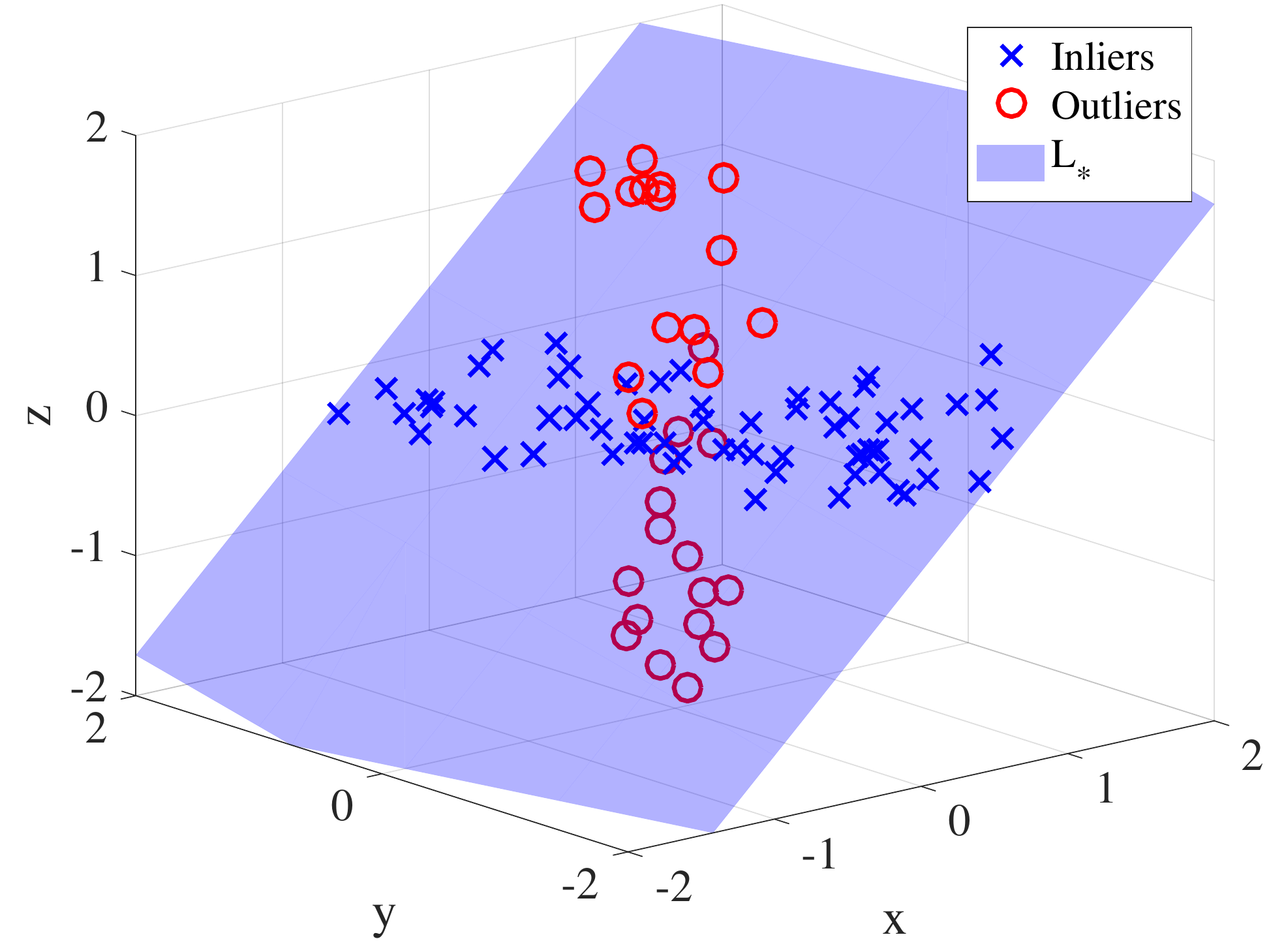}
	\caption{Example dataset drawn from a Generalized Haystack Model, where $d=2$, $D=3$, $N_{\mathrm{in}}=100$, and $N_{\mathrm{out}}=40$. Here, $L_*$ is a random 2-dimensional subspace. Inliers are sampled i.i.d.~from a normal distribution supported on $L_*$ that has variance $4$ and $0.09$ in its principal directions. The outliers are sampled i.i.d.~from a normal distribution with covariance $\bSigma_{\mathrm{out}}=\diag(.04, .04, 2.25)$. }\label{fig:genhaystack}
\end{figure}
This model generalizes the Haystack Model, which was proposed by~\citet{lerman2015robust} as a simple model with spherically symmetric Gaussian distributions of inliers and outliers.
In the latter model, inliers are distributed i.i.d.~$\cN(\bzero, \sigma_{\mathrm{in}}^2 \bP_{L_*}/d)$, and outliers are distributed i.i.d.~$\cN(\bzero, \sigma_{\mathrm{out}}^2 \bI / D)$. This defines the Haystack Model with parameters $N_{\mathrm{in}}$, $\sigma_{\mathrm{in}}$,
$N_{\mathrm{out}}$, $\sigma_{\mathrm{out}}$, $D$ and $d$.

We can combine the previous results on sub-Gaussian inliers and outliers to yield a theoretical guarantee.
We show that, under certain conditions on the Generalized Haystack Model, $\cS(
\cX, L_*, \gamma) > 0$ with overwhelming probability.

\begin{theorem}[Stability of the Generalized Haystack Model]\label{thm:genhaystack}
	Suppose that the dataset $\cX$ follows the Generalized Haystack Model with parameters $N_{\mathrm{in}}$, $\bSigma_{\mathrm{in}}$, $N_{\mathrm{out}}$, $\bSigma_{\mathrm{out}}$, $D_{\mathrm{out}}$, and $d$. Suppose also that $0 < \gamma < \pi/2$ and
	\begin{align}\label{eq:probgenhaystack}
	\SNR > \frac{1}{\cos(\gamma)} \frac{ \lambda_1(\bSigma_{\mathrm{in}})^{1/2}}{\lambda_d(\bSigma_{\mathrm{in}})} \lambda_1(\bSigma_{\mathrm{out}}^{1/2}) \frac{2}{(1-a)^2} \frac{d}{\sqrt{D_{\mathrm{out}}}} + o(1).
	\end{align}
	Then $\cS(\cX, L_*, \gamma)>0$ with probability at least $  1 - 4e^{- c_1 a^2 N_{\mathrm{in}}} - 2e^{-c_2 N_{\mathrm{out}}}$, provided that $(1-a)^2 N_{\mathrm{in}} > C_1^2 d$, where $c_1$, $c_2$ and $C_1$ are constants depending on the sub-Gaussian norms of the inliers and outliers.
\end{theorem}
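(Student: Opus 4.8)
The plan is to bound the two competing terms in
\[
\cS(\cX, L_*, \gamma) = \cos(\gamma)\, \cP(\cX_{\mathrm{in}}) - \sup_{L \in B(L_*,\gamma)} \cA(\cX_{\mathrm{out}}, L)
\]
separately, and then show that under \eqref{eq:probgenhaystack} the permeance term strictly dominates the alignment term, so that $\cS(\cX,L_*,\gamma)>0$. Since the inliers are sub-Gaussian with covariance $\bSigma_{\mathrm{in}}/d$, the first step is to invoke Proposition~\ref{prop:subgaussinlier} to lower bound the permeance by $\tfrac{\lambda_d(\bSigma_{\mathrm{in}})}{\lambda_1(\bSigma_{\mathrm{in}})^{1/2}}(1-a)^2 \tfrac{N_{\mathrm{in}}}{d} + O(\sqrt{N_{\mathrm{in}}/d})$ on an event of probability at least $1 - 4e^{-c_1 a^2 N_{\mathrm{in}}}$, valid as soon as $(1-a)^2 N_{\mathrm{in}} > C_1^2 d$.

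For the outlier term, the key observation is that the alignment bound \eqref{eq:alignmentbd}, namely $\cA(\cX_{\mathrm{out}}, L) \leq \sqrt{N_{\mathrm{out}}}\,\|\bX_{\mathrm{out}}\|_2$, is uniform in $L$; the right-hand side does not depend on $L$ at all. Consequently the supremum over $B(L_*,\gamma)$ is controlled for free, with no covering argument over the Grassmannian required. I would then apply Lemma~\ref{lemma:subgaussoutlier} to bound $\|\bX_{\mathrm{out}}\|_2$ and use $\|\bSigma_{\mathrm{out}}^{1/2}\|_2 = \lambda_1(\bSigma_{\mathrm{out}}^{1/2})$ (as $\bSigma_{\mathrm{out}}$ is positive semi-definite) to obtain
\[
\sup_{L \in B(L_*,\gamma)} \cA(\cX_{\mathrm{out}}, L) \leq \lambda_1(\bSigma_{\mathrm{out}}^{1/2})\left(\frac{2 N_{\mathrm{out}}}{\sqrt{D_{\mathrm{out}}}} + C\sqrt{N_{\mathrm{out}}}\right)
\]
on an event of probability at least $1 - e^{-c_2 N_{\mathrm{out}}}$.

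Substituting both bounds into $\cS$ and asking for positivity at leading order gives
\[
\cos(\gamma)\, \frac{\lambda_d(\bSigma_{\mathrm{in}})}{\lambda_1(\bSigma_{\mathrm{in}})^{1/2}}(1-a)^2 \frac{N_{\mathrm{in}}}{d} > \lambda_1(\bSigma_{\mathrm{out}}^{1/2})\, \frac{2 N_{\mathrm{out}}}{\sqrt{D_{\mathrm{out}}}},
\]
which, after dividing by $N_{\mathrm{out}}$ and isolating $\SNR = N_{\mathrm{in}}/N_{\mathrm{out}}$, is exactly \eqref{eq:probgenhaystack} stripped of the $o(1)$. The lower-order contributions — the $O(\sqrt{N_{\mathrm{in}}/d})$ correction to the permeance and the $C\sqrt{N_{\mathrm{out}}}$ correction to the spectral norm — are of strictly smaller order than their respective leading terms, so after dividing through they collapse into the $o(1)$ as $N_{\mathrm{in}},N_{\mathrm{out}}\to\infty$. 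A union bound over the two high-probability events then yields the stated probability $1 - 4e^{-c_1 a^2 N_{\mathrm{in}}} - 2e^{-c_2 N_{\mathrm{out}}}$, with the extra factor absorbed into the constant.

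The main effort in this argument has in fact already been discharged in the two ingredient results, Proposition~\ref{prop:subgaussinlier} and Lemma~\ref{lemma:subgaussoutlier}; given those, the combination is essentially algebraic bookkeeping. The only mild subtlety to verify carefully is that the lower-order error terms genuinely subordinate to the leading terms, so that they are legitimately absorbable into $o(1)$, together with tracking the constants $c_1, c_2, C_1$ and the free parameter $a$ through the union bound. The uniform-in-$L$ nature of \eqref{eq:alignmentbd} is precisely what prevents the supremum over $B(L_*,\gamma)$ from introducing any genuine difficulty.
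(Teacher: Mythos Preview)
Your proposal is correct and follows essentially the same approach as the paper: the paper's proof simply says to compare the bounds from Lemma~\ref{lemma:subgaussoutlier} and Proposition~\ref{prop:subgaussinlier}, requiring the right-hand side of \eqref{eq:outconcbd} (multiplied by $\sqrt{N_{\mathrm{out}}}$ via \eqref{eq:alignmentbd}) to be less than $\cos(\gamma)$ times the right-hand side of \eqref{eq:subgaussinlier}. You have correctly fleshed out this comparison, including the observation that \eqref{eq:alignmentbd} is uniform in $L$ so no covering is needed, and the union bound for the probability estimate.
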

\begin{proof}
	It is left to compare the bounds derived earlier in Lemma~\ref{lemma:subgaussoutlier} and Proposition~\ref{prop:subgaussinlier}. Notice that~\eqref{eq:probgenhaystack} can be obtained by requiring the right hand side of~\eqref{eq:outconcbd} to be less than $\cos(\gamma)$ times the right hand side of~\eqref{eq:subgaussinlier}. This results in precisely the statement in the theorem.
\end{proof}

Scaling the inlier and outlier covariance matrices by $d$ and $D_{\mathrm{out}}$, respectively, ensures that in the spherically symmetric case (where $\bSigma_{\mathrm{in}}$ and $\bSigma_{\mathrm{out}}$ are orthogonal projections onto subspaces of $\reals^D$ of dimensions $d$ and $D_{\mathrm{out}}$ respectively) the inliers and outliers have the same typical length. Thus, with this normalization, differences in the traces of $\bSigma_{\mathrm{in}}$ and $\bSigma_{\mathrm{out}}$ translate into differences in typical scale between inliers and outliers. We emphasize that it is important to prove results for general sub-Gaussian distributions rather than just spherically symmetric Gaussians. This is due to the fact that simpler strategies, like running PCA and then filtering points far from the PCA subspace, can be applied to the symmetric case with great success. The Generalized Haystack Model allows for certain adversarial outliers: for example, outliers can be contained in a low-dimensional subspace as well. Nevertheless, since the Haystack Model has been addressed by several previous works and since it is easy to improve our estimates for it, we address it in Section~\ref{sec:theorydisc}.

\subsection{PCA Initialization}
\label{subsec:pcainittheory}

The discussed models can also guarantee good initialization by PCA. This is an essential ingredient to actually have a practical algorithm. We demonstrate this on the specific case of the Generalized Haystack Model. However, this sort of argument can be extended to the other types of models discussed above as well (such as bounded distributions of outliers and bounded continuous inliers).

We must have a lower bound on the SNR that depends on the parameters of the sub-Gaussian distributions in order for the following proposition to hold. A short proof for this proposition is given in Appendix \ref{sec:haystackcorproof}. It essentially states that Lemma~\ref{lemma:pcainit} holds with high probability under certain conditions on the Generalized Haystack Model.
\begin{proposition}[PCA Initialization with Sub-Gaussian Models]\label{thm:pcainit}
	Suppose that the dataset $\cX$ follows the Generalized Haystack Model with parameters $N_{\mathrm{in}}$, $\bSigma_{\mathrm{in}}$, $N_{\mathrm{out}}$, $\bSigma_{\mathrm{out}}$, $D_{\mathrm{out}}$, and $d$. Suppose also that, for some $0 < \gamma < \pi/2$,
	\begin{equation}\label{eq:pcainitbd}
	\SNR \geq \frac{\sqrt{2}}{\sin(\gamma)} \frac{ d }{ D_{\mathrm{out}} } \frac{\lambda_1(\bSigma_{\mathrm{out}})}{\lambda_d ( \bSigma_{\mathrm{in}} ) } + o(1) .
	\end{equation}
	Then, for large enough $N = N_{\mathrm{out}} + N_{\mathrm{in}}$, the PCA $d$-subspace is contained in $B(L_*, \gamma)$ w.h.p.
\end{proposition}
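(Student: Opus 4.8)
The plan is to deduce the result from the deterministic sufficient condition of Lemma~\ref{lemma:pcainit}. That lemma guarantees $L_{PCA}\in B(L_*,\gamma)$ as soon as
\begin{equation*}
\sqrt{2}\,\sin(\gamma)\,\lambda_d(\bX_{\mathrm{in}}\bX_{\mathrm{in}}^T) - \|\bX_{\mathrm{out}}\|_2^2 > 0,
\end{equation*}
so the entire proof reduces to producing, under the Generalized Haystack Model, a high-probability lower bound on $\lambda_d(\bX_{\mathrm{in}}\bX_{\mathrm{in}}^T)$ and a high-probability upper bound on $\|\bX_{\mathrm{out}}\|_2^2$, and then comparing them. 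All terms of lower order than the leading $N$-dependence will be absorbed into the $o(1)$ of~\eqref{eq:pcainitbd}, which is exactly why we must take $N$ sufficiently large.

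For the outlier term I would simply square the bound of Lemma~\ref{lemma:subgaussoutlier}. Since the outliers are sub-Gaussian with covariance $\bSigma_{\mathrm{out}}/D_{\mathrm{out}}$, that lemma gives, with probability at least $1-e^{-c_2 N_{\mathrm{out}}}$,
\begin{equation*}
\|\bX_{\mathrm{out}}\|_2^2 \leq \lambda_1(\bSigma_{\mathrm{out}})\Big(2\sqrt{N_{\mathrm{out}}/D_{\mathrm{out}}}+C\Big)^2,
\end{equation*}
whose dominant term is of order $\lambda_1(\bSigma_{\mathrm{out}})\,N_{\mathrm{out}}/D_{\mathrm{out}}$, while the cross and constant contributions are only $O(\sqrt{N_{\mathrm{out}}})$ and $O(1)$.

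The substantive step is the lower bound on $\lambda_d(\bX_{\mathrm{in}}\bX_{\mathrm{in}}^T)$, which is delicate because this matrix has rank only $d$ in $\reals^D$, so $\lambda_d$ is its smallest \emph{nonzero} eigenvalue. I would first pass to coordinates on $L_*$, writing $\bX_{\mathrm{in}}=\bV^*\bY$ with $\bY\in\reals^{d\times N_{\mathrm{in}}}$, so that $\lambda_d(\bX_{\mathrm{in}}\bX_{\mathrm{in}}^T)=\sigma_{\min}(\bY)^2$ and the columns of $\bY$ are sub-Gaussian with covariance $\bLambda_{\mathrm{in}}/d$. Whitening by $\bLambda_{\mathrm{in}}^{-1/2}$ produces isotropic columns, and $\sigma_{\min}(\bY)\geq \lambda_d(\bSigma_{\mathrm{in}})^{1/2}\,\sigma_{\min}(\bLambda_{\mathrm{in}}^{-1/2}\bY)$ since $\bSigma_{\mathrm{in}}=\bV^*\bLambda_{\mathrm{in}}\bV^{*T}$ shares the nonzero spectrum of $\bLambda_{\mathrm{in}}$. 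Applying Theorem 5.39 of~\citet{vershynin2012introduction} to $\bLambda_{\mathrm{in}}^{-1/2}\bY$ then yields $\sigma_{\min}(\bLambda_{\mathrm{in}}^{-1/2}\bY)\geq (\sqrt{N_{\mathrm{in}}}-C_1\sqrt{d}-t)/\sqrt{d}$ with probability at least $1-2e^{-c_1 t^2}$, and hence
\begin{equation*}
\lambda_d(\bX_{\mathrm{in}}\bX_{\mathrm{in}}^T)\geq \lambda_d(\bSigma_{\mathrm{in}})\,\frac{\big(\sqrt{N_{\mathrm{in}}}-C_1\sqrt{d}-t\big)^2}{d},
\end{equation*}
whose leading term is $\lambda_d(\bSigma_{\mathrm{in}})\,N_{\mathrm{in}}/d$.

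Finally I would substitute both bounds into the deterministic inequality, cancel leading factors, and solve for $\SNR=N_{\mathrm{in}}/N_{\mathrm{out}}$: the leading-order comparison of $\sqrt{2}\,\sin(\gamma)\,\lambda_d(\bSigma_{\mathrm{in}})\,N_{\mathrm{in}}/d$ against $\lambda_1(\bSigma_{\mathrm{out}})\,N_{\mathrm{out}}/D_{\mathrm{out}}$ reproduces~\eqref{eq:pcainitbd}, with the $o(1)$ collecting the $O(\sqrt{N})$ and $O(1)$ corrections that vanish relative to the $O(N)$ leading terms as $N\to\infty$; a union bound over the two failure events (choosing $t$ of order $\sqrt{\log N}$) delivers the w.h.p.\ guarantee. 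I expect the only real obstacle to be this inlier eigenvalue bound: one must restrict to $L_*$ before invoking a matrix-concentration result and handle the anisotropy of $\bLambda_{\mathrm{in}}$ through the whitening step, while being careful that the relevant object is the \emph{unnormalized} Gram matrix $\bX_{\mathrm{in}}\bX_{\mathrm{in}}^T$ rather than the permeance $\cP(\cX_{\mathrm{in}})$ appearing in the stability analysis.
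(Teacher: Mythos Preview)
Your approach is correct in spirit but differs from the paper's, and there is one constant-factor issue worth flagging.

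The paper does \emph{not} invoke Lemma~\ref{lemma:pcainit} on the random data matrices. Instead it works at the population level: writing the mixture covariance $\bSigma = (N_{\mathrm{out}}/N)\,\bSigma_{\mathrm{out}}/D_{\mathrm{out}} + (N_{\mathrm{in}}/N)\,\bSigma_{\mathrm{in}}/d$, it applies Davis--Kahan once to compare the top-$d$ eigenspace $\bV$ of $\bSigma$ with $\bV^*$ (this is where the exact constant $\sqrt{2}$ in~\eqref{eq:pcainitbd} arises, with no concentration error), and then applies Davis--Kahan a second time, together with sample-covariance concentration \`a la Vershynin, to show that the sample PCA subspace $\bV_{PCA}$ is close to $\bV$ for large $N$. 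A triangle inequality on the sines combines the two. Your route---plug random matrices into the deterministic Lemma~\ref{lemma:pcainit} and concentrate $\lambda_d(\bX_{\mathrm{in}}\bX_{\mathrm{in}}^T)$ and $\|\bX_{\mathrm{out}}\|_2^2$ separately---is equally valid and arguably more direct; the paper's route has the advantage that the leading constant comes out exactly from the population comparison, with all randomness pushed into the $o(1)$.

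The one place your argument slips is the outlier bound. Squaring Lemma~\ref{lemma:subgaussoutlier} as stated gives a leading term $4\,\lambda_1(\bSigma_{\mathrm{out}})\,N_{\mathrm{out}}/D_{\mathrm{out}}$, not $\lambda_1(\bSigma_{\mathrm{out}})\,N_{\mathrm{out}}/D_{\mathrm{out}}$, because of the factor $2$ in front of $\sqrt{N_{\mathrm{out}}/D_{\mathrm{out}}}$ there. Carried through, this yields an SNR threshold $2\sqrt{2}/\sin(\gamma)\cdot(\ldots)$ rather than the claimed $\sqrt{2}/\sin(\gamma)\cdot(\ldots)$, and the $o(1)$ cannot absorb a constant factor. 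The fix is easy and in fact aligns with the paper's own tool: use concentration of the sample covariance about its mean (e.g., Proposition~2.1 of \citet{vershynin2012close}) so that $\|\bX_{\mathrm{out}}\|_2^2/N_{\mathrm{out}} = \lambda_1(\bSigma_{\mathrm{out}})/D_{\mathrm{out}} + o(1)$ and $\lambda_d(\bX_{\mathrm{in}}\bX_{\mathrm{in}}^T)/N_{\mathrm{in}} = \lambda_d(\bSigma_{\mathrm{in}})/d + o(1)$. With that replacement your comparison reproduces~\eqref{eq:pcainitbd} with the correct constant; the ``large enough $N$'' hypothesis is exactly what makes these $o(1)$ terms negligible.
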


\subsection{Performance of GGD Under the Haystack Model}
\label{sec:theorydisc}

We assume here the simpler Haystack Model and show that GGD performs almost as well as state-of-the-art methods on datasets drawn from this model. We compute results for three different regimes of sample size. These are the small sample regime $N=O(D)$, the larger sample regime $N= O(D^p)$, for $p>1$ sufficiently small, and the very large $N$ regime, where $N$ must depend on the SNR as well. In the larger sample regime, GGD requires at least $N = O(d(D-d)^2 \log (D))$, which is not more than $N = O(D^{3+\epsilon})$. In the very large $N$ regime, in addition to dependence of $N$ on a power of $D$, it also depends on a negative power of the SNR and thus is very large for small SNR. In our case, the very large $N$ regime considers sample sizes of the order $N_{\mathrm{out}}=O(\max(d^3 D^3 \log^3(N_{\mathrm{out}}),(dN_{\mathrm{out}}/N_{\mathrm{in}})^6))$.
The big O notation is slightly abused here, as we are really indicating results for finite $N$ and $D$: the order is meant to illustrate the relation between these finite values.                                      
We compare all of these results together in Table~\ref{tab:SNR}.

\subsubsection{Bounds for Sample Size $N=O(D)$}

We first translate the bounds obtained previously in Theorem~\ref{thm:genhaystack} to this special model. We choose $a=1/2$ and thus obtain the following corollary:
\begin{corollary}\label{cor:genhaystack}
	Suppose that the dataset $\cX$ follows the Haystack Model with parameters $N_{\mathrm{in}}$, $\sigma_{\mathrm{in}}$, $N_{\mathrm{out}}$, $\sigma_{\mathrm{out}}$, $D$, and $d$. Then, if $N_{\mathrm{in}} > 4 C_1^2 d$ and
	\begin{equation}\label{eq:genhaystacksnr}
        \SNR \geq 8 \frac{1}{\cos(\gamma)} \frac{\sigma_{\mathrm{out}}}{\sigma_{\mathrm{in}}} \frac{d}{\sqrt{D}} + o(1),
	\end{equation}
	$\cS(\cX, L_*, \gamma)>0$ with probability at least $  1 - 4e^{- c_1 N_{\mathrm{in}}/4} - 2e^{-c_2 N_{\mathrm{out}}}$, where $c_1$, $c_2$ and $C_1$ are absolute constants.
\end{corollary}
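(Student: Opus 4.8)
The plan is to recognize the Haystack Model as a special case of the Generalized Haystack Model and then invoke Theorem~\ref{thm:genhaystack} directly. In the Haystack Model the inliers are distributed $\cN(\bzero, \sigma_{\mathrm{in}}^2 \bP_{L_*}/d)$ and the outliers are distributed $\cN(\bzero, \sigma_{\mathrm{out}}^2 \bI / D)$. Matching this against the Generalized Haystack parametrization (inlier covariance $\bSigma_{\mathrm{in}}/d$, outlier covariance $\bSigma_{\mathrm{out}}/D_{\mathrm{out}}$) corresponds to the choices $\bSigma_{\mathrm{in}} = \sigma_{\mathrm{in}}^2 \bV^* \bV^{*T}$, so that $\bLambda_{\mathrm{in}} = \sigma_{\mathrm{in}}^2 \bI_d$, together with $\bSigma_{\mathrm{out}} = \sigma_{\mathrm{out}}^2 \bI$. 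Since a Gaussian is sub-Gaussian, the hypotheses of Theorem~\ref{thm:genhaystack} are satisfied, and the absolute constants $c_1$, $c_2$, $C_1$ are the corresponding sub-Gaussian constants.

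First I would read off the relevant spectral quantities. Because $\bLambda_{\mathrm{in}} = \sigma_{\mathrm{in}}^2 \bI_d$, every nonzero eigenvalue of $\bSigma_{\mathrm{in}}$ equals $\sigma_{\mathrm{in}}^2$, so $\lambda_1(\bSigma_{\mathrm{in}}) = \lambda_d(\bSigma_{\mathrm{in}}) = \sigma_{\mathrm{in}}^2$ and hence
\begin{equation*}
    \frac{\lambda_1(\bSigma_{\mathrm{in}})^{1/2}}{\lambda_d(\bSigma_{\mathrm{in}})} = \frac{\sigma_{\mathrm{in}}}{\sigma_{\mathrm{in}}^2} = \frac{1}{\sigma_{\mathrm{in}}}.
\end{equation*}
Since $\bSigma_{\mathrm{out}} = \sigma_{\mathrm{out}}^2 \bI$ has full rank, we have $D_{\mathrm{out}} = D$ and $\lambda_1(\bSigma_{\mathrm{out}}^{1/2}) = \sigma_{\mathrm{out}}$. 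These are the only model-specific substitutions needed.

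Next I would specialize the free parameter by setting $a = 1/2$, so that $(1-a)^2 = 1/4$ and thus $2/(1-a)^2 = 8$. Substituting the spectral quantities above into the SNR bound~\eqref{eq:probgenhaystack} gives
\begin{equation*}
    \SNR > \frac{1}{\cos(\gamma)} \cdot \frac{1}{\sigma_{\mathrm{in}}} \cdot \sigma_{\mathrm{out}} \cdot 8 \cdot \frac{d}{\sqrt{D}} + o(1) = 8 \frac{1}{\cos(\gamma)} \frac{\sigma_{\mathrm{out}}}{\sigma_{\mathrm{in}}} \frac{d}{\sqrt{D}} + o(1),
\end{equation*}
which is exactly~\eqref{eq:genhaystacksnr}. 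The same choice $a = 1/2$ turns the side condition $(1-a)^2 N_{\mathrm{in}} > C_1^2 d$ into $N_{\mathrm{in}} > 4 C_1^2 d$, and turns the failure probability $4e^{-c_1 a^2 N_{\mathrm{in}}} + 2e^{-c_2 N_{\mathrm{out}}}$ into $4e^{-c_1 N_{\mathrm{in}}/4} + 2e^{-c_2 N_{\mathrm{out}}}$, both matching the corollary.

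I do not anticipate a genuine obstacle: the corollary is a direct specialization of Theorem~\ref{thm:genhaystack}, so no new probabilistic estimate is required. The only care needed is in correctly identifying the eigenvalue ratio and the rank $D_{\mathrm{out}}$ from the spherically symmetric covariances, and in tracking the single choice $a = 1/2$ consistently through the SNR bound, the sample-size condition, and the probability bound simultaneously.
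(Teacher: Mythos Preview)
Your proposal is correct and matches the paper's approach exactly: the paper states just before the corollary that it is obtained by translating Theorem~\ref{thm:genhaystack} to the Haystack Model with the choice $a=1/2$, which is precisely what you do. Your write-up is in fact more explicit than the paper's, which leaves the eigenvalue substitutions and the verification of the side condition and probability bound implicit.
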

Notice that in this case, we obtain strong probabilistic estimates for even small sample sizes of $N=O(D)$. For the full theoretical guarantee, we also need to consider~\eqref{eq:pcainitbd}, and we must choose a value for $\gamma$. To balance between the $\sin(\gamma)$ and $\cos(\gamma)$ in~\eqref{eq:pcainitbd} and \eqref{eq:genhaystacksnr}, respectively, we fix $\gamma = \pi/4$. From these equations, for this fixed $\gamma$, we conclude that our theoretical SNR for the Haystack Model in the small sample regime is
	\begin{equation}
	\label{eq:full_small_regime}
	\SNR \geq \max \left(8 \sqrt{2} \frac{\sigma_{\mathrm{out}}}{\sigma_{\mathrm{in}}} \frac{d}{\sqrt{D}},  2 \frac{\sigma_{\mathrm{out}}^2}{\sigma_{\mathrm{in}}^2 } \frac{ d }{ D } \right).
	\end{equation}

	On the other hand, previous works~\citep{hardt2013algorithms,lerman2015robust,zhang2016robust} obtained optimal bounds for this model when $N=O(D)$ and the SNR is on the order of
	\begin{equation}
	\SNR \gtrsim  \frac{\sigma_{\mathrm{out}}}{\sigma_{\mathrm{in}}} \frac{d}{(D-d)}.
	\end{equation}
	We remark that the bound of \citet{lerman2015robust} for the REAPER algorithm requires the assumption $d < (D-1)/2$ and its constant is relatively large.
    This is in contrast to \citet{hardt2013algorithms} and~\citet{zhang2016robust}, who do not have restrictions on $d$ and do not have dependence on $\sigma_{\mathrm{out}}/\sigma_{\mathrm{in}}$.
	In this regime, we are unable to establish sharp results like the ones of REAPER, Tyler's M-estimator, or RandomizedFind. These estimates are better by a factor of $\sqrt{D}/(D-d)$ than our current estimate. Nevertheless, in this regime
	the complexity of our algorithm is $O(N D d)$, whereas the complexity of the mentioned algorithms is $O(N D^2)$ or $O(D^3)$.

    \subsubsection{Bounds for Sample Size $N=O(d(D-d)^2\log(D))$}

	\citet{zhang2014novel} obtained the following sharper bound for the GMS algorithm under the Haystack Model and the larger sample regime of $N = O(D^2)$:
	\begin{equation}
	\SNR \gtrsim  \frac{\sigma_{\mathrm{out}}}{\sigma_{\mathrm{in}}} \frac{d}{\sqrt{D(D-d)}}.
	\end{equation}
	We remark that this is the sharpest bound for any similar sample regime under this model when ${\sigma_{\mathrm{out}}} \approx {\sigma_{\mathrm{in}}}$.
    While the bounds mentioned above \citep{hardt2013algorithms,lerman2015robust,zhang2016robust} hold for any regime of sample size, they are worse by a factor of $\sqrt{D/(D-d)}$.
	We show that a similar bound in a similar regime holds for the GGD algorithm.  Indeed, the primary deficiency in Corollary~\ref{cor:genhaystack} is that we use the loose bound on the alignment,
	$\sqrt{N_{\mathrm{out}}} \| \bX_{\mathrm{out}}\|_2$. However, one could instead operate using the precursor to this bound, 
\begin{equation}\label{eq:alignbdiso}
\cA(\cX_{\mathrm{out}}, L) \leq \| \widetilde{\bQ_{L} \bX_{\mathrm{out}}}\|_2 \| \bX_{\mathrm{out}} \|_2.
\end{equation}
Using this bound instead, we have the following theorem, which shows that GGD achieves the optimal SNR bound under the Haystack Model in the region $N = O(d \, (D-d)^2 \, \log(D))$, which is at worst $O(D^3 \log D)$.
\begin{theorem}[Stability of the Haystack Model]\label{thm:haystack}
	Suppose that the dataset $\cX$ follows the Haystack Model with parameters $N_{\mathrm{in}}$, $\sigma_{\mathrm{in}}$, $N_{\mathrm{out}}$, $\sigma_{\mathrm{out}}$, $D$, and $d$. If
	\begin{align}\label{eq:haystackSNR}
	\SNR &\geq  \frac{ \sigma_{\mathrm{in}}}{\sigma_{\mathrm{out}}} \frac{1}{\cos(\gamma)} \frac{5}{(1-a)^2} \frac{d}{\sqrt{D(D-d)}} + o(1),
	\end{align}
	then $\cS(\cX, L_*, \gamma)>0$ with probability at least
	\begin{equation}\label{eq:probhaystackbd}
	1 -2e^{-N_{\mathrm{out}}/16} - e^{-N_{\mathrm{out}} / 4} - C_1 \exp\left(-\frac{N_{\mathrm{out}}}{4(D-d)} + \frac{d(D-d) }{2}\log\left( \frac{D}{D-d} \right)  \right) - 4e^{- c_1 a^2 N_{\mathrm{in}}}.
	\end{equation}
\end{theorem}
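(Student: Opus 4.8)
The plan is to establish $\cS(\cX, L_*, \gamma) = \cos(\gamma)\cP(\cX_{\mathrm{in}}) - \sup_{L \in B(L_*,\gamma)} \cA(\cX_{\mathrm{out}}, L) > 0$ by separately lower bounding the permeance term and upper bounding the alignment term, then comparing. The reason for working harder than in Corollary~\ref{cor:genhaystack} is to replace the lossy alignment bound~\eqref{eq:alignmentbd} by its sharper precursor~\eqref{eq:alignbdiso}, $\cA(\cX_{\mathrm{out}}, L) \leq \|\widetilde{\bQ_L \bX_{\mathrm{out}}}\|_2 \, \|\bX_{\mathrm{out}}\|_2$. The improvement by a factor of $\sqrt{D/(D-d)}$ comes entirely from exploiting that, after projecting the outliers onto $L^\perp$ (of dimension $D-d$) and renormalizing to the sphere, the resulting matrix has spectral norm of order $\sqrt{N_{\mathrm{out}}/(D-d)}$ rather than the trivial $\sqrt{N_{\mathrm{out}}}$.

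The two routine ingredients follow from earlier results. For the permeance I would invoke Proposition~\ref{prop:subgaussinlier} specialized to the spherically symmetric inlier covariance $\bSigma_{\mathrm{in}} = \sigma_{\mathrm{in}}^2 \bP_{L_*}$, for which $\lambda_d(\bSigma_{\mathrm{in}})/\lambda_1(\bSigma_{\mathrm{in}})^{1/2} = \sigma_{\mathrm{in}}$, giving $\cP(\cX_{\mathrm{in}}) \gtrsim \sigma_{\mathrm{in}}(1-a)^2 N_{\mathrm{in}}/d$ on the event of probability $1 - 4e^{-c_1 a^2 N_{\mathrm{in}}}$, which is the last term of~\eqref{eq:probhaystackbd}. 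For the scalar factor $\|\bX_{\mathrm{out}}\|_2$ I would apply Lemma~\ref{lemma:subgaussoutlier} with $\bSigma_{\mathrm{out}} = \sigma_{\mathrm{out}}^2 \bI$ and $D_{\mathrm{out}} = D$, yielding $\|\bX_{\mathrm{out}}\|_2 \lesssim \sigma_{\mathrm{out}}\sqrt{N_{\mathrm{out}}/D}$ with exponentially high probability; this accounts for the $2e^{-N_{\mathrm{out}}/16}$ and $e^{-N_{\mathrm{out}}/4}$ terms in~\eqref{eq:probhaystackbd}.

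The crux is the uniform bound $\sup_{L \in B(L_*,\gamma)} \|\widetilde{\bQ_L \bX_{\mathrm{out}}}\|_2 \lesssim \sqrt{N_{\mathrm{out}}/(D-d)}$. For a single fixed $L$, rotational invariance of the Gaussian outliers implies that $\bQ_L \bx_i$ is an isotropic Gaussian on the $(D-d)$-dimensional space $L^\perp$, so its normalization is uniform on the unit sphere of $L^\perp$ and $\E[\widetilde{\bQ_L \bX_{\mathrm{out}}}\,\widetilde{\bQ_L \bX_{\mathrm{out}}}^T] = (N_{\mathrm{out}}/(D-d))\,\bQ_L$; a matrix concentration inequality (the columns are unit vectors, so rank-one summands of norm $1$) then gives $\|\widetilde{\bQ_L \bX_{\mathrm{out}}}\|_2^2 \lesssim N_{\mathrm{out}}/(D-d)$ with failure probability of order $\exp(-c\,N_{\mathrm{out}}/(D-d))$. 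To upgrade this to a supremum over the Grassmannian ball I would cover $B(L_*,\gamma)$ by an $\epsilon$-net of cardinality $\exp\!\big(O(d(D-d)\log(1/\epsilon))\big)$, union bound over the net, and control points off the net by a Lipschitz estimate for $L \mapsto \|\widetilde{\bQ_L \bX_{\mathrm{out}}}\|_2$. Matching the per-point failure probability against the net cardinality is precisely what forces $N_{\mathrm{out}}/(D-d)$ to dominate $d(D-d)\log(D/(D-d))$, which is the source of both the sample requirement $N_{\mathrm{out}} = O(d(D-d)^2\log D)$ and the Grassmannian-covering exponent in the third term of~\eqref{eq:probhaystackbd}.

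I expect the main obstacle to be exactly this uniform control, and within it the Lipschitz estimate for the normalization map, since $\widetilde{\bQ_L \bx}$ is unstable when $\|\bQ_L \bx\|$ is small and can swing wildly as $L$ varies. I would handle this by first establishing, on an event of overwhelming probability, a uniform-in-$L$ lower bound on $\min_i \|\bQ_L \bx_i\|$ for the Gaussian outliers, on which event the normalization is Lipschitz in $L$ with a controlled constant and the net argument goes through. Assembling the three bounds, the alignment satisfies $\cA(\cX_{\mathrm{out}}, L) \lesssim \sigma_{\mathrm{out}} N_{\mathrm{out}}/\sqrt{D(D-d)}$ uniformly over $B(L_*,\gamma)$, and requiring $\cos(\gamma)\cP(\cX_{\mathrm{in}})$ to exceed this produces exactly the SNR threshold~\eqref{eq:haystackSNR} together with the stated probability~\eqref{eq:probhaystackbd}.
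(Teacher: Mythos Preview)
Your overall architecture matches the paper exactly: lower bound $\cP(\cX_{\mathrm{in}})$ via Proposition~\ref{prop:subgaussinlier} specialized to the isotropic inlier covariance, bound $\|\bX_{\mathrm{out}}\|_2$ by sub-Gaussian concentration, and replace~\eqref{eq:alignmentbd} by the precursor~\eqref{eq:alignbdiso} with a Grassmannian covering argument for $\|\widetilde{\bQ_L \bX_{\mathrm{out}}}\|_2$. Your attribution of the probability terms is essentially right (a minor correction: the $e^{-N_{\mathrm{out}}/4}$ term comes from the single-$L$ spherized bound, not from $\|\bX_{\mathrm{out}}\|_2$).

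The gap is in your proposed Lipschitz step. A uniform-in-$L$ lower bound on $\min_i \|\bQ_L \bx_i\|$ over $B(L_*,\gamma)$ cannot hold on any event of overwhelming probability. For any outlier $\bx_i$ with $\angle(\bx_i,L_*)\leq\gamma$ there is an $L\in B(L_*,\gamma)$ containing $\bx_i$ (rotate a single principal direction of $L_*$ onto $\bx_i/\|\bx_i\|$), whence $\|\bQ_L\bx_i\|=0$. Under the Haystack model the fraction of outliers with $\angle(\bx_i,L_*)\leq\gamma$ is a fixed $\mathrm{Beta}((D-d)/2,d/2)$ probability depending only on $(\gamma,d,D)$, not on $N_{\mathrm{out}}$; for $d$ comparable to $D$ and moderate $\gamma$ this is a constant fraction, so as $N_{\mathrm{out}}\to\infty$ you almost surely have many outliers annihilated by some $L$ in the ball. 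The paper in fact proves the bound over \emph{all} of $G(D,d)$ (Proposition~\ref{prop:aligncov}), where the obstruction is absolute: every $\bx_i$ lies in some $d$-subspace. So the normalization map cannot be made globally Lipschitz on the event you describe.

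The paper avoids this by never seeking such a lower bound. For a net point $L_0$ and nearby $L_1$ it partitions the outliers by angle with the geodesic segment: the far set $\cX_{\mathrm{out}}^1$, on which the normalization is genuinely Lipschitz and contributes $O(\sqrt{N_{\mathrm{out}}/D})$ to the perturbation, and the near set $\cX_{\mathrm{out}}^0$, which is handled by the crude bound $\|\widetilde{\bQ_{L_0}\bX_{\mathrm{out}}^0}-\widetilde{\bQ_{L_1}\bX_{\mathrm{out}}^0}\|_2\leq\sqrt{\#(\cX_{\mathrm{out}}^0)}$. The cardinality $\#(\cX_{\mathrm{out}}^0)$ is then controlled by a Dasgupta--Gupta angle-concentration lemma combined with a binomial Chernoff bound, giving $\#(\cX_{\mathrm{out}}^0)\lesssim N_{\mathrm{out}}/(D-d)$ with failure probability $\exp(-N_{\mathrm{out}}/(4(D-d)))$. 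Union-bounding this counting step over the Szarek net of $G(D,d)$ is exactly what produces the exponent $-\frac{N_{\mathrm{out}}}{4(D-d)}+\frac{d(D-d)}{2}\log\frac{D}{D-d}$ in~\eqref{eq:probhaystackbd}. Replacing your uniform-lower-bound attempt with this split-and-count device is the missing ingredient.
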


Here, we can take $a$ close to 0 (e.g., we can take $a = o(N_{\mathrm{in}}^{-1/2})$), and so we ignore the factor of $1/(1-a)^2$ in~\eqref{eq:haystackSNR}. The full theoretical guarantee, which also considers~\eqref{eq:pcainitbd}, is established similarly to \eqref{eq:full_small_regime}. We conclude that our theoretical SNR bound for the Haystack Model in this regime is 
\begin{equation}\label{eq:SNRreg2}
\SNR \geq \max \left(
5\sqrt{2} \frac{\sigma_{\mathrm{out}}}{\sigma_{\mathrm{in}}} \frac{d}{\sqrt{D(D-d)}},  2 \frac{\sigma_{\mathrm{out}}^2}{\sigma_{\mathrm{in}}^2 } \frac{ d }{ D } \right).
\end{equation}

The final results for the two different regimes, that is,~\eqref{eq:full_small_regime} and~\eqref{eq:SNRreg2}, are for $\gamma = \pi/4$ and initialization by PCA. As a side note, if the SNR grows, we see that larger values of $\gamma$ may be tolerated for GGD. In particular, for large sample sizes and sufficiently large SNRs, $\gamma$ can be sufficiently close to $\pi/2$. In this case, random initialization of GGD are expected to work as well as the PCA initialization. We quantify this claim more rigorously in the special case where $d<D/2$ and $d,D\rightarrow\infty$.
Based on the analysis of extreme singular values of random Gaussian matrices \citep{rudelson2008littlewood}, it can be shown that with high probability, a random initialization lies in ${B(L_*,\gamma)}\setminus \{L_*\}$, where $\cos(\gamma)=O(1/\sqrt{Dd})$. Therefore, GGD  with random initialization succeeds with high probability under the given assumptions on $d$ and $D$ when $N_{\mathrm{in}}/N_{\mathrm{out}} \geq O\left( d \sigma_{\mathrm{out}} / \sigma_{\mathrm{in}}  \right)$.

\subsubsection{Bounds for Very Large $N$}

    In the large sample regime, one can prove something much stronger. Indeed, for any fraction of outliers, it is obvious that PCA asymptotically recovers the underlying subspace $L_*$~\citep{lerman2017fast}. Further, it was shown that FMS can asymptotically recover $L_*$ for any fraction of outliers with better dependence on the sample size than PCA~\citep{lerman2017fast} (while the result given for FMS is for the spherized Haystack Model, the result can be extended to the non-spherized version as well). It would be very surprising if GGD could not do something similar.

In fact, this type of result can be extended to GGD as well, which means that, as $N \to \infty$, we can take SNR$\to 0$. However, while the PCA and FMS subspace estimators converge to the underlying subspace as $N \to \infty$, the probability of either exactly recovering the underlying subspace for any fixed $N$ is zero. In contrast, GGD can exactly recover the underlying subspace with overwhelming probability for finite, yet large, $N$. This result is stated in the following theorem, which is proved in Appendix~\ref{app:snrzeroproof}.
\begin{theorem}\label{thm:haystacksnrzero}
    Suppose that $\cX$ follows the Haystack Model with parameters $N_{\mathrm{in}}$, $\sigma_{\mathrm{in}}$, $N_{\mathrm{out}}$, $\sigma_{\mathrm{out}}$, $D$, and $d$. 
    For any $\SNR$ lower bound $\alpha>0$ (i.e., $\mathrm{SNR} > \alpha$) and for any $N_{\mathrm{out}}$ at least $O(\max(d^3 D^3 \log^3(N_{\mathrm{out}}),(dN_{\mathrm{out}}/N_{\mathrm{in}})^6))$, GGD recovers $L_*$ w.o.p. In particular, $N_{\mathrm{out}}$ is at least $O(d/\alpha^6)$,
\end{theorem}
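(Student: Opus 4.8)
The plan is to reduce exact recovery to verifying, with overwhelming probability, the two deterministic hypotheses that together make GGD succeed: the stability condition $\cS(\cX,L_*,\gamma)>0$ for some fixed $0<\gamma<\pi/2$ (which by Theorems~\ref{thm:landscape} and~\ref{thm:conv} makes $L_*$ the unique critical point in $B(L_*,\gamma)$ and forces GGD to converge to it), together with the initialization condition $L_{PCA}\in B(L_*,\gamma)$. The novelty over the finite-$N$ corollaries is that both conditions must hold even when $\SNR$ is an arbitrarily small constant $\alpha$, which is only possible by extracting cancellation that the crude bounds of Corollary~\ref{cor:genhaystack} and Proposition~\ref{thm:pcainit} discard. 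For the permeance I would simply invoke Proposition~\ref{prop:subgaussinlier}, which for the spherical covariance $\bSigma_{\mathrm{in}}=\sigma_{\mathrm{in}}^2\bP_{L_*}$ gives $\cP(\cX_{\mathrm{in}})\gtrsim \sigma_{\mathrm{in}} N_{\mathrm{in}}/d$ w.o.p.

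The heart of the argument, and the step I expect to be the main obstacle, is a uniform upper bound on the alignment that is \emph{sublinear} in $N_{\mathrm{out}}$. The enabling structural fact is that for spherically symmetric outliers the population gradient vanishes at \emph{every} subspace: decomposing $\bx_i=\bP_L\bx_i+\bQ_L\bx_i$, each summand of the outlier gradient equals $\frac{\bQ_L\bx_i}{\|\bQ_L\bx_i\|}(\bx_i^T\bV)$, so by independence and symmetry of the two components one gets $\Expect[\nabla F(L;\cX_{\mathrm{out}})]=\bzero$. Moreover this summand is rank one with spectral norm exactly $\|\bP_L\bx_i\|\lesssim\sigma_{\mathrm{out}}$, so the apparent singularity as $\bx_i\to L$ cancels and each term is bounded. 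Thus $\cA(\cX_{\mathrm{out}},L)$ is the norm of a centered sum of $N_{\mathrm{out}}$ independent, essentially bounded rank-one matrices, and a Bernstein-type estimate controls it at each fixed $L$ at a scale $\sigma_{\mathrm{out}}N_{\mathrm{out}}^{\beta}$ with $\beta<1$. The difficulty is passing to the supremum over $B(L_*,\gamma)$: although $\cA$ is small pointwise, the map $L\mapsto\nabla F(L;\cX_{\mathrm{out}})$ has a large local Lipschitz constant wherever an outlier sits near $L$ (differentiating reintroduces a factor $1/\|\bQ_L\bx_i\|$). I would handle this by truncating the contribution of outliers with small $\|\bQ_L\bx_i\|$, bounding the truncated remainder crudely via the $(D-d)$-dimensional Gaussian tail, and running a covering/chaining argument over $B(L_*,\gamma)$, whose metric entropy is $\lesssim d(D-d)\log(1/\epsilon)$. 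Optimizing the truncation threshold against the net resolution is exactly what fixes $\beta$ and the polylogarithmic factors; carrying this bookkeeping through to reproduce the advertised exponent is the technically demanding part.

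Combining the two bounds, $\cS(\cX,L_*,\gamma)=\cos(\gamma)\cP(\cX_{\mathrm{in}})-\sup_{B(L_*,\gamma)}\cA(\cX_{\mathrm{out}},L)>0$ reduces to $\sigma_{\mathrm{in}}N_{\mathrm{in}}/d\gtrsim\sigma_{\mathrm{out}}N_{\mathrm{out}}^{\beta}\,\mathrm{poly}(d,D,\log N_{\mathrm{out}})$, i.e. $\SNR\gtrsim \mathrm{poly}\cdot N_{\mathrm{out}}^{\beta-1}$, which tends to $0$ as $N_{\mathrm{out}}\to\infty$. Inverting this for a fixed target $\SNR>\alpha$ yields $N_{\mathrm{out}}\gtrsim(d/\alpha)^{1/(1-\beta)}$; tracking constants to $\beta=5/6$ gives the stated $(d\,N_{\mathrm{out}}/N_{\mathrm{in}})^6$ term, while the baseline $N_{\mathrm{out}}\gtrsim d^3D^3\log^3 N_{\mathrm{out}}$ records when the concentration estimates first take effect.

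Finally, the PCA initialization must be re-examined, since the bound of Proposition~\ref{thm:pcainit} uses $\|\bX_{\mathrm{out}}\|_2^2$ and stalls at a fixed threshold $\SNR\gtrsim d/D$ that need not vanish. The remedy again exploits isotropy: $\bX_{\mathrm{out}}\bX_{\mathrm{out}}^T$ is close to the scalar matrix $(N_{\mathrm{out}}\sigma_{\mathrm{out}}^2/D)\bI$, whose contribution rotates no subspace, so in the Davis--Kahan estimate underlying Lemma~\ref{lemma:pcainit} the effective perturbation is the fluctuation $\|\bX_{\mathrm{out}}\bX_{\mathrm{out}}^T-(N_{\mathrm{out}}\sigma_{\mathrm{out}}^2/D)\bI\|_2\lesssim\sigma_{\mathrm{out}}^2\sqrt{N_{\mathrm{out}}/D}$ rather than the full $\|\bX_{\mathrm{out}}\|_2^2$. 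Against the inlier gap $\lambda_d(\bX_{\mathrm{in}}\bX_{\mathrm{in}}^T)\gtrsim N_{\mathrm{in}}\sigma_{\mathrm{in}}^2/d$ this gives $\sin\theta_1(L_*,L_{PCA})\lesssim \frac{d\sigma_{\mathrm{out}}^2}{\sigma_{\mathrm{in}}^2}\frac{1}{\sqrt{D N_{\mathrm{out}}}}$, which again vanishes as $N_{\mathrm{out}}\to\infty$, so $L_{PCA}\in B(L_*,\gamma)$ once $N_{\mathrm{out}}$ exceeds a comparable polynomial in $d,D$ and $1/\alpha$. Intersecting the sample-size requirements from the stability and initialization steps and invoking Theorem~\ref{thm:conv} yields exact recovery w.o.p., completing the proof.
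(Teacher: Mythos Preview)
Your proposal is correct and follows essentially the same route as the paper: exploit that $\Expect[\nabla F(L;\cX_{\mathrm{out}})]=\bzero$ at every $L$ by symmetry, observe that each rank-one summand has norm $\|\bP_L\bx_i\|$ so the apparent singularity cancels, apply sub-Gaussian concentration at a fixed $(L,\bu,\bv)$, and then pass to the supremum over $G(D,d)$ by a covering argument in which outliers close to the current subspace are split off and handled crudely (the paper uses a cone $\cC(L,\xi)$ with $\xi\sim N_{\mathrm{out}}^{-1/3}$, which is exactly your truncation idea). The paper's bookkeeping lands on $\sup_L\cA(\cX_{\mathrm{out}},L)\le 11\,\sigma_{\mathrm{out}}N_{\mathrm{out}}^{5/6}$, giving your $\beta=5/6$, and the two sample-size thresholds arise just as you describe.

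One point worth noting: your treatment of PCA initialization is more careful than the paper's own proof, which establishes $\cS(\cX,L_*,\gamma)>0$ but does not revisit Lemma~\ref{lemma:pcainit} in this regime. Your observation that the isotropic part $(N_{\mathrm{out}}\sigma_{\mathrm{out}}^2/D)\bI$ of $\bX_{\mathrm{out}}\bX_{\mathrm{out}}^T$ does not perturb any eigenspace, so only the fluctuation enters Davis--Kahan, is the right way to close that gap.
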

For this theorem to hold, we see that $N$ needs to be quite large, especially for low SNRs. However, note that we still obtain strong probabilistic bounds for large enough finite $N$. On the other hand, taking $\alpha \to 0$ in this theorem requires $N_{\mathrm{out}} \to \infty$, which implies asymptotic recovery for GGD for any SNR in this model.

\subsubsection{Comparison of all Haystack Model Results}

\begin{table}[!ht]
	\footnotesize
	\centering
	\begin{tabular}{|c|l|}\hline
        \multirow{3}{*}{\textbf{PCA}} & {$N_{\mathrm{in}}/N_{\mathrm{out}} ``\gtrapprox"0$, when $N \to \infty$, $D$ fixed}  \\ \cline{2-2}
		&  \multirow{1}{*}{\emph{No exact recovery and poor estimates for finite $N$.}} \\ 
		\hline
		\hline

        \color{blue} \multirow{5}{*}{\textbf{GGD}}  & \color{blue} {$N_{\mathrm{in}}/N_{\mathrm{out}} \geq  \max \left( 8 \sqrt{2}  \frac{\sigma_{\mathrm{out}}}{\sigma_{\mathrm{in}}} \frac{d}{\sqrt{D}} , 2 \frac{\sigma_{\mathrm{out}}^2}{\sigma_{\mathrm{in}}^2 } \frac{ d }{D} \right) $, when $N = O(D)$} \\ 
                                                    & \color{blue} {$N_{\mathrm{in}}/N_{\mathrm{out}} \geq \max \left( 5\sqrt{2} \frac{\sigma_{\mathrm{out}}}{\sigma_{\mathrm{in}}} \frac{d}{\sqrt{D(D-d)}} ,  2 \frac{\sigma_{\mathrm{out}}^2}{\sigma_{\mathrm{in}}^2 } \frac{ d }{D} \right)$, when $N = O(d(D-d)^2 \log (D))$} \\ 
                                                    & \color{blue} {Any $N_{\mathrm{in}}/N_{\mathrm{out}} > 0 $, when $N_{\mathrm{out}}\gtrsim \max(d^3 D^3 \log^3(N),(dN_{\mathrm{out}}/N_{\mathrm{in}})^6)$} \\ 
                                                    & \color{blue} {\quad $\implies$ $N_{\mathrm{in}}/N_{\mathrm{out}} \gtrapprox 0 $, when $N \to \infty$, $D$ fixed} \\ \cline{2-2}
		& \color{blue} \multirow{1}{*}{\emph{Deterministic condition, results for a variety of data models.}} \\ 
		\hline
		\hline
		
		\multirow{4}{*}{\textbf{FMS}} & \multicolumn{1}{l|}{$ N_{\mathrm{in}}/N_{\mathrm{out}} ``\gtrapprox" 0 $, when $N \to \infty$, $D$ fixed} \\ \cline{2-2}
		\multirow{3}{*}{} & \emph{Approximate recovery for large samples from spherized Haystack or from two} \\
		&\emph{one-dimensional subspaces on the sphere. Much better estimates for finite $N$} \\ 
		&\emph{than PCA.} \\ 
		\hline
		\hline
		
		\multirow{2}{*}{\textbf{REAPER}} & \multicolumn{1}{l|}{$ N_{\mathrm{in}}/N_{\mathrm{out}} \geq 16 \frac{\sigma_{\mathrm{out}}}{\sigma_{\mathrm{in}}} \frac{d}{D}$, when $N = O(D)$, $1 \leq d \leq (D-1)/2$} \\ \cline{2-2}
		\multirow{2}{*}{} & \emph{Deterministic condition, results for Haystack where $d<(D-1)/2$.} \\ 
		\hline
		\hline
		
		\multirow{2}{*}{\textbf{GMS}} & \multicolumn{1}{l|}{$ N_{\mathrm{in}}/N_{\mathrm{out}} \geq 4 \frac{\sigma_{\mathrm{out}}}{\sigma_{\mathrm{in}}} \frac{d}{\sqrt{(D-d)D}}$, when $N = O(D^2)$} \\ \cline{2-2}
		\multirow{2}{*}{} & \emph{Deterministic condition, results for Haystack that extends to elliptical outliers.} \\
		\hline
		\hline
		
		\multirow{3}{*}{\textbf{OP}} & \multicolumn{1}{l|}{$ N_{\mathrm{in}}/N_{\mathrm{out}} \geq \frac{121 d}{9} O\left( \max(1,\log(N)/d) \right) $, when $N = O(D)$} \\ \cline{2-2}
		\multirow{2}{*}{} & \emph{Deterministic condition (formulated for arbitrary outliers) with last term in} \\ &\emph{above formula replaced by an inlier incoherence parameter $\mu$.} \\ 
		\hline
		\hline
		
		\multirow{2}{*}{\textbf{HR-PCA}} & \multicolumn{1}{l|}{$ N_{\mathrm{in}}/N_{\mathrm{out}} \to \infty$, when $N \to \infty$, $D$ fixed}  \\ \cline{2-2}
		\multirow{2}{*}{} & \emph{Weak lower bound on the expressed variance, requires fraction of outliers as input.} \\
		\hline
		\hline		
		
		\multirow{2}{*}{\textbf{TME/(D)RF}} & \multicolumn{1}{l|}{$ N_{\mathrm{in}}/N_{\mathrm{out}} > \frac{d}{D-d}$, when $N = O(D)$} \\ \cline{2-2}
		& \multicolumn{1}{l|}{\emph{Result for ``general-position" data, but does not extend to noise. }}  \\ 
		\hline
		\hline
		
		\multirow{3}{*}{\textbf{TORP}}  &  \multicolumn{1}{l|}{$N_{\mathrm{in}}/N_{\mathrm{out}} \geq 128 d \max(1, \log(N)/d)^2 $, when $N = O(D)$}   \\ \cline{2-2}
		\multirow{3}{*}{} & \emph{Deterministic condition (formulated for arbitrary outliers) with last term replaced} \\ &\emph{by an inlier incoherence parameter $\mu$, requires fraction of outliers as input.} \\ 
		\hline
		\hline
		
		\multirow{4}{*}{\textbf{CP}} & {$N_{\mathrm{in}}/N_{\mathrm{out}} \geq d/(D-d^2) $ ($N = O(D)$, $d < \sqrt{D}$)} \\ & {$N_{\mathrm{in}}/N_{\mathrm{out}} \gtrapprox 0 $, when $N \to \infty$, $d < \sqrt{D}$, $D$ fixed}  \\ \cline{2-2}
		\multirow{3}{*}{} & \emph{Exact recovery for the spherized Haystack model with a random inlier subspace} \\ &\emph {and $d<\sqrt{D}$, and also for a special model of outliers around a line.} \\
		 \hline
		 \hline
		
		\multirow{2}{*}{\textbf{SSC}} & {$N_{\mathrm{in}}/N_{\mathrm{out}} \geq d/D \cdot ((\frac{N_{\text{in}}-1}{d})^ {\frac{cD}{d}-1}-1)^{-1} $, when $N <e^{c\sqrt{D}} / D$, w.h.p. in $D$}  \\ \cline{2-2}
		\multirow{2}{*}{} & \emph{Exact recovery for the spherized Haystack model with a random inlier subspace.} \\ 
		\hline
		\hline
		
		\multirow{5}{*}{\textbf{HOSC}} & {$N_{\mathrm{in}}/N_{\mathrm{out}} \geq \log(N) N^{-2(D-d)/(2D-d)}$}\\&
		{\quad $\implies N_{\mathrm{in}}/N_{\mathrm{out}} \gtrapprox 0$, where $N \to \infty$, $D$ fixed}  \\ \cline{2-2}
        \multirow{3}{*}{} & \emph{Result for outliers uniformly sampled from $[0,1]^D$ and inliers uniformly sampled} \\ & \emph{from the intersection of a $d$-subspace with $(0,1)^d$. Also extends to a union of} \\ 
        & \emph{manifolds and to settings with small noise.} \\ 
        \hline
		
	\end{tabular}
	\vspace{3mm}
	\caption{Updated table from~\citet{lerman2018overview}, which compares the lower bounds on SNR for many RSR methods under the Haystack Model. Included here are theoretical guarantees along with the corresponding sample size requirements for the result to hold with probability close to 1.
		\label{tab:SNR}}
\end{table}

We compare the lowest SNR guarantees for a variety of RSR algorithms under the Haystack Model.
Table~\ref{tab:SNR} replicates Table 1 of~\citet{lerman2018overview} with updated estimates. It both compares lower bounds on SNR under the Haystack Model and also briefly describes the actual data model that each algorithm has 
guarantees for. The algorithms include geodesic gradient descent (GGD), FMS~\citep{lerman2017fast}, REAPER~\citep{lerman2015robust}, GMS~\citep{zhang2014novel}, OP~\citep{xu2012robust} and LLD~\citep{mccoy2011two},              
HR-PCA~\citep{xu2013outlier}, Tyler's M-estimator (TME)~\citep{zhang2016robust}, TORP~\citep{cherapanamjeri2017thresholding}, CP~\citep{rahmani2016coherence}, SSC~\citep{soltanolkotabi2012geometric}, HOSC~\citep{ariascastro2011spectral},                            
RANSAC~\citep{ariascastro2017ransac}, and RF~\citep{hardt2013algorithms}.  
For each SNR bound, we also give the associated sample size, $N$, for each result to begin holding with high probability. Again, we note that the big O notation is slightly abused here since the results are really for non-asymptotic $N$ and $D$.                                      

Here, the symbol $\gtrapprox0$ is used for the SNR lower bound for recovery when a method can exactly recover an underlying subspace for any fixed SNR and large enough $N$. Similarly, the symbol $``\gtrapprox"0$ is used when a method can \emph{approximate} $L_*$ to any accuracy for any fixed SNR and large enough $N$. However, methods with $``\gtrapprox"0$ instead of $\gtrapprox 0$ cannot exactly recover the underlying subspace.

Among all algorithms, only PCA, GGD, TORP, RANSAC, and FMS run in $O(NDd)$ time. Since the strong gradient condition can be shown to hold for the Haystack model (see Corollary~\ref{cor:linconvhaystack} in Section~\ref{subsec:linconvdisc}), GGD also achieves linear convergence. This means that it is theoretically guaranteed to be the fastest out of these $O(NDd)$ algorithms (TORP has a guarantee for linear convergence, but under very restrictive assumptions on the SNR). Furthermore, among all algorithms, GGD is the only one with guarantees close to state-of-the-art for the small sample size regime. Among all algorithms it has the state-of-the-art result for the regime $N=O(d \, (D-d)^2 \, \log(D))$, although GMS obtains such a result for the smaller regime $N= O(D^2)$. GGD is also the fastest algorithm with a result for the large sample regime as well. We note that algorithms with worse complexity, such as CP, SSC, and HOSC, have guarantees in this setting as well. We have found it too complicated to compare the exact theoretical results of these various methods, and so we have instead opted to just show that they are guaranteed to have exact recovery for any percentage of outliers for large enough $N$. 

\subsection{A Note on the Strong Gradient Condition in Theorem~\ref{thm:linconv}}
\label{subsec:linconvdisc}

While it may be hard to interpret the strong gradient condition in Theorem~\ref{thm:linconv}, it is possible to show that it holds for a variety of the data models in this section. The brief discussion here is just meant to illustrate that this condition is, in fact, practical. It is not hard to extend the arguments below to other cases discussed earlier as well. This lemma relies on the following definition.
\begin{definition}
 	Sets of inliers and outliers, $\cX_{\mathrm{in}}$ and $\cX_{\mathrm{out}}$, within an inlier-outlier dataset are said to lie in general position with respect to each other if
 	\begin{equation}
 	\max_{G(D,d) \setminus \{L_*\}} \#(\cX \cap L) \leq d.
 	\end{equation}
\end{definition}
With this definition, we prove the following lemma.
\begin{lemma}\label{lemma:linconvmod}
	Suppose that the inliers and outliers follow distributions that satisfy~\eqref{eq:bddistdef} and lie in general position with respect to each other.
	Assume further that the inliers are drawn from a distribution such that $\cP(\cX_{\mathrm{in}}) \to \infty$ as $N \to \infty$, and that
	\begin{equation}
	\max_{L \in G(D,d)} \cA(\cX_{\mathrm{out}},L)  < c \cos(\gamma) \cP(\cX_{\mathrm{in}}),
	\end{equation}
	for some $0 < c < 1$ for all $N$. Then,~\eqref{eq:gradconds} holds for $N$ sufficiently large.
\end{lemma}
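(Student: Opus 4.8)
The plan is to exploit the fact that, under these hypotheses, the left-hand side of~\eqref{eq:gradconds} grows without bound as $N$ increases while the right-hand side stays bounded by a fixed constant, so the strict inequality must eventually hold.

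First I would control the right-hand side. Because $\cX_{\mathrm{in}}$ and $\cX_{\mathrm{out}}$ lie in general position with respect to each other, every $L \in B(L_*,\gamma)\setminus\{L_*\} \subset G(D,d)\setminus\{L_*\}$ contains at most $d$ points of $\cX$, that is $\#(\cX\cap L)\leq d$. Since both distributions satisfy~\eqref{eq:bddistdef}, all data points obey $\|\bx_i\|\leq M$. Hence $\sum_{\cX\cap L}2\|\bx_i\|\leq 2dM$ for every such $L$, and therefore $\sup_{L\in B(L_*,\gamma)\setminus\{L_*\}}\sum_{\cX\cap L}2\|\bx_i\|\leq 2dM$, a constant that does not depend on $N$.

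Next I would bound the left-hand side from below. The hypothesis $\max_{L\in G(D,d)}\cA(\cX_{\mathrm{out}},L)<c\cos(\gamma)\cP(\cX_{\mathrm{in}})$, combined with $\sup_{L\in B(L_*,\gamma)}\cA(\cX_{\mathrm{out}},L)\leq\max_{L\in G(D,d)}\cA(\cX_{\mathrm{out}},L)$, shows that
\begin{equation*}
\cS(\cX,L_*,\gamma)=\cos(\gamma)\cP(\cX_{\mathrm{in}})-\sup_{L\in B(L_*,\gamma)}\cA(\cX_{\mathrm{out}},L)>(1-c)\cos(\gamma)\cP(\cX_{\mathrm{in}})>0.
\end{equation*}
In particular $\cS(\cX,L_*,\gamma)>0$, so the uniform derivative bound coming from the argument of Theorem~\ref{thm:landscape}, namely the inequality $\inf_{L\in B(L_*,\gamma)\setminus\{L_*\}}\left|\frac{\di}{\di t}F(L(t);\cX)\big|_{t=0}\right|\geq\cS(\cX,L_*,\gamma)$ recorded in~\eqref{eq:mingrad}, applies. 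Consequently the left-hand side of~\eqref{eq:gradconds} is at least $\frac14\cS(\cX,L_*,\gamma)>\frac14(1-c)\cos(\gamma)\cP(\cX_{\mathrm{in}})$.

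Finally I would combine the two estimates: \eqref{eq:gradconds} holds as soon as $\frac14(1-c)\cos(\gamma)\cP(\cX_{\mathrm{in}})>2dM$. Since $\frac14(1-c)\cos(\gamma)$ and $2dM$ are fixed and $\cP(\cX_{\mathrm{in}})\to\infty$ as $N\to\infty$ by assumption, this holds for all $N$ sufficiently large. The step needing the most care is the lower bound on the left-hand side: one must ensure that the magnitude of the geodesic derivative toward $L_*$ is uniformly at least $\cS(\cX,L_*,\gamma)$ over the whole punctured ball $B(L_*,\gamma)\setminus\{L_*\}$, which is exactly the content of~\eqref{eq:mingrad} and ultimately rests on the per-inlier inequality $\bv_j^T\bx_i\bx_i^T\bu_j\geq\cos(\theta_1)\sin(\theta_1)(\by_j^T\bx_i)^2$ together with the maximization of the outlier alignment used in~\eqref{eq:grassgeoderbd}. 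The remaining steps are routine bookkeeping.
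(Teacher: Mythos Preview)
Your proposal is correct and follows essentially the same approach as the paper: bound the right-hand side of~\eqref{eq:gradconds} by $2dM$ using general position and bounded support, bound the left-hand side below by $\tfrac14\cS(\cX,L_*,\gamma)$ via the derivative estimate from Theorem~\ref{thm:landscape}, use the hypothesis to get $\cS(\cX,L_*,\gamma)>(1-c)\cos(\gamma)\cP(\cX_{\mathrm{in}})$, and conclude by letting $\cP(\cX_{\mathrm{in}})\to\infty$. The paper's version is terser and has a minor bookkeeping slip (it drops the $1/4$ in one displayed line before restoring the correct threshold $8dM$), so your more careful tracking of the factor is actually cleaner.
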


The general position assumption implies that the amount of data points in any $d$-subspace that is not $L_*$ is at most $d$. 
This general position-type assumption is also satisfied with probability 1 by any sample drawn from the Haystack Model, or by inliers drawn uniformly from $L_* \cap B(\bzero, M)$ and outliers drawn uniformly from $B(\bzero, M)$.

\begin{proof}
Under the assumptions of the lemma,~\eqref{eq:gradconds} reduces to
\begin{align} \label{}
\inf_{L \in B(L_*,\gamma) \setminus \{L_*\} } \left| \frac{\di}{\di t}F(L(t);\cX)|_{t=0} \right|  &>  2 d M,
\end{align}
where $L(t)$ is a geodesic parameterized by arclength from $L_k$ through $L_*$.
Since we know that $\left| \frac{\di}{\di t}F(L(t);\cX)|_{t=0} \right| \geq \cS(\cX, L_*, \gamma)$ (from the proof of Theorem~\ref{thm:landscape}), a sufficient condition for the strong gradient condition to hold is
\begin{equation}\label{eq:stronggradred}
	\cS(\cX, L_*, \gamma)  > 8 d M.
\end{equation}
The assumptions also imply that
\begin{equation}\label{}
\cS(\cX, L_*, \gamma)  > \cos(\gamma)(1-c) \cP(\cX_{\mathrm{in}}).
\end{equation}
Therefore, the reduction of the strong gradient condition in~\eqref{eq:stronggradred} is satisfied for $N$ sufficiently large, since $\cP(\cX_{\mathrm{in}}) \to \infty$ as $N \to \infty$.
\end{proof}

One can easily modify the proof of Theorem~\ref{thm:haystacksnrzero} and the proof of Lemma~\ref{lemma:linconvmod} to obtain linear convergence for the Haystack model. We state this result without proof in the following corollary.
\begin{corollary}\label{cor:linconvhaystack}
	Suppose that the inliers and outliers are distributed according to the Haystack Model with parameters $N_{\mathrm{in}}$, $\sigma_{\mathrm{in}}$, $N_{\mathrm{out}}$, $\sigma_{\mathrm{out}}$, $D$, and $d$. Then,~\eqref{eq:gradconds} holds w.h.p.~for $N$ sufficiently large.
\end{corollary}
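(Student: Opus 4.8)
The plan is to follow the hint and fuse the deterministic reduction of Lemma~\ref{lemma:linconvmod} with the sub-Gaussian concentration estimates that underlie Theorem~\ref{thm:haystacksnrzero}. Two features of the Haystack Model make this possible. First, since the inliers and outliers are drawn from absolutely continuous (Gaussian) distributions, with probability one the sample lies in general position with respect to itself, so that $\#(\cX \cap L) \leq d$ for every $L \in G(D,d) \setminus \{L_*\}$; in particular the supremum defining the right-hand side of~\eqref{eq:gradconds} is controlled by the number of points times their largest norm. Second, although the Gaussian support is unbounded, sub-Gaussian tail bounds let us replace the uniform bound $M$ of~\eqref{eq:bddistdef} by a high-probability surrogate: a union bound over the $N$ samples shows that, with overwhelming probability, every data point satisfies $\|\bx_i\| \leq M_N$ with $M_N \lesssim \max(\sigma_{\mathrm{in}}, \sigma_{\mathrm{out}}) \sqrt{\log N}$. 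On this event the Haystack Model behaves like a bounded-support model with bound $M_N$, so the computation in the proof of Lemma~\ref{lemma:linconvmod} carries over essentially verbatim.

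With these two facts in hand, I would carry out the reduction exactly as in Lemma~\ref{lemma:linconvmod}. Combining general position with the norm bound gives
\[
\sup_{L \in B(L_*,\gamma) \setminus \{L_*\}} \sum_{\cX \cap L} 2\|\bx_i\| \leq 2 d M_N
\]
with overwhelming probability, while the argument in~\eqref{eq:grassgeoderbd} of Theorem~\ref{thm:landscape} yields the lower bound $\left| \frac{\di}{\di t}F(L(t);\cX)|_{t=0} \right| \geq \cS(\cX, L_*, \gamma)$ uniformly over $B(L_*,\gamma) \setminus \{L_*\}$. Hence~\eqref{eq:gradconds} is implied by the single scalar inequality $\cS(\cX, L_*, \gamma) > 8 d M_N$, and it remains to verify that the stability statistic eventually dominates $8 d M_N$.

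The final step is a growth-rate comparison, which is where the sub-Gaussian estimates of Section~\ref{sec:statmod} do the work. Proposition~\ref{prop:subgaussinlier} applied to $\bSigma_{\mathrm{in}} = \sigma_{\mathrm{in}}^2 \bP_{L_*}$ gives $\cP(\cX_{\mathrm{in}}) \gtrsim \sigma_{\mathrm{in}} N_{\mathrm{in}} / d$ with overwhelming probability, and Lemma~\ref{lemma:subgaussoutlier} together with~\eqref{eq:alignmentbd} gives $\sup_L \cA(\cX_{\mathrm{out}}, L) \lesssim \sigma_{\mathrm{out}} N_{\mathrm{out}} / \sqrt{D}$ with overwhelming probability. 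Thus, as long as the SNR exceeds the recovery threshold of Corollary~\ref{cor:genhaystack} (or Theorem~\ref{thm:haystack}), so that $\cos(\gamma)\cP$ strictly dominates the alignment, one has $\cS(\cX, L_*, \gamma) \gtrsim N$ with overwhelming probability. Since $8 d M_N \lesssim d\sqrt{\log N}$ grows only polylogarithmically, the inequality $\cS(\cX, L_*, \gamma) > 8 d M_N$ holds once $N$ is large enough, which establishes~\eqref{eq:gradconds} after a final union bound over the three overwhelming-probability events.

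The main obstacle I anticipate is the interplay between the two high-probability controls on the unbounded data rather than any single estimate: one must choose the norm cutoff $M_N$ large enough that all $N$ points obey it with overwhelming probability, yet small enough (growing only like $\sqrt{\log N}$) that the linearly growing permeance overtakes $8 d M_N$. A secondary technical point is confirming the general-position count $\#(\cX \cap L) \leq d$ uniformly over the continuum $B(L_*,\gamma) \setminus \{L_*\}$; this reduces to the finitely many subspaces spanned by $d$-subsets of $\cX$, on which genericity of the continuous distributions gives the bound almost surely.
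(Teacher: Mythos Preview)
Your proposal is correct and follows exactly the route the paper indicates: the paper states this corollary without proof, remarking only that ``one can easily modify the proof of Theorem~\ref{thm:haystacksnrzero} and the proof of Lemma~\ref{lemma:linconvmod},'' and your argument carries out precisely this modification (general position a.s., a high-probability norm envelope $M_N \lesssim \sqrt{\log N}$ replacing the bounded-support constant, then the scalar reduction $\cS > 8dM_N$).

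One small refinement: in your final growth-rate step you invoke Lemma~\ref{lemma:subgaussoutlier} with~\eqref{eq:alignmentbd}, which gives $\cA \lesssim N_{\mathrm{out}}/\sqrt{D}$ and forces you to add the side condition that the SNR exceed the Corollary~\ref{cor:genhaystack} threshold. If instead you use the alignment estimate established in the proof of Theorem~\ref{thm:haystacksnrzero} (namely $\sup_L \cA(\cX_{\mathrm{out}},L) \lesssim N_{\mathrm{out}}^{5/6}$ from~\eqref{eq:finaloutbd}), the stability statistic satisfies $\cS \gtrsim N_{\mathrm{in}}/d - O(N_{\mathrm{out}}^{5/6}) \to \infty$ for \emph{any} fixed positive SNR, and the extra hypothesis disappears. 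This is presumably why the paper points specifically to Theorem~\ref{thm:haystacksnrzero} rather than to the cruder sub-Gaussian bound.
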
 

\section{Simulations}
\label{sec:stabdemo}

In this section, we run two simulations to verify the theory we proved earlier. All experiments are run using the Haystack Model. Some more comprehensive comparisons of the various RSR algorithms on synthetic and stylized datasets are contained in the review paper of~\citet{lerman2018overview}.

First, we attempt to demonstrate the stability condition $\cS$ from~\eqref{eq:stab}. While we cannot explicitly evaluate the maximum within this expression, we can instead simulate the values achieved by
\begin{equation}\label{eq:simstab}
    \cos(\gamma) \cP(\cX_{\mathrm{in}}) - \cA(\cX_{\mathrm{out}},L),
\end{equation}
for $L$ in a small neighborhood of $L_*$.

\begin{figure}[!t]
\centering
\includegraphics[width = .5\textwidth]{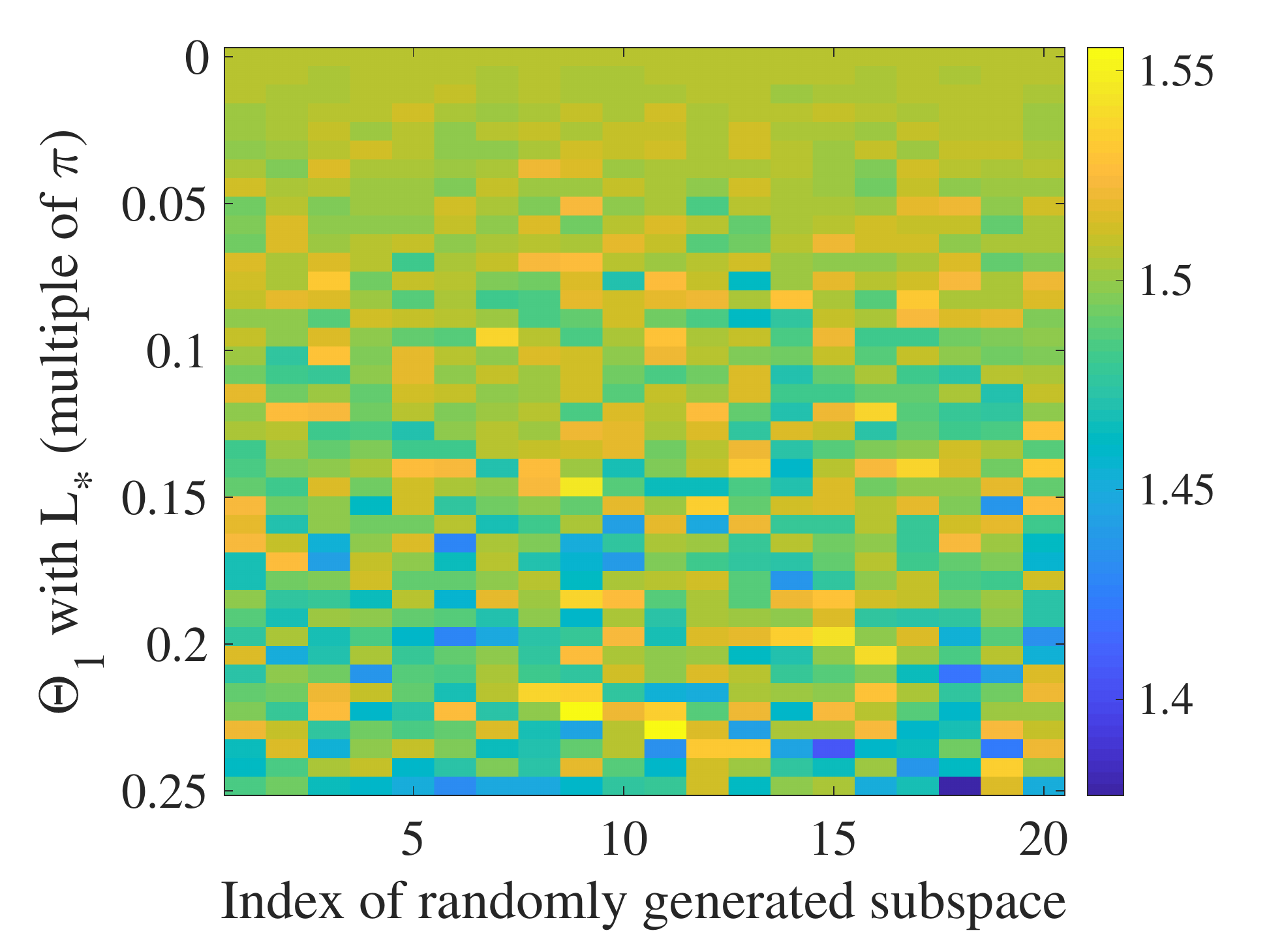}
\caption{Simulation of the stability statistic. Data was generated for the Haystack Model with parameters $N_{\mathrm{out}} = 200$, $\sigma_{\mathrm{out}} = 1$, $D=200$, and $d=10$. The $y$-axis of each figure represents the maximum principal angle of a randomly generated subspace with $L_*$. For each value of the maximum principal angle, the $x$-axis represents the index in the set of 20 randomly generated subspaces with the specified maximum principal angle.
}
\label{fig:cond_sim_1}
\end{figure}

The values achieved by~\eqref{eq:simstab} are simulated in Figure~\ref{fig:cond_sim_1} for the noiseless RSR settings with the fixed value of $\gamma = \pi/4$. The dataset for this figure is generated according to the Haystack Model outlined in~Section~\ref{sec:theorydisc}, with parameters $N_{\mathrm{in}} = 200$, $\sigma_{\mathrm{in}} = 1$,  $N_{\mathrm{out}} = 200$, $\sigma_{\mathrm{out}} = 1$, $D=200$, and $d=10$.
For this plot, the $y$-axis represents a distance from the underlying subspace
$L_*$ in terms of the maximum principal angle up to $\gamma$. The $x$-axis represents randomly generated subspaces at that distance from $L_*$. The color represents the value of~\eqref{eq:simstab}. As we can see, the value of~\eqref{eq:simstab} is indeed positive within this large neighborhood of $L_*$.

We also simulate the convergence properties of the GGD method in Figure~\ref{fig:conv_sim}.
The data was generated according to the Haystack Model with parameters $N_{\mathrm{in}} = 200$, $\sigma_{\mathrm{in}} = 1$,  $N_{\mathrm{out}} = 200$, $\sigma_{\mathrm{out}} = 1$, $D=100$, and $d=5$.
We compare different choices of step-size in accordance with the statements of Theorems~\ref{thm:conv} and~\ref{thm:linconv}. The green line denotes the
convergence of the step-size $t^k = s/\sqrt{k}$ in GGD with $s = 1/D$, which is the modified GGD from Theorem~\ref{thm:conv}. We also display three different types of piecewise constant step-size schemes with initial step-size $1/D$. The blue and red lines represent Algorithm~\ref{alg:ggd} with different choices of $K$ and shrink-factors. In one case, we use $K=20$ and a shrink-factor of $1/2$. In the other case, we use $K=50$ and a shrink-factor of $1/10$.
The pink line is a modification of this piecewise constant step-size scheme that decreases the cost monotonically. In this scheme, the algorithm only shrinks the step-size when the current constant step-size does not decrease the energy function. When this is the case, the step-size is shrunk by the largest factor $1/2^n$, for $n \in \nats$, such that the energy decreases. While this set-up performs the best, we do not yet have a theoretical justification for this scheme. We see that all of the choices of piecewise constant step-size converge linearly to $L_*$ (although the convergence is not monotonic). The small fluctuations around an error of $10^{-7}$ occur due to the machine precision limit in MATLAB.

\begin{figure}[!t]
\centering
\includegraphics[width = .5\textwidth]{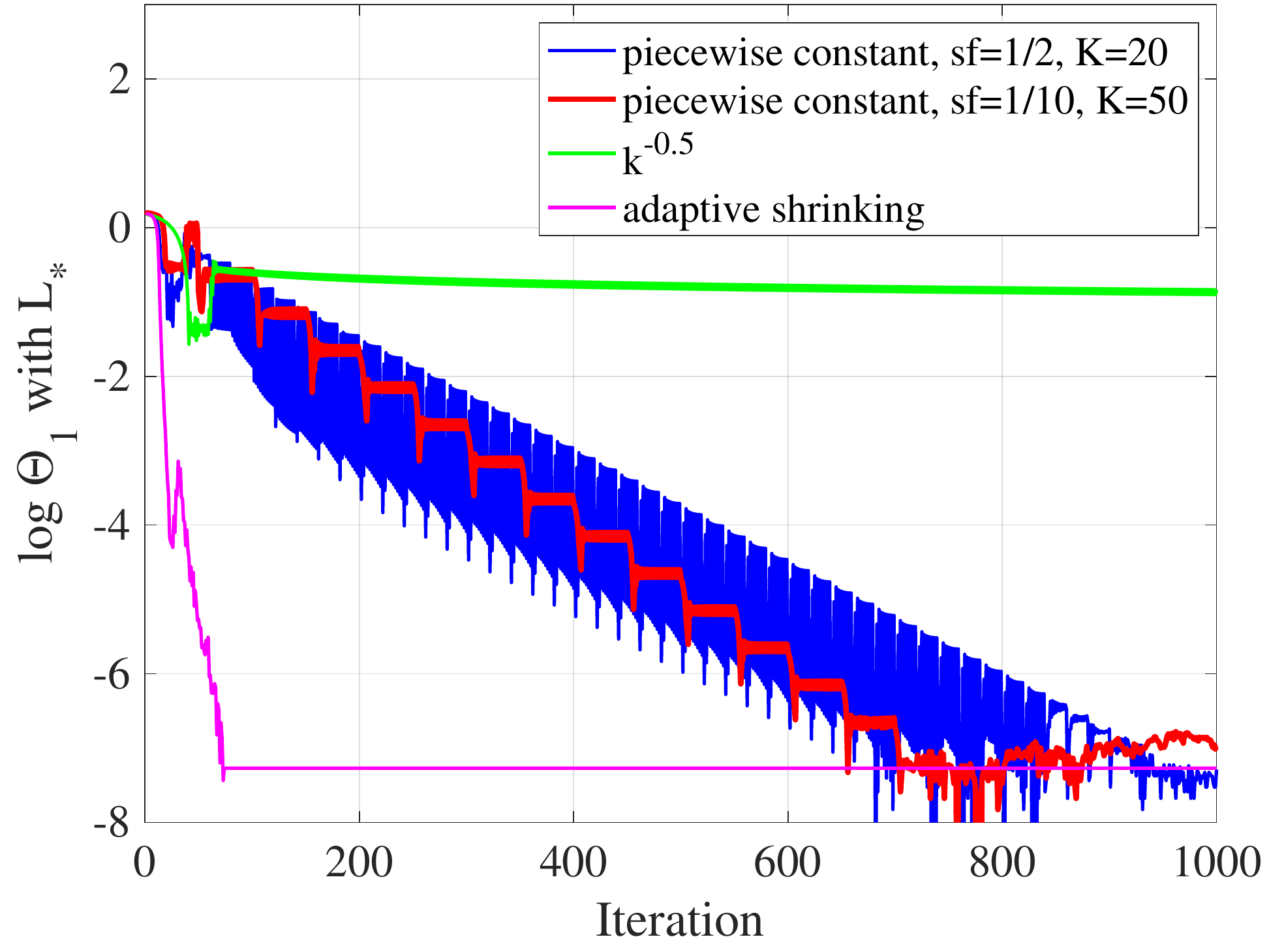}
\caption{Convergence characteristics of the GGD algorithm with different step-size choices. Data was generated according to the Haystack Model with parameters $N_{\mathrm{in}} = 200$, $\sigma_{\mathrm{in}} = 1$,  $N_{\mathrm{out}} = 200$, $\sigma_{\mathrm{out}} = 1$, $D=100$, and $d=5$..
The $y$-axis is the logarithm of the top principal angle with the underlying subspace $L_*$. For all  algorithms, the initial step-size is $s=1/D$. For the piecewise constant step-size scheme of Algorithm~\ref{alg:ggd}, we use two difference combinations of shrink-factor and time between shrinking, $K$. We also include an adaptive shrinking method, that only shrinks the constant step-size when the current step-size does not decrease the energy. \label{fig:conv_sim}}
\end{figure}

\section{Conclusion}
\label{sec:conclusion}

We have presented a deterministic condition that ensures the landscape of~\eqref{eq:robopt} behaves nicely around an underlying subspace: if $\cS(\gamma,0,L_*)>0$, the underlying subspace is the only minimizer and stationary point in $B(L_*, \gamma)$.  The deterministic condition also ensures the convergence of a non-convex gradient method for RSR. The convergence of this method is linear under some slightly stronger assumptions on the data. 
We have shown that the condition $\cS(\cX, L_*, \gamma) > 0$ and the strong gradient condition in~\eqref{eq:gradconds} hold for certain statistical models of data and have used these examples to understand the limits of recovery in various regimes. These models indicate the flexibility of the conditions to deal with a wide range of RSR datasets. GGD is even shown to have almost state-of-the-art SNR guarantees under the Haystack Model.

A main point of this work is to obtain exact recovery guarantees that are similar in spirit to the REAPER algorithm of~\cite{lerman2015robust} for a non-convex algorithm. We even manage to provide better estimates in large sample regimes. Indeed, the stability analysis of this work is inspired by the stability analysis of the REAPER algorithm, which is done with respect to a convex relaxation of the energy function considered here.
Since we do not relax the non-convex problem, our stability is tighter than the REAPER stability when considering sufficiently small neighborhoods. However, for large enough neighborhoods, the REAPER stability might be tighter than ours. For example, a difference shows up in small sample regimes of the Haystack Model: the dependence on a neighborhood makes our result weaker in terms of the SNR regimes we can tolerate. Nevertheless, as far as we know, there is no non-convex RSR competitor for the types of estimates we have developed in this paper and no other competitor with computational complexity of order $O(NDd)$. Furthermore, in larger sample regimes we obtain a stronger result than REAPER's.

While there are many directions that future work can take, we only specify a couple here. One avenue for future work is to extend this result to other data models. For example, one may consider more adversarial models of corruption. 
This is pursued in a forthcoming work~\citep{ML18}.

Another direction for future work involves study of subspace recovery in the presence of heavier noise. 
The only works that consider RSR in really noisy settings are~\citet{coudron2012sample}, \citet{minsker2015geometric}, and~\citet{cherapanamjeri2017thresholding}, although the area remains largely unexplored.
The noise considered in our work involves small perturbations from the underlying subspace. However, in general, more modern work in data science has focused on settings with heavier noise. For example, some have considered noise drawn from distributions with heavy tails~\citep{minsker2015geometric}, while others have considered PCA in the spiked model~\citep{johnstone2001distribution,baik2005phase}.  In the latter case, when the dimension is very large, the noisy inliers would most likely be very far from the underlying subspace. It is an interesting direction for future work to study RSR in both of these settings. 

Future work may also consider proof of convergence for other methods, such as Newton's method and conjugate gradient~\citep{edelman1999geometry}, or IRLS~\citep{lerman2017fast}, using the guarantees on the energy landscape of this paper. One could also consider using different frameworks for optimization over $G(D,d)$, such as a retraction based method like that specified in~\citet{absil2009optimization}. A quick heuristic argument indicates that the retraction formulation should agree with GGD up to first order, but we leave rigorous examination of this to future work.

Finally, one may also directly follow the ideas of~\cite{lim2016statistical,lim2018grassmannian} and consider extensions to the affine Grassmannian, since in practice we cannot assume that the data is properly centered.

A supplementary web page with code will be made available at \url{https://twmaunu.github.io/WTL/}.

\acks{This work was supported by NSF awards DMS-14-18386 and DMS-18-21266, a UMII MnDRIVE graduate assistantship, and a UMN Doctoral Dissertation Fellowship. The authors would like to thank Chao Gao, Nati Srebro, and anonymous reviewers for helpful comments.}

\appendix

\section{Landscape of the PCA energy}
\label{app:pca}

In this appendix, we discuss the landscape of the energy function described in \eqref{eq:pca}. Let $\bX\in\mathbb{R}^{D\times N}$ be a matrix with columns given by $\bx_1,\bx_2,\dots,\bx_N$, and assume that its singular values and left singular vectors are $\sigma_1\geq \sigma_2\geq\dots\geq\sigma_D$ and $\bv_1,\dots,\bv_D\in\mathbb{R}^D$, respectively. Under a generic scenario when $\sigma_d > \sigma_{d+1}$ and $\sigma_{D-d} > \sigma_{D-d+1}$, the landscape of the energy in \eqref{eq:pca} has the following properties:
\begin{itemize}
\item There exists a unique local minimum given by $L=\mathrm{span}(\bv_1,\dots,\bv_d)$, which is also the global minimizer; and a unique local maximum given by $L=\mathrm{span}(\bv_{D-d+1},\dots,\bv_D)$, which is also the global maximizer.
\item The set of saddle points are given by the set of $L$ such that $L=\mathrm{span}(\bv_{i_1},\dots,\bv_{i_d})$ for any $d$ distinct integers $(i_1,\dots,i_d)$ between $1$ and $D$, and $(i_1,\dots,i_d)$ can not be $(1,\dots,d)$ or $(D-d+1,\dots,D)$ (which correspond to the minimizer and the maximizer).
\end{itemize}
These properties can be derived by casting the PCA problem as a constrained optimization problem and then examining the corresponding Lagrangian. 
Since the number of saddle points is finite and all saddle points are orthogonal to the minimizer, there exists a local neighborhood around the minimizer such that the minimizer is the unique critical point inside this neighborhood, which is a property similar to Theorem~\ref{thm:landscape}. Using a strategy similar to this work, if an initialization is chosen appropriately such that the initialization lies in a neighborhood around the global minimizer, a gradient descent algorithm would converge to the global minimizer of \eqref{eq:pca}.

One other case of interest involves $\sigma_{d} = \sigma_{d+1} = \cdots = \sigma_{k}$, for $d < k \leq D$. In this case, the singular vectors corresponding to $\sigma_{d}, \dots, \sigma_k$ would span a subspace, and any orthonormal basis for $\Sp(\bv_d, \dots, \bv_{k})$ would result in a valid set of singular vectors corresponding to the singular values $\sigma_d, \dots, \sigma_{k}$. 
As an example of what happens in this case, if $\sigma_1 >\sigma_2 > \dots > \sigma_d = \dots = \sigma_k$ then the $d$-subspace spanned by $\bv_1, \dots, \bv_{d-1}$, $\bv_i \in \Sp(\bv_d, \dots, \bv_{k}) \cap S^{D-1}$ are minimizers. One can construct other sorts of continuums by also considering equality of singular values between $\sigma_1$ and $\sigma_d$. All cases constructed in these ways would yield a continuum of minimizers for the PCA $d$-subspace energy, which all achieve the same energy. Analogously, one can consider the cases where $\sigma_k = .... = \sigma_{D-d+1}$ for $1 \leq k < D-d+1$, and show a continuum of maximizers. Or, one can also consider the cases where $\sigma_k = .... = \sigma_{l}$ where $d < k < l < D-d+1$, and show a continuum of saddle points. 

In addition, while it is unrelated to the focus of this work, we remark that the landscape of PCA also has the ``strict saddle point property" discussed in \cite{ge2015escaping}: the Hessian of every saddle point has a negative eigenvalue. \citet{ge2015escaping} propose algorithms with theoretical guarantees to minimize such energy functions. This property can be seen by again examining the Lagrangian formulation of the PCA problem.

\section{Grassmannian Geodesics}
\label{sec:grassgeo}

In this appendix, we describe some basic geometric notions on the Grassmannian manifold, $G(D,d)$. Given two subspaces $L_1,L_2 \in G(D,d)$, the principal angles between the two subspaces are defined sequentially. The smallest angle, $\theta_d$, is given by
\begin{equation}\label{eq:thetad}
	\theta_d = \min_{\substack{\bv \in L_1,\by \in L_2\\ \|\bv\| = \|\by\| = 1 }} \arccos(|\bv^T \by|).
\end{equation}
The vectors $\bv_d$ and $\by_d$ which achieve the minimum are the principal vectors corresponding to $\theta_d$. The $d$ principal angles are defined sequentially by
\begin{equation}\label{eq:thetak}
	\theta_k = \min_{\substack{\bv \in L_1, \|\bv\| = 1, \bv \perp \bv_{k+1},\dots,\bv_{d} \\ \by \in L_2, \|\by\| = 1, \by \perp \by_{d+1},\dots,\by_{d} }} \arccos(|\bv^T \by|),
\end{equation}
and the corresponding principal vectors are found in the same way. The ordering defined in~\eqref{eq:thetad} and~\eqref{eq:thetak} is the reverse of what is usually used for principal angles: here, $\theta_1$ is the largest principal angle, while most other works denote the smallest principal angle with $\theta_1$.
Notice that if two principal angles are equal, the choice of principal vectors is not unique. Principal angles and vectors can be efficiently calculated: if $\bW_1 \in O(D,d)$ spans $L_1$ and $\bW_2 \in O(D,d)$ spans $L_2$, then we write the singular value decomposition
\begin{equation}
\bW_1^T \bW_2 = \bV_{12} \bSigma_{12} \bY_{12}^{T}.
\end{equation}
The principal angles are given in reverse order by $\arccos(\diag(\bSigma_{12}))$, and the corresponding principal vectors are given by the columns of $\bV_{12}$ and $\bY_{12}$ in reverse order.
Now let two subspaces $L_1$ and $L_2$ have principal angles $\theta_1,\dots,\theta_d$ and principal vectors $\bv_1,\dots,\bv_d$ and $\by_1,\dots,\by_d$, respectively. Let $k$ be the largest index such that $\theta_k > 0$, which is also known as the interaction dimension~\citep{lerman2014lp}, where $d-k$ is the dimension of $L_1 \cap L_2$. Then, we can define a complementary orthogonal basis $\bu_1,\dots,\bu_k$ for $L_2$ with respect to $L_1$ as
\begin{align}
	\bu_j \in \Sp(\bv_j,\by_j), \ \bu_j \perp \bv_j, \ \bu_j^T \by_j > 0.
\end{align}
As explained in~\citet{lerman2014lp}, for any two subspaces $L_1$ and $L_2$ such that $\theta_1 = \arccos(|\bv_1^T \by_1|) < \pi/2$, a unique geodesic on $G(D,d)$ with $L(0) = L_1$ and $L(1) = L_2$ can be parameterized by
\begin{equation}\label{eq:grassgeo}
	L(t) = \Sp(\bv_1 \cos(\theta_1 t) + \bu_1 \sin(\theta_1 t),\dots,\bv_k \cos(\theta_k t) + \bu_k \sin(\theta_k t),\bv_{k+1},\dots,\bv_d).
\end{equation}

In the paper, we will frequently use a reparametrization of geodesics to prevent the angles from needlessly affecting the magnitude of the derivatives. In these cases, we reparametrize the geodesic in~\eqref{eq:grassgeo} by arclength (in terms of metric defined Section~\ref{sec:prelim}) by writing
\begin{align}\label{eq:grassgeo}
L(t) = \Sp\Big(\bv_1 \cos(t) + \bu_1 \sin(t),\bv_2 \cos\left(\frac{\theta_2}{\theta_1} t\right) + \bu_2 \sin\left(\frac{\theta_2}{\theta_1} t\right),\dots, \\ \nonumber
\bv_k \cos\left(\frac{\theta_k}{\theta_1} t\right) + \bu_k \sin\left(\frac{\theta_k}{\theta_1} t\right),\bv_{k+1},\dots,\bv_d\Big).
\end{align}

\section{Bound on the Alignment}
\label{sec:alignbd}

Here we derive the simple bound on the alignment statistic seen in~\eqref{eq:alignmentbd}. Here, the $\widetilde{\cdot}$ notation denotes projection of the data points (columns of a data matrix) to the unit sphere, $S^{D-1}$. This bound is used to prove Theorem~\ref{thm:haystack}. The first step follows from~\eqref{eq:derF},~\eqref{eq:grad}, and~\eqref{eq:align}.
\begin{align}\label{eq:alignmentbound}
    \cA(\cX_{\mathrm{out}},\bV) &= \left \|   \sum_{\substack{\bx_i \in \cX_{\mathrm{out}}}} \frac{\bQ_{\bV} \bx_i}{\|\bQ_{\bV} \bx_i\|} \bx_i^T \bV \right\|_2 = \left\| \widetilde{\bQ_{\bV} \bX_{\mathrm{out}}} \bX_{\mathrm{out}}^T \bV \right\|_2 \\ \nonumber
                                &\leq \left\|  \widetilde{\bQ_{\bV} \bX_{\mathrm{out}}} \right\|_2 \left\| \bX_{\mathrm{out}}^T \bV \right\|_2 \leq \left\|  \widetilde{\bQ_{\bV} \bX_{\mathrm{out}}} \right\|_F \left\| \bX_{\mathrm{out}} \right\|_2 \\ \nonumber
                                &\leq \sqrt{N_{\mathrm{out}}} \| \bX_{\mathrm{out}} \|_2.
  \end{align}

\section{Supplementary Proofs}
  In this appendix, we give the supplementary proofs for the various theorems, propositions, and lemmas given in the paper.

\subsection{Proof of Theorem~\ref{thm:landscapenoise}}
\label{sec:landscapenoiseproof}

The proof of this theorem proceeds much in the same way as the proof of Theorem~\ref{thm:landscape}.

We first define the following geodesic on $G(D,d)$:
Fix a subspace $L \in {B(L_*,\gamma)} \setminus {B(L_*,\eta)}$, and let the principal angles between $L$ and $L_*$ be $\theta_1,\dots,\theta_d$. Here, we define $\eta = 2 \arctan(\epsilon/\delta)$.
Also, choose a set of corresponding principal vectors $\bv_1,\dots,\bv_d$ and $\bw_1,\dots,\bw_d$ for $L$ and $L_*$, respectively, and let $l$ be the maximum index such that $\theta_1 = \dots = \theta_l$. We let $\bu_1, \dots,\bu_l$ be complementary orthogonal vectors for $\bv_1,\dots,\bv_l$ and $\bw_1,\dots,\bw_l$, which exist since $\theta_1(L,L_*) > 0$. For $t \in [0,1]$, we form the geodesic
\begin{equation*}
	L(t) = \Sp(\bv_1\cos(t) + \bu_1 \sin(t),\dots,\bv_l\cos(t) + \bu_l \sin(t),\bv_{l+1},\dots,\bv_d).
\end{equation*}
This geodesic moves only the furthest directions of $L(0)$ towards $L_*$, and we have removed dependence on $\theta_1$, since this unnecessarily impacts the magnitude of the geodesic derivative~\eqref{eq:geoderiv}.

We will first prove the inequality $\partial F(L(t);\cX) < - \cS_n(\cX,L_*, \epsilon, \delta,\gamma)$. Using the derivative formula in~\eqref{eq:geoderiv}, a subderivative of $F(L(t);\cX)$ at $t=0$ is given by
\begin{align}\label{eq:noisegeoderiv}
    &\frac{\di}{\di t} F(L(t);\cX) \Big|_{t=0} = - \sum_{\substack{\bx_i \in \cX\\ \|\bQ_{L} \bx_i \| > 0}} \sum_{j=1}^l \frac{\bv_j^T \bx_i \bx_i^T \bu_j}{\|\bQ_L \bx_i\|} \\ \nonumber
    &= - \sum_{j=1}^l \left(\sum_{\substack{\bx_i \in \cF_1(\cX_{\mathrm{in}},\bw_j,\delta) \\ \|\bQ_{L} \bx_i \| > 0}}  \frac{\bv_j^T \bx_i \bx_i^T \bu_j}{\|\bQ_L \bx_i\|} + \sum_{\substack{\bx_i \in \cF_0(\cX_{\mathrm{in}},\bw_j,\delta) \\ \|\bQ_{L} \bx_i \| > 0}}  \frac{\bv_j^T \bx_i \bx_i^T \bu_j}{\|\bQ_L \bx_i\|} + \sum_{\substack{\bx_i \in \cX_{\mathrm{out}}\\ \|\bQ_{L} \bx_i \| > 0}} \frac{\bv_j^T \bx_i \bx_i^T \bu_j}{\|\bQ_L \bx_i\|} \right) .
\end{align}

We examine the terms in~\eqref{eq:noisegeoderiv} one by one. Using \eqref{eq:derF} and~\eqref{eq:grad}, we can bound the outlier term
\begin{equation}\label{eq:noiseoutbd}
-\sum_{\substack{\bx_i \in \cX_{\mathrm{out}}\\ \|\bQ_{L} \bx_i \| > 0}} \frac{\bv_j^T \bx_i \bx_i^T \bu_j}{\|\bQ_L \bx_i\|}  \leq \left\| \nabla F(L;\cX_{\mathrm{out}})  \right\|_2   .
\end{equation}
We know that $|\bx_i \cdot \bu_j | \leq \| \bQ_L \bx_i\|$ for all $i$ since $\bu_j \in \Sp(\bQ_L)$. We also know that, since $\theta_j > \eta$, $|\bv_j^T \bx_i | \leq | \bw_j^T \bx_i | \leq \sqrt{\delta^2 + \epsilon^2}$ for all $\bx_i \in \cF_0(\cX_{\mathrm{in}},\bw_j,\delta) $. These two observations imply that
\begin{equation}\label{eq:noiseinoutbd}
    - \sum_{\substack{\bx_i \in \cF_0(\cX_{\mathrm{in}},\bw_j,\delta) \\ \|\bQ_{L} \bx_i \| > 0}}  \frac{\bv_j^T \bx_i \bx_i^T \bu_j}{\|\bQ_L \bx_i\|} \leq \sqrt{\delta^2 + \epsilon^2} \max_{\bw \in L_* \cap S^{D-1}} \# (\cF_0(\cX_{\mathrm{in}}, \bw, \delta)).
\end{equation}
Thus, we finally must deal with the inlier term. We have the inequalities
\begin{align}
    \bv_j \bx_i \bx_i^T \bu_j &\geq \cos(\theta_1 - \eta) \sin(\theta_1 - \eta), \\
    \|\bQ_{L} \bx_i \| &\leq \sin(\theta_1) \| \bP_{L_*} \bx_i \| + \epsilon.
\end{align}
Applying these to the inlier term, as long as $\theta_1 > \eta$, we find
\begin{align} \label{eq:noiseininbd}
    - \sum_{\substack{\bx_i \in \cF_1(\cX_{\mathrm{in}},\bw_j,\delta) \\ \|\bQ_{L} \bx_i \| > 0}} & \frac{\bv_j^T \bx_i \bx_i^T \bu_j}{\|\bQ_L \bx_i\|} \\ \nonumber
                                                                                       &\leq - \frac{\cos(\theta_1 - \eta) \sin(\theta_1 - \eta)}{\sin(\theta_1)} \sum_{\substack{\bx_i \in \cF_1(\cX_{\mathrm{in}},\bw_j,\delta) \\ \|\bQ_{L} \bx_i \| > 0}}  \frac{\bw_j^T \bx_i \bx_i^T \bw_j}{\|\bP_{L_*} \bx_i\| + \epsilon / \sin(\theta_1)} \\ \nonumber
                                                                                       &\leq - \frac{\cos(\gamma - \eta)}{2} \lambda_d \left( \sum_i \frac{\bP_{L_*} \bx_i \bx_i^T \bP_{L_*}}{\|\bP_{L_*} \bx_i\| + \sqrt{\delta^2 + \epsilon^2}} \right).
\end{align}
Putting together~\eqref{eq:noiseoutbd}, \eqref{eq:noiseinoutbd}, and \eqref{eq:noiseininbd}, we find that
\begin{equation}\label{eq:grassgeodernoisebd}
    \partial F(L(t);\cX) \Big|_{t=0} \leq -l \cS_n(\cX,L_*, \epsilon, \delta,\gamma)< 0.
\end{equation}

Thus,~\eqref{eq:grassgeodernoisebd} implies that every subspace in ${B(L_*,\gamma)} \setminus {B(L_*,\eta)}$ has a direction with negative local subderivative. From here, the proof is the same as that of Theorem~\ref{thm:landscape}.

\subsection{Proof of Theorem~\ref{thm:conv}}
\label{app:suppconvres}

We will first show that
  \begin{equation}\label{eq:itdecr}
         \theta_1(\bV^{k+1},L_*) < \theta_1(\bV^k,L_*),
    \end{equation}
  for sufficiently small $t^k$.
  Let $\bV^* \in O(D,d)$ span $L_*$. We will establish~\eqref{eq:itdecr} by showing
  \begin{equation*}\label{eq:sigmaincr}
    \sigma_{d} \left( \bV^{*T} \bV^{k+1} \right) > \sigma_{d} \left( \bV^{*T} \bV^{k} \right).
  \end{equation*}
  Using~\eqref{eq:gradgeo} and the fact that
  \begin{align*}
    &\cos(\bSigma^k t^k) = \bI - O((t^k)^2),  &\sin(\bSigma^k t^k) = \bSigma^k t^k - O((t^k)^3),
  \end{align*}
  we can write
  \begin{align}\label{eq:perturb1}
    \sigma_{d} \left( \bV^{*T} \bV^{k+1} \right) &= \sigma_{d} \left( \bV^{*T} \left( \bV^k \bW^k \cos(\bSigma^k t^k) \bW^{kT} + \bU^k \sin(\bSigma^k t^k) \bW^{kT}  \right) \right) \\ \nonumber
    &= \sigma_{d} \Bigg( \bV^{*T} \Bigg( \bV^k - t^k \nabla F(\bV^k;\cX) + O((t^k)^2)\Bigg) \Bigg).
  \end{align}
  Let $\bv_1^k \in \Sp(\bV^k)$ be a unit vector corresponding to the maximum principal angle with $L_*$, and let $\bu_1^k$ be its complementary orthogonal vector.
  Define the unit vector $\by_1^k \in L_* \cap \Sp(\bv_1^k,\bu_1^k)$, and write $\theta_1^k = \theta_1(\Sp(\bV^k),L_*)$.
  Suppose that $\sigma_d^k = \sigma_d(\bV^{*T} \bV^k)$ has multiplicity $r$, and let $\bbeta_1, \ \bbeta_2 \in O(d,r) $ be such that
  \begin{equation}
    \bbeta_1^T \bV^{*T} \bV^k \bbeta_2 = \diag(\sigma_d^k,\dots,\sigma_d^k).
  \end{equation}

  We now apply Result 4.1 in~\cite{soderstrom1999perturbation}, which states the following. Suppose a matrix $\bA$ has a singular value $\sigma$ with multiplicity $r$, with corresponding left and right singular vectors $\bU$ and $\bV$. Suppose that we perturb $\bA$ by $\epsilon \bB$. Then, $\bA + \epsilon \bB$ has $r$ singular values $\sigma_1(\bA + \epsilon \bB),\dots,\sigma_r(\bA + \epsilon \bB)$ which satisfy
  \begin{equation*}
  	\sigma_j(\bA + \epsilon \bB) = \sigma_j(\bA) + \frac{\epsilon}{2} \lambda_j\left(\bV^T \bB^T \bU + \bU^T \bB \bV  \right) + O(\epsilon^2).
  \end{equation*}
  Applying this to~\eqref{eq:perturb1} yields
  \begin{align}\label{eq:perturb2}
    &\sigma_{d} \left( \bV^{*T} \bV^{k+1} \right) \geq \sigma_{d} \left( \bV^{*T} \bV^k \right) + \frac{t^k}{2} \lambda_d \left( \bbeta_1^T \bV^{*T} \sum_{\cX} \frac{\bQ_{\bV^k} \bx_i \bx_i^T \bV^k}{\| \bQ_{\bV^k} \bx_i \|} \bbeta_2 \right) + O((t^k)^2)\\ \nonumber
    & =\sigma_{d} \left( \bV^{*T} \bV^k \right) + t^k \left( \by_1^{kT} \sum_{\cX} \frac{\bQ_{\bV^k} \bx_i \bx_i^T}{\| \bQ_{\bV^k} \bx_i \|} \bv_1^k \right) + O((t^k)^2)\\ \nonumber
    &= \sigma_{d} \left( \bV^{*T} \bV^k \right) + t^k \sin(\theta_1^k) \left( \bu_1^{kT} \sum_{\cX_{\mathrm{in}}} \frac{\bx_i \bx_i^T}{\| \bQ_{\bV^k} \bx_i \|} \bv_1^k + \bu_1^{kT} \sum_{\cX_{\mathrm{out}}} \frac{\bx_i \bx_i^T}{\| \bQ_{\bV^k} \bx_i \|} \bv_1^k\right)   + O((t^k)^2).
  \end{align}
  Here, the $O((t^k)^2)$ term is bounded below by $-C_2 (t^{k})^2$, where $C_2$ does not depend on $\bV^k$, which follows from compactness of $O(D,d)$.

  Notice that the inlier term in~\eqref{eq:perturb2} is positive and bounded below
  \begin{align}\label{eq:thmconvin}
    &\bu_1^{kT} \sum_{\cX_{\mathrm{in}}} \frac{\bx_i \bx_i^T}{\| \bQ_{\bV^k} \bx_i \|} \bv_1^k \geq \frac{1}{\sin(\theta_1(\Sp(\bV^k),L_*))}  \sum_{\cX_{\mathrm{in}}} \frac{\bu_1^{kT} \bx_i \bx_i^T \bv_1^k}{\|\bx_i\|} \\ \nonumber
    &\hspace{30pt} \geq \cos(\theta_1(\Sp(\bV^k),L_*))) \sum_{\cX_{\mathrm{in}}} \frac{\by_1^{kT} \bx_i \bx_i^T \by_1^k }{\|\bx_i\|} \geq \cos(\gamma)  \lambda_d \left( \sum_{\cX_{\mathrm{in}}} \frac{\bx_i \bx_i^T}{\|\bx_i\|} \right).
  \end{align}
  Using the fact that $\bu_1^k \in \Sp(\bQ_{\bV^k})$ and $\bv_1^k \in \Sp(\bV^k)$, we can bound the outlier term in~\eqref{eq:perturb2}
  \begin{equation}\label{eq:thmconvout}
    \left|\bu_1^{kT} \sum_{\cX_{\mathrm{out}}} \frac{\bx_i \bx_i^T}{\| \bQ_{\bV^k} \bx_i \|} \bv_1^k \right| \leq \sigma_1 \left( \sum_{\cX_{\mathrm{out}}} \frac{ \bQ_{\bV^k} \bx_i \bx_i^T}{\| \bQ_{\bV^k} \bx_i \|} \bV^k \right) = \sigma_1 \left( \nabla F(\bV^k;\cX_{\mathrm{out}}) \right).
  \end{equation}
  Thus, from~\eqref{eq:thmconvin} and~\eqref{eq:thmconvout} we conclude
  \begin{align}\label{eq:sigmabd}
    &\sigma_{d} \left( \bV^{*T} \bV^{k+1} \right)  - \sigma_{d} \left( \bV^{*T} \bV^k \right)  \\ \nonumber
    &\hspace{10pt} \geq t^k\sin(\theta_1^k) \left(\cos(\gamma)  \lambda_d \left( \sum_{\cX_{\mathrm{in}}} \frac{\bx_i \bx_i^T}{\|\bx_i\|} \right) - \sigma_1 \left( \nabla F(\bV^k;\cX_{\mathrm{out}}) \right)\right) - C_2 (t^k)^2 \\ \nonumber
    &\hspace{10pt} \geq t^k\sin(\theta_1^k) \left(\cos(\gamma)  \lambda_d \left( \sum_{\cX_{\mathrm{in}}} \frac{\bx_i \bx_i^T}{\|\bx_i\|} \right) - \sup_{\bV \in {B(L_*,\gamma)}} \sigma_1 \left( \nabla F(\bV;\cX_{\mathrm{out}}) \right)\right) -  C_2 (t^k)^2 \\ \nonumber
    &\hspace{10pt} \geq t^k \sin(\theta_1^k) C_1 - C_2 (t^k)^2 = t^k \left(\sin(\theta_1^k) C_1 - C_2 t^k\right),
  \end{align}
  for positive constants $C_1$ and $C_2$ which do not depend on $\bV^k$.
	Hence, for small enough $t^k$, we have that~\eqref{eq:itdecr} holds.
	
	It remains to show that the sequence with step-size $s/\sqrt{k}$ converges to $L_*$ for sufficiently small $s$.
  Suppose that $s$ satisfies
    \begin{equation}\label{eq:sbd1}
    s < \min \left( \frac{C_1 \sin(\gamma)}{2 C_2},\frac{1}{4 \sqrt{C_2} }   \right).
  \end{equation}
  Then, for any $\bV^k$ with $\frac{\gamma}{2\sqrt{k}} \leq \theta_1(\bV^k,L_*) \leq \gamma$, looking at~\eqref{eq:sigmabd} and the first term in~\eqref{eq:sbd1}, $s/\sqrt{k}$ decreases the principal angle by at least $C_3/k$, for some constant $C_3$.
  On the other hand, for any $\bV^k$ such that $\theta_1(\bV^k,L_*) < \frac{\gamma}{2\sqrt{k}} $, we have the bound
  \begin{align}\label{eq:sigmaminusbd}
    \sigma_{d} \left( \bV^{*T} \bV^{k+1} \right) - \sigma_{d} \left( \bV^{*T} \bV^{k} \right) &>- C_2 (t^k)^2.
  \end{align}
  Note that~\eqref{eq:sigmaminusbd} gives the inequality
  \begin{align}\label{eq:sigmaminusbd2}
    \sigma_{d} \left( \bV^{*T} \bV^{k+1} \right) &>  \sigma_{d} \left( \bV^{*T} \bV^{k} \right) - C_2 (t^k)^2 \geq \cos\left(\frac{\gamma}{2\sqrt{k}}\right) - C_2 (t^k)^2.
  \end{align}
  It is straightforward to show that if
  \begin{equation*}\label{eq:sbd2}
    t^k < \frac{1}{4 \sqrt{C_2} \sqrt{k}} ,
  \end{equation*}
  then the right hand side of~\eqref{eq:sigmaminusbd2} is greater than $\cos(\gamma/\sqrt{k})$.
  Thus, the second term in the minimum of~\eqref{eq:sbd1} implies that if $\theta_1(\bV^k,L_*) < \frac{\gamma}{2\sqrt{k}} $, then $\theta_1(\bV^{k+1},L_*) < \gamma / \sqrt{k}$.

 We summarize this in the following way. For any $k$, either $\frac{\gamma}{2\sqrt{k}} \leq \theta_1(\bV^k,L_*) \leq \gamma$ or  $ \theta_1(\bV^k,L_*) < \frac{\gamma}{2\sqrt{k}}$. If the former holds, then $\theta_1(\bV^{k+1},L_*) < \theta_1(\bV^{k},L_*) - C_3/k$. If the latter holds, then we have the bound $\theta_1(\bV^{k+1},L_*) < \gamma/\sqrt{k}$. Thus, the maximum principal angle with $L_*$ either decreases by $C_3/k$ or the distance is bounded by $\gamma/\sqrt{k}$.
  Put together, these imply that $\bV^k$ converges to $L_*$ with $O(1/\sqrt{k})$ rate of convergence.

\subsection{Proof of Lemmas in Theorem~\ref{thm:linconv}}
\label{sec:lemmaproof}

This appendix contains proofs for the three lemmas in Theorem~\ref{thm:linconv}.

\subsubsection{Proof of Lemma~\ref{lemma:decrbd}}

For any fixed subspace $L \in B(L_*,\gamma)$, let the principle vectors and angles for $L$ with respect to $L_*$ be given by $\{\bv_j\}_{j=1}^d$ and $\{\theta_j\}_{j=1}^d$. Then, writing $\bTheta=\diag(\theta_1,\cdots,\theta_d)$, we know that $F(L_*;\cX_{\mathrm{in}})=0$ and
\begin{align}\label{eq:inenergybd}
    F(L; \cX_{\mathrm{in}}) &=\sum_{\bx\in\cX_{\mathrm{in}}}\|\sin(\theta)\bV^T \bx\|<\sum_{\bx\in\cX_{\mathrm{in}}}\|\bTheta\bV^T \bx\| \leq  \| \bTheta \|_2 \max_{\bV \in O(D,d)} \sum_{\cX_{\mathrm{in}}}\|\bV^T \bx_i\| \\ \nonumber
                            &\leq \theta_1 \sum_{\cX_{\mathrm{in}}} \| \bx_i \| .
\end{align}
Let $L(t)$ be the geodesic from $L_*$ to $L$ parameterized by arclength, and let $s = \theta_1$. The difference in energies for the outliers is bounded by
\begin{align}\label{eq:outdevbd}
   \left| F(L;\cX_{\mathrm{out}})-F(L_*;\cX_{\mathrm{out}}) \right| &= \left| \int_{t=0}^s \frac{\di}{\di t} F(L(t); \cX_{\mathrm{out}})  \di t \right| \\ \nonumber   
   &\leq \theta_1 \sup_{L' \in {B(L_*,\sin(\gamma)}} (\sigma_1(\nabla F(L';\cX_{\mathrm{out}}))).
\end{align}
By the assumption $\cS(\cX, L_*, \gamma) > 0$, we have by the definition in~\eqref{eq:stab} that
\begin{align}\label{eq:alignbdlinconv}
    \sup_{L' \in {B(L_*,\sin(\gamma)} }(\sigma_1(\nabla F(L' ;\cX_{\mathrm{out}}))) &< \cos(\gamma) \lambda_d \left(\sum_{\cX_{\mathrm{in}}} \frac{\bx_i \bx_i^T}{\|\bx_i\|} \right) \leq \max_{\bv \in S^{D-1}} \left(\sum_{\cX_{\mathrm{in}}} \bv^T \frac{\bx_i \bx_i^T}{\|\bx_i\|} \bv \right) \\ \nonumber
                                                                                                       &< \max_{\bv \in S^{D-1}} \sum_{\cX_{\mathrm{in}}} | \bv^T \bx| \leq \sum_{\cX_{\mathrm{in}}} \| \bx_i \|_2 .
   \end{align}
Thus, combining~\eqref{eq:inenergybd}-\eqref{eq:alignbdlinconv} yields
\begin{align}
   \left| F(L;\cX)-F(L_*;\cX) \right| &= \left| F(L;\cX_{\mathrm{in}}) + F(L;\cX_{\mathrm{out}}) - (F(L_*;\cX_{\mathrm{in}}) +F(L;\cX_{\mathrm{out}})) \right| \\ \nonumber
   &\leq \left| F(L;\cX_{\mathrm{in}})\right| + \left| F(L;\cX_{\mathrm{out}}) - F(L;\cX_{\mathrm{out}}) \right| \\ \nonumber
   &\leq 2 \theta_1 \sum_{\cX_{\mathrm{in}}} \|\bx_i \|.
\end{align}
As a result, the lemma is proved.

\subsubsection{Proof of Lemma~\ref{lemma:gradnormbd}}

 For any $L \in B(L_*, \gamma) \setminus \{L_*\}$ and geodesic $L(t)$ parameterized by arclength from $L$ through $L_*$,
    \begin{align}
     	\left| \frac{\di}{\di t}F(L(t);\cX)|_{t=0} \right| &= \left|  \frac{\di}{\di t}F(L(t);\cX_{\mathrm{in}})|_{t=0} +  \frac{\di}{\di t}F(L(t);\cX_{\mathrm{out}})|_{t=0} \right|  \\ \nonumber
    	&\geq \left|  \frac{\di}{\di t}F(L(t);\cX_{\mathrm{in}})|_{t=0} \right| -  \left| \frac{\di}{\di t}F(L(t);\cX_{\mathrm{out}})|_{t=0} \right| \\ \nonumber
    	&\geq \cos(\gamma) \lambda_1 \left(\sum_{\cX_{\mathrm{in}}} \frac{\bx_i \bx_i^T}{\|\bx_i\|} \right) - \sup_{L \in  {B(L_*, \gamma)}} \sigma_1 (\nabla F(L; \cX_{\mathrm{out}})) \\ \nonumber
    	& \geq \cS(\cX, L_*, \gamma) \geq C_1 \sum_{\cX_{\mathrm{in}}} \| \bx_i \|  ,
    \end{align}
where, $C_1 = \cS(\cX, L_*, \gamma) / \sum_{\cX_{\mathrm{in}}} \| \bx_i \| $.

\subsubsection{Proof of Lemma~\ref{lemma:applip}}

First, assume that $c_0$ is small enough such that
\begin{align} \label{eq:c0small}
        \inf_{L(0) \in B(L_*,\gamma) \setminus\{L_*\}} \frac{1}{4}  \left| \frac{\di}{\di t}F(L(t);\cX)|_{t=0} \right| &>  \sum_{ \cX} 3 \sqrt{c_0} \| \bx_i \|,
    \end{align}
    where $L(t)$ is a geodesic parameterized by arclength between $L(0)$ and $L_*$.
    We are guaranteed that such a $c_0$ exists by Lemma~\ref{lemma:decrbd}.

Fix $c< c_0$ and define a geodesic line $L(t)$ on $G(D,d)$ such that $L(0)=L_k$ and $L(c\theta_1(L_k,L_*))=L_{k+1}$. We first investigate the derivatives of the function $\dist(\bx,L(t))$. We will then show, under the given assumptions on the data, that $\sum_i \dist(\bx_i, L(t))$ is close to being Lipschitz.

Applying \citet[(23)-(24)]{lerman2014lp}, and assuming that $\sum_{j=1}^d\theta_j^2=1$, we have
\begin{align}\label{eq:secondder}
\frac{\di^2}{\di t^2}&\dist(\bx,L(t))  \\ \nonumber
&=\frac{\di}{\di t}\left[-\frac{\sum_{j=1}^d\theta_j((\cos(t\theta_j)\bv_j+\sin(t\theta_j)\bu_j)\cdot \bx)((-\sin(t\theta_j)\bv_j+\cos(t\theta_j)\bu_j)\cdot \bx)}{\dist(\bx,L(t))}\right]\\ \nonumber
=&-\frac{\left(\sum_{j=1}^d\theta_j((\cos(t\theta_j)\bv_j+\sin(t\theta_j)\bu_j)\cdot \bx)((-\sin(t\theta_j)\bv_j+\cos(t\theta_j)\bu_j)\cdot \bx)\right)^2}{\dist(\bx,L(t))^3}\\ \nonumber
&-\frac{\sum_{j=1}^d\theta_j^2((-\sin(t\theta_j)\bv_j+\cos(t\theta_j)\bu_j)\cdot \bx)((-\sin(t\theta_j)\bv_j+\cos(t\theta_j)\bu_j)\cdot \bx)}{\dist(\bx,L(t))}\\ \nonumber
&-\frac{\sum_{j=1}^d\theta_j^2((\cos(t\theta_j)\bv_j+\sin(t\theta_j)\bu_j)\cdot \bx)((-\cos(t\theta_j)\bv_j-\sin(t\theta_j)\bu_j)\cdot \bx)}{\dist(\bx,L(t))}.
\end{align}
Using $|(\cos(t\theta_j)\bv_j+\sin(t\theta_j)\bu_j)\cdot \bx|\leq \|\bx\|$ and the Cauchy-Schwarz inequality in this equation gives
\begin{align}\label{eq:term1bd}
&\left(\sum_{j=1}^d\theta_j((\cos(t\theta_j)\bv_j+\sin(t\theta_j)\bu_j)\cdot \bx)((-\sin(t\theta_j)\bv_j+\cos(t\theta_j)\bu_j)\cdot \bx)\right)^2 \\ \nonumber
& \quad \leq \left(\sum_{j=1}^d\theta_j^2((\cos(t\theta_j)\bv_j+\sin(t\theta_j)\bu_j)\cdot \bx)^2\right) \left(\sum_{j=1}^d((-\sin(t\theta_j)\bv_j+\cos(t\theta_j)\bu_j)\cdot \bx)^2\right) \\ \nonumber
& \quad \leq \sum_{j=1}^d\theta_j^2 \|\bx\|^2 \dist(\bx,L(t))^2=\|\bx\|^2 \dist(\bx,L(t))^2.
\end{align}
Putting~\eqref{eq:secondder} and~\eqref{eq:term1bd} together yields
\begin{align}
\left|\frac{\di^2}{\di t^2}\dist(\bx,L(t))\right|\leq \frac{\|\bx\|^2 \dist(\bx,L(t))^2}{\dist(\bx,L(t))^3}+\frac{2\|\bx\|^2\sum_{j=1}^d\theta_j^2}{\dist(\bx,L(t))}=\frac{3\|\bx\|^2}{\dist(\bx,L(t))}.
\end{align}

On the other hand, \citet[Lemma 3.2]{lerman2014lp} implies that
\begin{align}
\left|\frac{\di}{\di t}\dist(\bx,L(t))\right|\leq \|\bx\|.
\end{align}
Then, define the set
\begin{align}\label{eq:Gset}
    \cG(\cX, L_*, L(t), &c) = \left\{ \bx \in \cX : \min_{t \in [0,c\theta_1(L,L_*)]} \frac{\dist(\bx,L(t))}{\|\bx\|} \leq \sqrt{c} \theta_1(L_k, L_*) \right\}.
\end{align}
Then, for all $0\leq s\leq c\theta_1(L_k,L_*)$,
\begin{align}\label{eq:diff_derivative}
&\left|\sum_{\bx\in\cX}\left(\frac{\di}{\di t}\dist(x,L(t))\Big|_{t=s}-\frac{\di}{\di t}\dist(x,L(t))\Big|_{t=0}\right)\right| \\ \nonumber
&\ \ \ \ \ \ \ \ \ \ \ \ \ \ \ \ \ \ \ \ \leq \sum_{\bx\in\cX\setminus \cG(\cX, L_*, L(t), c) }\int_{t=0}^s\Big|\frac{\di^2}{\di t^2}\dist(\bx,L(t))\Big|\di t +\sum_{\bx\in \cG(\cX, L_*, L(t), c) } 2\|\bx\| \\ \nonumber
&\ \ \ \ \ \ \ \ \ \ \ \ \ \ \ \ \ \ \ \ \leq  \sum_{\bx\in\cX\setminus \cG(\cX, L_*, L(t), c) } s \frac{3\|\bx\|}{\sqrt{c} \theta_1(L_k,L_*)} + \sum_{\bx\in \cG(\cX, L_*, L(t), c) } 2\|\bx\| \\ \nonumber
&\ \ \ \ \ \ \ \ \ \ \ \ \ \ \ \ \ \ \ \ \leq   \sum_{\bx\in\cX\setminus \cG(\cX, L_*, L(t), c)) } 3\sqrt{c}\|\bx\| + \sum_{\bx\in \cG(\cX, L_*, L(t), c) } 2\|\bx\|.
\end{align}
In the last line of~\eqref{eq:diff_derivative}, the first term can be made as small as one would like by taking $c_0$ small enough. On the other hand, for $c$ small enough, all points in $\cG(\cX, L_*, L(t), c)$ must be contained in a subspace $L \in B(L_*, \gamma) \setminus\{L_*\}$.
Thus, for small enough $c$, the function $F(L(t);
\cX)$ is approximately Lipschitz for $t \in [0,c\theta_1(L_k,L_*)]$.

As a result,
\begin{align*}
    F(L_{k};\cX)-F(L_{k+1};\cX)) &=F(L(0);\cX)-F(L(c\theta_1(L_k,L_*));\cX)\\ \nonumber
                                &=-\int_{t=0}^{c\theta_1(L_k,L_*)}\frac{\di}{\di t}F(L(t);\cX) \di t \\
                                &\geq -c\theta_1(L_k,L_*) \frac{\di}{\di t}F(L(t);\cX)|_{t=0}\\ \nonumber
                                & - c\theta_1(L_k,L_*) \left(\sum_{\bx\in\cX\setminus \cG(L_k, L_{k+1}) } 3\sqrt{c}\|\bx\| + \sum_{\bx\in \cG(L_k, L_{k+1}) } 2\|\bx\| \right).
\end{align*}
Finally, noting that $\frac{\di}{\di t}F(L(t))|_{t=0}<0$, we use~\eqref{eq:c0small} and the assumption in~\eqref{eq:gradconds} to find
\begin{align*}
    F(L_{k})-F(L_{k+1})) &\geq \left| c\theta_1(L_k,L_*) \frac{\di}{\di t}F(L(t);\cX)|_{t=0} \right|  - \frac{c\theta_1(L_k,L_*)}{2} \left| \frac{\di}{\di t}F(L(t);\cX)|_{t=0} \right| \\ \nonumber
    & \geq  \frac{c\theta_1(L_k,L_*)}{2} \left| \frac{\di}{\di t}F(L(t);\cX)|_{t=0} \right| .
\end{align*}

\subsection{Proof of Inlier Permeance Bounds}
\label{sec:inlierpermbds}

\subsubsection{Proof of Proposition~\ref{prop:genposition}}
\label{subsubsec:genpositionin}

The continuous distribution assumption implies that all directions in $S^{D-1} \cap L_*$ have nonzero probability. Let $\bx$ be a random variable following the inlier distribution. By the central limit theorem, we have that
    \begin{equation*}
        \frac{1}{N_{\mathrm{in}}} \sum_{\cX_{\mathrm{in}}} \left(\frac{|\bv^T\bx_i|^2}{\|\bx_i\|} - \Var\left( \frac{\bv^T \bx}{ \|\bx\|^{1/2}}\right) \right) \overset{d}{\to} N(0, 1) .
    \end{equation*}
    We also have that $\max_i \| \bx_i \| < M$ by the bounded-support assumption assumption.
    Thus, for large enough $N_{\mathrm{in}}$ and a covering argument, we have for some absolute constant $C$ that
\begin{equation*}
\min_{\bv \in L_* \cap S^{D-1}} \sum_{\cX_{\mathrm{in}}} \frac{|\bv^T\bx_i|^2}{\| \bx_i \|} \geq C \frac{N_{\mathrm{in}}}{M} \min_{\bv \in L_* \cap S^{D-1}} \Var\left( \frac{\bv^T \bx}{ \|\bx\|^{1/2}}\right) , \ \text{w.h.p.}
\end{equation*}
This can be done by using, e.g., Hoeffding's inequality on the bounded random variable $\bv^T \bx/\|\bx\|^{1/2}$.

\subsubsection{Proof of Proposition~\ref{prop:subgaussinlier}}
\label{subsubsec:subgaussin}

	For the inliers, we need to bound $\cP(\cX_{\mathrm{in}})$.
	Using Theorem 3.1 in~\cite{lu2000some} and letting $\widetilde{\cdot}$ denote the spherization operator,
	\begin{equation}\label{eq:2termperm}
	\lambda_d \left( \sum_{\cX_{\mathrm{in}}} \frac{\bx_i \bx_i^T}{\| \bx_i \|} \right) = \sigma_d ( \widetilde{\bX_{\mathrm{in}}}\bX_{\mathrm{in}}^T) \geq \sigma_d ( \widetilde{\bX_{\mathrm{in}}}) \sigma_d( \bX_{\mathrm{in}}).
	\end{equation}
    We proceed by bounding the last term in~\eqref{eq:2termperm}. Notice that $\bSigma_{\mathrm{in}}^{-1/2} \bX_{\mathrm{in}}$ has directional variance $1/d$ in all directions of $L_*$ (i.e., these transformed inliers are isotropic with variance $1/d$). We again apply Theorem 3.1 in~\cite{lu2000some} and then apply Theorem 5.39 in~\cite{vershynin2012introduction} to $\sigma_d \left( \bSigma_{\mathrm{in}}^{-1/2} \bX_{\mathrm{in}} \right)$, where we scale by the standard deviation $1/\sqrt{d}$ and choose $t = a\sqrt{N_{\mathrm{in}}}$:
	\begin{align}\label{eq:inconcbd1}
	\sigma_d \left( \bX_{\mathrm{in}} \right) &= \sigma_d \left( \bSigma_{\mathrm{in}}^{1/2} \bSigma_{\mathrm{in}}^{-1/2} \bX_{\mathrm{in}} \right) \geq \sigma_d \left( \bSigma_{\mathrm{in}}^{1/2}\right) \sigma_d\left( \bSigma_{\mathrm{in}}^{-1/2} \bX_{\mathrm{in}} \right) \\ \nonumber
	  &>  \lambda_d\left(\bSigma_{\mathrm{in}}^{1/2}\right) \left( (1-a) \sqrt{\frac{N_{\mathrm{in}}}{d}}- C_1 \right), \ \text{w.p. at least } 1-2e^{- c_1 a^2 N_{\mathrm{in}}}.
	\end{align}
	Here, $c_1$ and $C_1$ are constants that depend on the sub-Gaussian norm of $\bSigma_{\mathrm{in}}^{-1/2} \bx$, where $\bx$ follows the inlier distribution, and $a$ must be chosen such that $(1-a)^2 N_{\mathrm{in}} >  C_1^2 d$. On the other hand, $\widetilde{\bX}_{\mathrm{in}}$ is still sub-Gaussian and $\widetilde{\bSigma_{\mathrm{in}}^{-1/2} \bX_{\mathrm{in}}}$ is isotropic sub-Gaussian. Therefore, by using~\eqref{eq:inconcbd1} and applying Theorem 3.1 in~\cite{lu2000some}, we have
	\begin{align}\label{eq:inconcbd2}
	\sigma_d \left( \widetilde{\bX_{\mathrm{in}}} \right) &= \sigma_d \left( \left[\frac{\bx_1}{\|\bx_1\|},\dots, \frac{\bx_{{N_{\mathrm{in}}}}}{\|\bx_{N_{\mathrm{in}}}\|}\right] \right) \\ \nonumber
	&= \sigma_d \left(\bSigma_{\mathrm{in}}^{1/2} \left[ \frac{\bSigma_{\mathrm{in}}^{-1/2}\bx_1}{\|\bSigma_{\mathrm{in}}^{1/2}\bSigma_{\mathrm{in}}^{-1/2} \bx_1\|},\dots, \frac{\bSigma_{\mathrm{in}}^{-1/2} \bx_{{N_{\mathrm{in}}}}}{\|\bSigma_{\mathrm{in}}^{1/2}\bSigma_{\mathrm{in}}^{-1/2} \bx_{N_{\mathrm{in}}}\|}\right] \right) \\ \nonumber
	&\geq \sigma_d \left( \frac{\bSigma_{\mathrm{in}}^{1/2}}{\|\bSigma_{\mathrm{in}}^{1/2}\|_2} \left[ \frac{\bSigma_{\mathrm{in}}^{-1/2}\bx_1}{\|\bSigma_{\mathrm{in}}^{-1/2} \bx_1\|},\dots, \frac{\bSigma_{\mathrm{in}}^{-1/2} \bx_{{N_{\mathrm{in}}}}}{\|\bSigma_{\mathrm{in}}^{-1/2} \bx_{N_{\mathrm{in}}}\|}\right] \right) \\ \nonumber
	&\geq \frac{\lambda_d (\bSigma_{\mathrm{in}}^{1/2})}{\lambda_1 (\bSigma_{\mathrm{in}}^{1/2})} \sigma_d \left( \widetilde{\bSigma_{\mathrm{in}}^{-1/2} \bX_{\mathrm{in}}} \right)  \\ \nonumber
	&>  \frac{\lambda_d(\bSigma_{\mathrm{in}}^{1/2})}{\lambda_1 (\bSigma_{\mathrm{in}}^{1/2})} \left( (1-a) \sqrt{\frac{N_{\mathrm{in}}}{d}}- C_1' \right), \ \text{w.p. at least } 1-2e^{- c_1' a^2 N_{\mathrm{in}}}.
	\end{align}
    Here, $c_1'$ and $C_1'$ are constants that depend on the sub-Gaussian norm of $\widetilde{\bSigma_{\mathrm{in}}^{-1/2} \bx}$, where $\bx$ is a random vector that follows the inlier distribution, and $a$ must chosen such that $(1-a)^2 N_{\mathrm{in}} >  C_1^{'2} d$. Therefore, for $a$ to satisfy both requirements, we need $(1-a)^2 N_{\mathrm{in}} >  \max(C_1,C_1)^{'2} d$.
	Abusing notation, we let $c_1$ be the minimum of $c_1$ and $c_1'$, and we let $C_1$ be the maximum of $C_1$ and $C_1'$.
    Putting~\eqref{eq:inconcbd1} and~\eqref{eq:inconcbd2} together, we find
	\begin{align}\label{eq:inconcbd}
	\lambda_d \left( \sum_{\cX_{\mathrm{in}}} \frac{\bx_i \bx_i^T}{\|\bx_i \|} \right) &>   \frac{\lambda_d(\bSigma_{\mathrm{in}})}{ \lambda_1(\bSigma_{\mathrm{in}})^{1/2}}  \left( (1-a) \sqrt{\frac{N_{\mathrm{in}}}{d}}- C_1 \right)^2, \text{ w.p.~at least } 1 - 4e^{- c_1 a^2 N_{\mathrm{in}}}.
	\end{align}

\subsection{Proof of Proposition~\ref{thm:pcainit}}
\label{sec:haystackcorproof}

To prove this proposition, we only need to show that PCA initializes in ${B(L_*,\gamma)}$. Note that the covariance matrix for the data $\cX$ is $\bSigma = N_{\mathrm{out}} \bSigma_{\mathrm{out}} / (N D_{\mathrm{out}}) +
N_{\mathrm{in}} \bSigma_{\mathrm{in}} / (Nd)$.  We want to see how close the sample covariance approximates $\bSigma_{\mathrm{in}}$, since the PCA subspace is spanned by the top $d$ eigenvectors of the sample covariance.
 Denoting the sample covariance by $\bSigma_N$, let $\bV_{PCA}$ be its top $d$ eigenvectors. Also, let $\bV^*$ be the top $d$ eigenvectors of $\bSigma_{\mathrm{in}}$ and let $\bV$ be the top $d$ eigenvectors of $\bSigma$. We note that
\begin{equation}\label{eq:trisin}
	|\sin(\theta_1(\bV_{PCA},\bV^*))| \leq |\sin(\theta_1(\bV_{PCA},\bV))| + |\sin(\theta_1(\bV,\bV^*))|.
\end{equation}

To deal with the last term in~\eqref{eq:trisin}, the Davis-Kahan $\sin \theta$ Theorem~\citep{davis1970rotationIII}, or more precisely Corollary 3.1 of \cite{vu2013fantope}, gives
\begin{equation*}
	|\sin(\theta_1(\bV,\bV^*)| \leq \sqrt{2} \frac{ \lambda_1( N_{\mathrm{out}} \bSigma_{\mathrm{out}} / (N D_{\mathrm{out}})) }{\lambda_d ( N_{\mathrm{in}} \bSigma_{\mathrm{in}} / (N d) )} =  \sqrt{2}  \frac{N_{\mathrm{out}}}{N_{\mathrm{in}}} \frac{d}{D_{\mathrm{out}}} \frac{\lambda_1(\bSigma_{\mathrm{out}})}{\lambda_d (\bSigma_{\mathrm{in}})}.
\end{equation*}
Thus,
\begin{equation}\label{eq:sincond1}
	\sin \left(\gamma \right) > \sqrt{2}  \frac{N_{\mathrm{out}}}{N_{\mathrm{in}}} \frac{d}{D_{\mathrm{out}}} \frac{\lambda_1(\bSigma_{\mathrm{out}})}{\lambda_d (\bSigma_{\mathrm{in}})}
\end{equation}
is a sufficient condition for $|\sin(\theta_1(\bV,\bV^*)| < \sin(\gamma ) $.

On the other hand, similar to the derivation of~\eqref{eq:inconcbd1}, for the first term in the right hand side of~\eqref{eq:trisin}, we must bound how close the sample covariance is to the true covariance. Proposition 2.1 of~\citet{vershynin2012close} implies that, for every $\delta > 0$,
\begin{equation}\label{eq:samptruecov}
  \|\bSigma - \bSigma_N\|_2 \leq \epsilon(\delta,\bSigma_{\mathrm{in}},\bSigma_{\mathrm{out}})  \left( \frac{D_{\mathrm{out}}}{N} \right)^{\frac{1}{2}},
\end{equation}
with probability at least $1-\delta$, where $\epsilon(\delta,\bSigma_{\mathrm{in}},\bSigma_{\mathrm{out}})$ is a constant depending on $\delta$, $\bSigma_{\mathrm{in}}$, and $\bSigma_{\mathrm{out}}$.
By assumption, $\bSigma$ has a positive $d$th eigengap.
Thus, another application of the Davis-Kahan $\sin \theta$ Theorem yields
\begin{align*}
	|\sin(\theta_1(\bV_{PCA},\bV))| &\leq \frac{\|\bSigma - \bSigma_N\|_2 }{\lambda_d ( \bSigma) - \lambda_{d+1} (\bSigma) } \leq \epsilon_2(\delta,\bSigma_{\mathrm{in}},\bSigma_{\mathrm{out}})  \left( \frac{d D}{N} \right)^{\frac{1}{2}},
\end{align*}
with probability $1-\delta$.
For large enough $N$, we have
\begin{align}\label{eq:pcatruesub}
	|\sin(\theta_1(\bV_{PCA},\bV)| \leq \sin\left( \gamma \right) - |\sin(\theta_1(\bV,\bV^*)| .
\end{align}
Rearranging~\eqref{eq:pcatruesub} and using the triangle inequality in~\eqref{eq:trisin} yields
\begin{equation*}
	|\sin(\theta_1(\bV_{PCA},\bV^*))| \leq \sin(\gamma),
\end{equation*}
with probability $1-\delta$. 

\subsection{Proof of Theorem~\ref{thm:haystack}}
\label{app:propaligncov}

We begin by bounding the outlier term. The last term on the right hand side of~\eqref{eq:alignbdiso} is optimally bounded in the proof of Theorem~\ref{thm:genhaystack}. On the other hand, in the following proposition, we obtain tighter bounds for the first term on the right hand side of this inequality.
\begin{proposition} \label{prop:aligncov}
		Suppose that $\cX$ is drawn from the Haystack Model with parameters $N_{\mathrm{in}}$, $\sigma_{\mathrm{in}}$, $N_{\mathrm{out}}$, $\sigma_{\mathrm{out}}$, $D$, and $d$. Then, for $D-d \geq 3$
		\begin{align}
		&\max_{L \in G(D,d)}  \left\| \widetilde{\bQ_L \bX_{\mathrm{out}}} \right\|_2 \leq \frac{7}{2} \sqrt{\frac{N_{\mathrm{out}}}{D-d}} + \sqrt{2}, \\ \nonumber
		& \ \ \ \ \ \ \text{w.p. at least } 1 -\exp\left(-\frac{N_{\mathrm{out}}}{4} \right)- C_1 \exp\left(-\frac{N_{\mathrm{out}}}{4(D-d)} + \frac{d(D-d) \log(D)}{2}  \right),
		\end{align}
		where $C_1$ is an absolute constant.
\end{proposition}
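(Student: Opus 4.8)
The plan is to control the supremum over the Grassmannian by combining a pointwise spectral-norm bound with an $\epsilon$-net argument, after first exploiting the scale invariance of the spherization map. First I would observe that, writing each outlier as $\bx_i = \|\bx_i\|\,\btheta_i$ with $\btheta_i = \bx_i/\|\bx_i\|$, the normalized projection $\widetilde{\bQ_L \bx_i} = \bQ_L\btheta_i/\|\bQ_L\btheta_i\|$ depends only on the direction $\btheta_i$. Since the Haystack outliers are spherically symmetric Gaussians, the $\btheta_i$ are i.i.d.\ uniform on $S^{D-1}$ and independent of the radii, so it suffices to treat outliers uniformly distributed on the sphere. For a fixed $L$, rotational invariance then shows that the columns of $\widetilde{\bQ_L \bX_{\mathrm{out}}}$ are i.i.d.\ uniform on the unit sphere of $L^\perp$, a space of dimension $D-d$, with second-moment matrix $\frac{1}{D-d}\bQ_L$.

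For a single fixed $L$, I would bound $\|\widetilde{\bQ_L \bX_{\mathrm{out}}}\|_2$ by the same non-asymptotic spectral-norm estimate used for the uniform-ball case in~\eqref{eq:unifballbd} (Lemma 8.4 of~\citet{lerman2015robust}), now applied in dimension $D-d$ with unit-length columns, or equivalently by applying Theorem 5.39 of~\citet{vershynin2012introduction} to the $(D-d)\times N_{\mathrm{out}}$ matrix whose columns are the isotropic sub-Gaussian vectors $\sqrt{D-d}\,\widetilde{\bQ_L\btheta_i}$. Either route gives $\|\widetilde{\bQ_L \bX_{\mathrm{out}}}\|_2 \le \sqrt{N_{\mathrm{out}}/(D-d)} + \sqrt{2} + t/\sqrt{D-d}$ with probability at least $1 - C\exp(-ct^2)$. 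Next I would pass from a single $L$ to all of $G(D,d)$ by a covering argument: the Grassmannian admits an $\epsilon$-net $\cN$ in the metric $\|\bP_L - \bP_{L'}\|_2$ of cardinality at most $(C_0/\epsilon)^{d(D-d)}$, since $\dim G(D,d) = d(D-d)$. Choosing $t \asymp \sqrt{N_{\mathrm{out}}/(D-d)}$ and $\epsilon \asymp 1/D$ (so that $\log(1/\epsilon)\asymp \log D$), a union bound over $\cN$ controls $\max_{L'\in\cN}\|\widetilde{\bQ_{L'}\bX_{\mathrm{out}}}\|_2$ and produces exactly the failure term $C_1\exp\bigl(-N_{\mathrm{out}}/(4(D-d)) + \tfrac{1}{2}d(D-d)\log D\bigr)$ appearing in the statement.

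The main obstacle is the final step: transferring the bound from the net to an arbitrary $L \in B(L',\epsilon)$. The difficulty is that the spherization map $\ba \mapsto \widetilde{\ba}$ is not Lipschitz near the origin, and $\|\bQ_L\btheta_i\|$ can be arbitrarily small when $\btheta_i$ is nearly contained in $L$. I would split the columns into ``good'' points with $\|\bQ_{L'}\btheta_i\| > \tau$ and ``bad'' points with $\|\bQ_{L'}\btheta_i\| \le \tau$. Using the elementary inequality $\|\widetilde{\ba}-\widetilde{\ba'}\| \le 2\|\ba - \ba'\|/\max(\|\ba\|,\|\ba'\|)$ together with $\|\bQ_L\btheta_i - \bQ_{L'}\btheta_i\| \le \epsilon$, the good columns contribute a perturbation of Frobenius norm at most $2\epsilon\sqrt{N_{\mathrm{out}}}/\tau$, while each bad column contributes at most $2$.

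The remaining task is a uniform control of the number of bad points. Since $\|\bQ_L\btheta_i\|^2$ follows a $\mathrm{Beta}((D-d)/2, d/2)$ law, the probability $\mathbb{P}(\|\bQ_L\btheta_i\|\le\tau)$ is polynomially small in $\tau$, and a Chernoff bound, union bounded over $\cN$, shows that at most a small fraction of points are simultaneously bad for all net subspaces; this is the source of the $\exp(-N_{\mathrm{out}}/4)$ term. Balancing $\tau$, $\epsilon$, and the bad-point count so that the total perturbation stays below $\tfrac{5}{2}\sqrt{N_{\mathrm{out}}/(D-d)}$ then upgrades the leading constant from $1$ to $\tfrac{7}{2}$ while preserving the additive $\sqrt{2}$, which completes the proof. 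I expect essentially all the technical work to concentrate in this bad-point estimate, since it is the only place where the non-Lipschitz normalization and the uniformity over $G(D,d)$ interact.
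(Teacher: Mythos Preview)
Your approach is correct and essentially the same as the paper's: a pointwise spectral bound via Lemma~8.4 of \citet{lerman2015robust} applied in dimension $D-d$, an $\epsilon$-net over $G(D,d)$ of cardinality $(C/\epsilon)^{d(D-d)}$, a good/bad split of the outliers to handle the non-Lipschitz spherization, and a Chernoff bound on the number of bad points near each net subspace. The only cosmetic difference is the bookkeeping of which failure term arises from which step---the paper takes $t=\sqrt{N_{\mathrm{out}}}/2$ in the pointwise bound (producing the standalone $e^{-N_{\mathrm{out}}/4}$) and union-bounds the bad-point Chernoff over the net (producing the covering-factor term, using $D-d\ge 3$ to get $\Pr(\angle(\bx,L)<\tfrac12\sqrt{(D-d)/D})\le 1/(D-d)$), which is the reverse of your allocation; either ordering works with suitable parameter choices.
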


\begin{proof}[Proof of Proposition~\ref{prop:aligncov}]
First, for a single $L$, Lemma 8.4 in \citet{lerman2015robust} yields
\begin{equation}\label{eq:oneLoutsphere}
\left\| \widetilde{\bQ_L \bX_{\mathrm{out}}} \right\|_2 \leq  \left( \sqrt{ \frac{N_{\mathrm{out}}}{D-d-0.5}} + \sqrt{2} + \frac{t}{\sqrt{D-d-0.5}}\right), \ \text{w.p. at least } 1-1.5 e^{-t^2}.
\end{equation}
We set $t=\sqrt{N_{\mathrm{out}}}/2$ to obtain the desired bound for a single subspace $L$. However, in order to cover all of $G(D,d)$, we much use a more complicated argument.

Assume that we have two subspaces $L_0$ and $L_1$ such that $\theta_1(L_0, L_1) < 1/(2\sqrt{D})$. First, due to the triangle inequality,
\begin{equation}\label{eq:sphouttri}
\left\| \widetilde{\bQ_{L_0} \bX_{\mathrm{out}}} -  \widetilde{\bQ_{L_1} \bX_{\mathrm{out}}} \right\|_2 < \left\| \widetilde{\bQ_{L_0} \bX_{\mathrm{out}}^0} -  \widetilde{\bQ_{L_1} \bX_{\mathrm{out}}^0} \right\|_2 + \left\| \widetilde{\bQ_{L_0} \bX_{\mathrm{out}}^1} -  \widetilde{\bQ_{L_1} \bX_{\mathrm{out}}^1} \right\|_2.
\end{equation}
Here, we have partitioned the outliers into two parts. The following set-valued functions define these parts. Let $L(t)$ be the geodesic between $L_0$ and $L_1$ such that $L(0) = L_0$ and $L(1) = L_1$. Then, we define
\begin{align}
\cX_{\mathrm{out}}^0 =  \left\{ \bx \in \cX_{\mathrm{out}} : \min_{t \in [0,1]} \angle(\bx, L(t)) < \frac{1}{2\sqrt{D}} \right\}, \\
\cX_{\mathrm{out}}^1 =  \left\{ \bx \in \cX_{\mathrm{out}} : \min_{t \in [0,1]} \angle(\bx, L(t)) \geq \frac{1}{2\sqrt{D}} \right\}.
\end{align}
Notice that
\begin{equation}
\cX_{\mathrm{out}}^0 \subset \left\{ \bx \in \cX_{\mathrm{out}} : \angle(\bx, L_0) < \frac{1}{\sqrt{D}} \right\}.
\end{equation}
With these datasets, their data matrices are $\bX_{\mathrm{out}}^0$ and $\bX_{\mathrm{out}}^1$.

For the last term in~\eqref{eq:sphouttri}, since $\theta_1(L_0, L_1) < 1/(2\sqrt{D})$, we have
\begin{equation}\label{eq:faroutbd}
\left\| \widetilde{\bQ_{L_0} \bX_{\mathrm{out}}^1} -  \widetilde{\bQ_{L_1} \bX_{\mathrm{out}}^1} \right\|_2 < \frac{1}{2}\sqrt{\frac{N_{\mathrm{out}} }{D}}.
\end{equation}

On the other hand, we must look at the concentration of points around subspaces for the second to last term in~\eqref{eq:sphouttri}. For any given $L$, we have the following concentration lemma.
\begin{lemma}
	If $\bx \sim \cN(\bzero,\bI)$, then
	$$\Pr \left( \angle(\bx,L) < \frac{1}{2} \sqrt{\frac{D-d}{D}} \right) < \exp\left( -\frac{D-d}{2} \right).$$
\end{lemma}
\begin{proof}
	This is a direct consequence of Lemma 2.2 in \citet{dasgupta2003elementary}.
\end{proof}

For $D-d \geq 3$, we have
\begin{equation}\label{eq:Dbd}
\Pr \left( \angle(\bx,L) <  \frac{1}{2} \sqrt{\frac{D-d}{D}} \right) \leq \exp \left(-\frac{D-d}{2}\right) \leq \frac{1}{D-d}.
\end{equation}
Using~\eqref{eq:Dbd} together with a loose Chernoff bound for the concentration of binomial random variables~\citep{mitzenmacher2005probability}, we know that
\begin{equation}\label{eq:binbd}
\Pr \left( \#\left(\left\{ \bx \in \cX_{\mathrm{out}} : \angle(\bx,L) <  \frac{1}{2} \sqrt{\frac{D-d}{D}} \right\}\right) > \frac{3}{2} \frac{N_{\mathrm{out}}}{D-d} \right) < \exp \left( -\frac{N_{\mathrm{out}}}{4(D-d)} \right).
\end{equation}
Thus, we use~\eqref{eq:binbd} on the second to last term in~\eqref{eq:sphouttri} to find
\begin{align}\label{eq:closeoutbd}
\left\| \widetilde{\bQ_{L_0} \bX_{\mathrm{out}}^0} -  \widetilde{\bQ_{L_1} \bX_{\mathrm{out}}^0} \right\|_2 &< \sqrt{ \#(\cX_{\mathrm{out}}^0)} \leq \sqrt{\frac{3}{2}\frac{N_{\mathrm{out}}}{D-d}}, \\ \nonumber &\text{w.p. at least } 1-\exp\left(-\frac{N_{\mathrm{out}}}{4(D-d)}\right).
\end{align}
Notice that also, by construction, this is true for all $L_0', L_1' \in B(L(t),  \frac{1}{4} \sqrt{(D-d)/D})$.

We finish by completing the covering argument on $G(D,d)$. By remark 8.4 of~\citep{szarek1983finite}, $G(D,d)$ can be covered by $(C)^{d(D-d)} / (\gamma_1)^{d(D-d)}$ balls of radius $\gamma_1$. This means that, using a union bound and taking~\eqref{eq:oneLoutsphere},~\eqref{eq:sphouttri},~\eqref{eq:faroutbd}, and~\eqref{eq:closeoutbd} together,
\begin{align}
\Pr&\left( \max_{L \in G(D,d)} \| \widetilde{\bQ_{L} \bX_{\mathrm{out}}} \| < \frac{7}{2} \sqrt{ \frac{N_{\mathrm{out}}}{D-d-0.5}} + \sqrt{2} \right) > \\ \nonumber & 1 - C_1 \exp\left(-\frac{N_{\mathrm{out}}}{4(D-d)} + C_2 d(D-d) \log\left( \frac{D}{D-d} \right)   \right) - \exp\left( -\frac{N_{\mathrm{out}}}{4} \right) ,
\end{align}
where $C_1$ and $C_2$ are absolute constants.
Thus, we see that $N$ must be on the order $N=O(d(D-d)^2\log(D))$, and we have have the desired statement.
\end{proof}

We now continue with the proof of Theorem~\ref{thm:haystack}. In Proposition~\ref{prop:aligncov}, We see that $N$ needs to be on the order of $d(D-d)^2\log(D)$, which is not the case of the small sample regime. On the other hand, Theorem 5.39 of~\citet{vershynin2012introduction} states that
\begin{equation}\label{eq:outconcbdhaystack}
	\| \bX_{\mathrm{out}} \|_2 \leq \sigma_{\mathrm{out}} \left[ \frac{5}{4} \sqrt{\frac{N_{\mathrm{out}}}{D}} + 1 \right], \ \text{w.p. at least } 1-2e^{-N_{\mathrm{out}}/16}.
	\end{equation}
	
	We combine the result of Proposition~\ref{prop:aligncov} with~\eqref{eq:outconcbdhaystack} to find that
	\begin{align}\label{eq:haystackfinaloutbd}
	&\| \widetilde{\bQ_{L} \bX_{\mathrm{out}}}\|_2 \| \bX_{\mathrm{out}} \|_2 \leq \frac{7}{2} \frac{N_{\mathrm{out}}}{\sqrt{D(D-d)}} + O\left(\sigma_{\mathrm{out}} \sqrt{\frac{N_{\mathrm{out}}}{D}} \right), \\ \nonumber
	& \text{w.p. at least } 1 -2e^{-N_{\mathrm{out}}/16} - e^{-N_{\mathrm{out}} / 4} - C_1 \exp\left(-\frac{N_{\mathrm{out}}}{4(D-d)} + C_2 d(D-d) \log\left( \frac{D}{D-d} \right)   \right).
	\end{align}
	
	We can also improve the results for the inlier permeance bound since they are already isometric. In this case, we have that~\eqref{eq:inconcbd1} still holds. On the other hand, $\widetilde{\bX_{\mathrm{in}}}$ is isotropic, which implies that
	\begin{align}\label{eq:haystack1}
	\sigma_d \left( \widetilde{\bX_{\mathrm{in}}} \right) &>   \left( (1-a) \sqrt{\frac{N_{\mathrm{in}}}{d}}- C_1' \right), \ \text{w.p. at least } 1-2e^{- c_1' a^2 N_{\mathrm{in}}},
	\end{align}
	where as before we must have $(1-a)^2 N_{\mathrm{in}}^2 > C_1^{'2} d$.
	Abusing notation again (as we did in the proof of Proposition~\ref{prop:subgaussinlier}), we let $c_1$ be the minimum of $c_1$ and $c_1'$ and $C_1$ be the max of $C_1$ and $C_1'$ and we find that
	\begin{align}\label{eq:haystack2}
	\lambda_d \left( \sum_{\cX_{\mathrm{in}}} \frac{\bx_i \bx_i^T}{\|\bx_i \|} \right) &>  \sigma_{\mathrm{in}}  \left( (1-a) \sqrt{\frac{N_{\mathrm{in}}}{d}}- C_1 \right)^2, \text{ w.p. at least } 1 - 4e^{- c_1 a^2 N_{\mathrm{in}}}.
	\end{align}
	This results in the following exact statement for the Haystack Model. We combine~\eqref{eq:haystackfinaloutbd} with~\eqref{eq:haystack2} to find that if~\eqref{eq:haystackSNR} holds, then $\cS(\cX, L_*, \gamma)>0$ with probability at least~\eqref{eq:probhaystackbd}.

\subsection{Proof of Theorem~\ref{thm:haystacksnrzero}}
\label{app:snrzeroproof}

Here, we will show that GGD can recover the underlying subspace for any fixed fraction of outliers, provided that $N_{\mathrm{out}}$ is large enough. 
We denote the percentage of inliers by $\alpha_1$ and the percentage of outliers by $\alpha_0$. Then, the SNR is given by $\alpha_1 / \alpha_0$. 
    
    Assume for simplicity that $\sigma_{\mathrm{out}} = 1$, since the general case follows from the same logic. 
    First, we will bound the maximum norm of the set of outlier points. Each $\bx_i$ is i.i.d.~$\cN(\bzero, \bI / D)$, we can bound its norm by 
    \begin{equation}\label{eq:outnorm}
    \Pr\left(  \| \bx_i \| \leq 1 + \frac{t}{\sqrt{D}}  \right) \geq  1-\exp\left( -c t^2 \right),
    \end{equation}
    where $c$ is just some universal constant.
    Thus, applying a union bound to~\eqref{eq:outnorm} yields 
    \begin{equation}
    \Pr\left( \max_i  \| \bx_i \| \leq 1 + \frac{t}{\sqrt{D}}  \right) \geq 1- N_{\mathrm{out}} \exp\left( -c t^2 \right).
    \end{equation}
    We can use a value of $t=\sqrt{D} N_{\mathrm{out}}^{1/6}$ to find 
    \begin{equation}\label{eq:gaussmaxnorm}
    \Pr\left( \max_i  \| \bx_i \| \leq 1  + N^{1/6} \right) \geq 1- N_{\mathrm{out}}\exp\left( -c D N_{\mathrm{out}}^{1/3} \right).
    \end{equation}

    Next, consider the alignment for a single fixed subspace $L \neq L_*$:
    \begin{equation}\label{eq:alignspec}
    \cA(\cX_{\mathrm{out}}, L) = \left\| \sum_{\cX_{\mathrm{out}}} \frac{\bQ_L \bx_i \bx_i^T \bP_L}{\| \bQ_L \bx_i \|} \right\|_2.
    \end{equation}
    Taking an expectation within the norm in~\eqref{eq:alignspec} yields 
    \begin{equation}\label{eq:alignexp}
        \E \sum_{\cX_{\mathrm{out}}} \frac{\bQ_L \bx_i \bx_i^T \bP_L}{\| \bQ_L \bx_i \|}  = \bzero.
    \end{equation}
    This follows by the symmetry of the outlier distribution, $\cN(\bzero, \bI/D)$.
    Fix arbitrary vectors $\bu \in \Sp(\bQ_L)\cap S^{D-1}$ and $\bv \in \Sp(\bP_L)\cap S^{D-1}$. Then, \eqref{eq:alignexp} implies that
    \begin{equation}\label{eq:alignexp2}
    \E \sum_{\cX_{\mathrm{out}}} \frac{\bu^T \bx_i \bx_i^T \bv}{\| \bQ_L \bx_i \|}  = 0.
    \end{equation}
    In the following, we will continue to use $\bu$ and $\bv$ defined in this way. 
    
    To this end, define the following random variable
    \begin{equation}
        J(\bx,\bu,\bv) = \frac{\bu^T \bx \bx^T \bv}{\|\bQ_L \bx\|},
    \end{equation}
   where $\bx$ is distributed $\cN(\bzero, \bI/D)$ (i.e., it is an outlier in the Haystack model). We will first give a concentration bound for $\sum_{\cX_{\mathrm{out}}} J(\bx_i,\bu,\bv) $, which appears in~\eqref{eq:alignexp2}.
    Notice that $J(\bx, \bu, \bv)$ is a mean zero random variable and is bounded by
    \begin{equation}\label{eq:Jbd}
    |J(\bx,\bu,\bv)| \leq |\bx^T \bv| .
    \end{equation}
    Here, $\bx^T \bv$ is Gaussian with variance $1/D$.
    Thus, $J(\bx,\bu,\bv)$ is sub-Gaussian with variance proxy $1/D$. This implies that
    \begin{equation}
        \Pr \left( \left|\sum_{\cX_{\mathrm{out}}} J(\bx_i,\bu,\bv) \right| > N_{\mathrm{out}}t \right) \leq 2\exp \left( - \frac{t^2 DN_{\mathrm{out}}}{2} \right).
    \end{equation}
    Letting $t = N_{\mathrm{out}}^{-1/3}$ we find that
    \begin{equation}\label{eq:oneJbd}
        \Pr \left( \left|\sum_{\cX_{\mathrm{out}}} J(\bx_i,\bu,\bv) \right| > N_{\mathrm{out}}^{2/3} \right) \leq 2\exp \left( - \frac{DN_{\mathrm{out}}^{1/3}}{2} \right).
    \end{equation}
    
    Notice that $J(\bx,\bu,\bv)$ is continuous as a function of $\bu$ and $\bv$.
    Using~\eqref{eq:Jbd}, between two points $\bv_1,\bv_2 \in S^{D-1}$, we can bound the deviation in $\sum_{\cX_{\mathrm{out}}} J(\bx_i,\bu,\cdot)$ by 
    \begin{align}\label{eq:changeJ}
        \left|\sum_{\cX_{\mathrm{out}}} J(\bx_i,\bu,\bv_1) -  \sum_{\cX_{\mathrm{out}}} J(\bx_i,\bu,\bv_2)\right| &\leq \sum_{\cX_{\mathrm{out}}} | (\bv_1 - \bv_2)^T \bx_i | \leq \| \bv_1 - \bv_2\| N_{\mathrm{out}} \max_i \| \bx_i \|.
    \end{align}
    Combining~\eqref{eq:gaussmaxnorm},~\eqref{eq:oneJbd} and~\eqref{eq:changeJ} with $\| \bv_1 - \bv_2\| < N_{\mathrm{out}}^{-1/3}$ yields
    \begin{align} \label{eq:subballprob}
    \Pr \left( \max_{\bv_2 : \| \bv_2 - \bv_1\| \leq N_{\mathrm{out}}^{-1/3}} \left| \sum_i J(\bx_i, \bu, \bv_2) \right| \leq 3 N_{\mathrm{out}}^{5/6} \right) &\geq  1 - N_{\mathrm{out}}\exp\left( -c D N_{\mathrm{out}}^{1/3} \right) \\ \nonumber
    & \quad  - 2\exp \left( - \frac{DN_{\mathrm{out}}^{1/3}}{2} \right) .
    \end{align}
    Since $L \cap S^{D-1}$ can be covered by $\left( 2  N_{\mathrm{out}}^{1/3} + 1 \right)^d$ balls of radius $N_{\mathrm{out}}^{-1/3}$, we use a covering argument with~\eqref{eq:subballprob} to find
    \begin{align}
        \Pr \left( \max_{\bv \in L \cap S^{D-1}} \left|\sum_{\cX_{\mathrm{out}}} J(\bx_i,\bu,\bv) \right| \leq  3 N_{\mathrm{out}}^{5/6} \right) &\geq 1 -   N_{\mathrm{out}}\exp\left( -c D N_{\mathrm{out}}^{1/3} \right) \\ \nonumber
            &\quad   - 2 \left( 2 N_{\mathrm{out}}^{1/3} + 1 \right)^d \exp \left( - \frac{DN_{\mathrm{out}}^{1/3}}{2} \right).
    \end{align}
    Notice that we do not include the probability from~\eqref{eq:gaussmaxnorm} in the covering, since this probability holds independently of the choice of $\bv \in L \cap S^{D-1}$.
    
    The previous argument was actually independent of choice of $\bu \in L^{\perp} \cap S^{D-1}$. Thus, we finally have
    \begin{align}\label{eq:subchange}
        \Pr &\left( \left \| \sum_{\cX_{\mathrm{out}}} \frac{ \bQ_{L} \bx_i \bx_i^T \bP_L}{\| \bQ_{L} \bx_i \|} \right\|_2 \leq  3 N_{\mathrm{out}}^{5/6}  \right) \geq \\ \nonumber
            & \quad \quad \quad 1  - N_{\mathrm{out}}\exp\left( -c D N_{\mathrm{out}}^{1/3} \right) - 2 \left( 2 N_{\mathrm{out}}^{1/3} + 1 \right)^d \exp \left( - \frac{DN_{\mathrm{out}}^{1/3}}{2} \right) .
    \end{align}

    Now that we have covered a single $L \in G(D,d)$, we must extend this to all of $G(D,d)$ by another covering argument. From Lemma 2.2 of~\cite{dasgupta2003elementary}, we have, for each $\bx_i \in \cX_{\mathrm{out}}$
    \begin{equation}
        \Pr \left( \angle(\bx_i, L) < \frac{\pi}{2} \sqrt{\frac{\beta (D-d)}{D}} \right) \leq \exp \left( \frac{D-d}{2} (1 + \log(\beta)) \right).
    \end{equation}
    If we choose $\beta < N_{\mathrm{out}}^{-2/3}$, then it is not hard to show that
    \begin{equation}
        \Pr \left( \angle(\bx_i, L) <  \frac{\pi}{2}\sqrt{ \frac{\beta (D-d)}{D}} \right) \leq N_{\mathrm{out}}^{-1/3}.
    \end{equation}
    Define the cone around a subspace $L$ as
	\begin{equation}
		\cC(L, \xi) = \{ \bx \in \reals^{D} : \angle(\bx, L) < \xi\}.
	\end{equation}
    Using a loose Chernoff bound for the concentration of binomial random variables~\citep{mitzenmacher2005probability}, we have
    \begin{equation}
        \Pr \left( \left|\#\left(\cX_{\mathrm{out}} \cap \cC\left(L, N_{\mathrm{out}}^{-1/3} \frac{\pi}{2}\sqrt{ \frac{ (D-d)}{D}} \right)\right) -\delta N_{\mathrm{out}}^{2/3}\right| \geq N_{\mathrm{out}}^{2/3} \right) \leq \exp\left(-\frac{N_{\mathrm{out}}^{2/3} \delta^2}{3} \right).
    \end{equation}
Choosing $\delta = N_{\mathrm{out}}^{-1/6}$ yields
\begin{equation}\label{eq:coneprob}
        \Pr \left( \left|\#\left(\cX_{\mathrm{out}} \cap \cC\left(L, N_{\mathrm{out}}^{-1/3} \frac{\pi}{2}\sqrt{ \frac{ (D-d)}{D}} \right)\right) - N_{\mathrm{out}}^{1/2}\right| \geq N_{\mathrm{out}}^{2/3} \right) \leq \exp\left(-\frac{N_{\mathrm{out}}^{1/3}}{3} \right).
    \end{equation}
       
	    For any $L_0 \in G(D,d)$, and $L_1 \in B(L_0, \xi)$, we can separate the alignment term into two parts: those $\bx_i$ that are close to $L_0$ and $L_1$ and those that lie further away. The idea behind this is that points that are far away from $L_0$ and $L_1$ will contribute similar amounts to the alignment. On the other hand, those that are very close can contribute at most 2 times their norm to the alignment. 

    Rigorously, we write
    \begin{align}\label{eq:grassalignpert}
        &\left\| \sum_{\cX_{\mathrm{out}}} \frac{\bQ_{L_0} \bx_i \bx_i^T \bP_{L_0}}{\| \bQ_{L_0} \bx_i  \|_2 } -  \frac{\bQ_{L_1} \bx_i \bx_i^T \bP_{L_1}}{\| \bQ_{L_1} \bx_i  \|_2 } \right\|_2  \\ \nonumber
        & \quad \quad \quad \quad \quad \leq\left\| \sum_{\bx_i \in \cX_{\mathrm{out}} \cap \cC(L_0, \xi)} \frac{\bQ_{L_0} \bx_i \bx_i^T \bP_{L_0}}{\| \bQ_{L_0} \bx_i  \|_2 } -  \frac{\bQ_{L_1} \bx_i \bx_i^T \bP_{L_1}}{\| \bQ_{L_1} \bx_i  \|_2 } \right\|_2 + \\ \nonumber 
                                                                                                                                                                                                                   &\quad \quad \quad \quad \quad \quad \quad  \left\| \sum_{\bx_i \in \cX_{\mathrm{out}} \cap \cC(L_0, \xi)^C} \frac{\bQ_{L_0} \bx_i \bx_i^T \bP_{L_0}}{\| \bQ_{L_0} \bx_i  \|_2 } -  \frac{\bQ_{L_1} \bx_i \bx_i^T \bP_{L_1}}{\| \bQ_{L_1} \bx_i  \|_2 } \right\|_2 \\ \nonumber
                                                                                                                                                                                                                   &\quad \quad \quad \quad \quad \leq \sum_{\bx_i \in \cX_{\mathrm{out}} \cap \cC(L_0, \xi)} 2\|\bx_i\| + 2 \xi \sum_{\bx_i \in \cX_{\mathrm{out}} \cap \cC(L_0, \xi)^C} \| \bx_i \|_2.
    \end{align}
    We will examining~\eqref{eq:grassalignpert} term by term. For the first term, if we choose $\xi = N_{\mathrm{out}}^{-1/3}$ and combine~\eqref{eq:gaussmaxnorm} and~\eqref{eq:coneprob}, we find that
    \begin{align}\label{eq:gddcoverfirstterm}
        \sum_{\bx_i \in \cX_{\mathrm{out}} \cap \cC(L_0, \xi)} 2\|\bx_i\| &\leq 2 \#(\cX_{\mathrm{out}} \cap \cC(L_0, \xi)) \max_{\cX_{\mathrm{out}}} \| \bx_i \| \leq 2 \left(1+ N_{\mathrm{out}}^{1/6}\right) N_{\mathrm{out}}^{2/3}, \\ \nonumber
           &\text{w.p.~at least } 1 -N_{\mathrm{out}}\exp\left( -c D N_{\mathrm{out}}^{1/3} \right) - \exp\left(-\frac{N_{\mathrm{out}}^{1/3}}{3} \right) .
    \end{align}
    For the second term, choosing $\xi = N_{\mathrm{out}}^{-1/3}$ and again using~\eqref{eq:gaussmaxnorm} yields
    \begin{align}\label{eq:gddcoversecondterm}
2 \xi \sum_{\bx_i \in \cX_{\mathrm{out}} \cap \cC(L_0, \xi)^C} \| \bx_i \|_2
&\leq 2 \left(1+ N_{\mathrm{out}}^{1/6}\right) N_{\mathrm{out}}^{2/3} + o(1),   \\ \nonumber
    &\text{w.p.~at least } 1- N_{\mathrm{out}}\exp\left( -c D N_{\mathrm{out}}^{1/3} \right).
    \end{align}
    Putting~\eqref{eq:gddcoverfirstterm} and~\eqref{eq:gddcoversecondterm} yields
    \begin{align}\label{eq:grasschange}
        &\left\| \sum_{\cX_{\mathrm{out}}} \frac{\bQ_{L_0} \bx_i \bx_i^T \bP_{L_0}}{\| \bQ_{L_0} \bx_i  \|_2 } -  \frac{\bQ_{L_1} \bx_i \bx_i^T \bP_{L_1}}{\| \bQ_{L_1} \bx_i  \|_2 } \right\|_2 \leq  8 N_{\mathrm{out}}^{5/6}
        ,\\ \nonumber
          &\quad \quad \quad \quad \quad \text{w.p.~at least } 1 - N_{\mathrm{out}}\exp\left( -c D N_{\mathrm{out}}^{1/3} \right) -\exp\left(-\frac{N_{\mathrm{out}}^{1/3}}{3} \right) .
    \end{align}
    As already mentioned in Appendix~\ref{app:propaligncov} (following~\citet{szarek1983finite}), $G(D,d)$ can be covered by $(C_1)^{d(D-d)} / (\gamma_1)^{d(D-d)}$ balls of radius $\gamma_1$. Thus, by a union bound with~\eqref{eq:subchange} and~\eqref{eq:grasschange},
    \begin{align}\label{eq:finaloutbd}
    	 &\left \| \sum_{\cX_{\mathrm{out}}} \frac{ \bQ_{L} \bx_i \bx_i^T \bP_L}{\| \bQ_{L} \bx_i \|} \right\|_2 \leq 11 N_{\mathrm{out}}^{5/6}, \ \forall \ L \in G(D,d),  \\ \nonumber
    	&\quad \text{ w.p. at least } 1 -  N_{\mathrm{out}}\exp\left( -c D N_{\mathrm{out}}^{1/3} \right) - 2 \left( (C_1)^{d(D-d)} ( N_{\mathrm{out}}^{1/3})^{d(D-d)} \right) \cdot   \\ \nonumber 
    	& \quad \quad \quad \Bigg(\left( 2  N_{\mathrm{out}}^{1/3} + 1 \right)^d \exp \left( - \frac{DN_{\mathrm{out}}^{1/3}}{2 } \right) + \exp\left(-\frac{N_{\mathrm{out}}^{1/3}}{3} \right) \Bigg) .
    \end{align}
	As $N_{\mathrm{out}} \to \infty$, we see that this probability goes to 1.
	
	On the other hand, we recall the bound on the permeance of inliers from~\eqref{eq:haystack2}. Combining this with~\eqref{eq:finaloutbd} and adding back in the scale factor $\sigma_{\mathrm{out}}$, we have that if
	\begin{equation}
		\cos(\gamma) \sigma_{\mathrm{in}}  \left( (1-a) \sqrt{\frac{N_{\mathrm{in}}}{d}}- C_1 \right)^2 	\geq 11 \sigma_{\mathrm{out}} N_{\mathrm{out}}^{5/6},
	\end{equation}
	then $\cS(\cX, L_*, \gamma) > 0$ w.o.p. This equates to
	\begin{equation}\label{eq:finsnrzero}
	 	\SNR \geq  \frac{11 d \sigma_{\mathrm{out}} }{\cos(\gamma) N_{\mathrm{out}}^{1/6} \sigma_{\mathrm{in}} (1-a)^2} + O\left(\frac{\sqrt{N_{\mathrm{in}}}}{N_{\mathrm{out}}}\right) ,
	\end{equation}
    which goes to $0$ as $N \to \infty$ for any fixed fraction of outliers. We see, in terms of dependence on parameters in~\eqref{eq:finaloutbd} and~\eqref{eq:finsnrzero}, $N_{\mathrm{out}}$ must be at least on the order of $O(\max(d^3D^3 \log^3(N_{\mathrm{out}}),(dN_{\mathrm{out}}/N_{\mathrm{in}})^6))$. This is due to the fact that~\eqref{eq:finsnrzero} must hold and the limiting probability in~\eqref{eq:finaloutbd} is
    $$ 2 \left( (C_1)^{d(D-d)} ( N_{\mathrm{out}}^{1/3})^{d(D-d)} \right) \exp\left(-\frac{N_{\mathrm{out}}^{1/3}}{3} \right)  ,$$
	which is only close to zero when the number of outliers is more than the specified regime.

\bibliography{refs_2017}
	
\end{document}